\crefname{lem}{Lemma}{Lemmas}
\crefname{defin}{Definition}{Definitions}
\crefname{thm}{Theorem}{Theorems}
\newtheorem{theorem}{Theorem} % continuous numbers
\newtheorem{lemma}{Lemma}% 
\newcommand{\T}{^\top}
\newcommand{\point}{z}
\newcommand{\pert}{\tilde{z}}
\newcommand{\points}{Z}
\newcommand{\perts}{\tilde{Z}}
\newcommand{\loss}{\ell}
\newcommand{\gradlossPoint}{\nabla_{\model}\loss(\point, \optmodel)}
\newcommand{\gradlossPert}{\nabla_{\model}\loss(\pert, \optmodel)}
\newcommand{\bigloss}{L}
\newcommand{\optmodel}{\theta^*}
\newcommand{\model}{\theta}
\newcommand{\Model}{\Theta}
\renewcommand{\paragraph}[1]{\vspace{2pt}\noindent\textbf{#1.}}
\begin{document}
%-------------------------------------------------------------------------------

%don't want date printed
\date{}

% make title bold and 14 pt font (Latex default is non-bold, 16 pt)
\title{\Large \bf Forgetting Any Data at Any Time: A Theoretically Certified Unlearning Framework for Vertical Federated Learning}

% for single author (just remove % characters)
\author{
{\rm Linian Wang}\\
Key Lab of High Confidence Software Technologies (Peking University),\\ Ministry of Education \& School of Computer Science, Peking University
\and
{\rm Leye Wang}\\
Key Lab of High Confidence Software Technologies (Peking University),\\ Ministry of Education \& School of Computer Science, Peking University
% copy the following lines to add more authors
% \and
% {\rm Name}\\
%Name Institution
} % end author
% \author{Anonymous Authors}

\maketitle

%-------------------------------------------------------------------------------
\begin{abstract}
%-------------------------------------------------------------------------------

Privacy concerns in machine learning are heightened by regulations such as the GDPR, which enforces the “right to be forgotten” (RTBF), driving the emergence of \textit{machine unlearning} as a critical research field. Vertical Federated Learning (VFL) enables collaborative model training by aggregating a sample’s features across distributed parties while preserving data privacy at each source. This paradigm has seen widespread adoption in healthcare, finance, and other privacy-sensitive domains. However, existing VFL systems lack robust mechanisms to comply with RTBF requirements, as unlearning methodologies for VFL remain underexplored. In this work, we introduce the first VFL framework with \textit{theoretically guaranteed unlearning capabilities}, enabling the removal of \textit{any data at any time}. Unlike prior approaches—which impose restrictive assumptions on model architectures or data types for removal—our solution is \textit{model- and data-agnostic}, offering universal compatibility. Moreover, our framework supports \textit{asynchronous unlearning}, eliminating the need for all parties to be simultaneously online during the forgetting process. These advancements address critical gaps in current VFL systems, ensuring compliance with RTBF while maintaining operational flexibility. We make all our implementations publicly available at https://github.com/wangln19/vertical-federated-unlearning.

\end{abstract}

%-------------------------------------------------------------------------------
\section{Introduction}
%-------------------------------------------------------------------------------

When personal data is used to train machine learning models, privacy protection can be considered from two key perspectives: first, ensuring that data does not leave the local domain during the training process to minimize the risk of data leakage~\cite{mcmahan2023communicationefficientlearningdeepnetworks}; second, providing users with the ability to revoke their consent and delete their personal data after the model has been trained~\cite{nguyen2024surveymachineunlearning}. To address the issue of data isolation and privacy risks during the training process, FL has emerged as a promising solution. In addition, regulations like the GDPR have proposed the ``right to be forgotten'' (RTBF), which necessitates the removal of personal data from trained models. This need has given rise to the concept of \textit{machine unlearning}~\cite{bourtoule2020machineunlearning}. 

Federated Unlearing (FU) combines these two orthogonal challenges, offering comprehensive privacy protection. FL can be categorized into two main types: Horizontal Federated Learning (HFL), where participants share the same features but have different samples, and Vertical Federated Learning (VFL), where participants share the same samples but have different features. VFL has found widespread application in cross-silo scenarios, such as those in banking and healthcare. However, Vertical Federated Unlearning (VFU) is still under-investigated.
In real-world scenarios, unlearning requests are frequently generated, with diverse objectives, such as client, feature, or sensitive information removal~\cite{nguyen2024surveymachineunlearning,warnecke2023machineunlearningfeatureslabels}. Current VFU methods only address specific data removal types (e.g., client~\cite{VFULR,VFUFR} or feature~\cite{VFUGBDT} unlearning) and are often tailored to particular VFL models (e.g., logistic regression~\cite{VFULR} or gradient boosting trees~\cite{VFUGBDT}). This underscores the critical need for a unified VFU framework capable of handling multiple data removal scenarios and models.

Another challenge in VFU lies in its computational and communication overheads. In VFL systems, each client maintains a sub-model trained on its local feature data. However, unlearning data from one party triggers cascading impacts across all parties’ sub-models.
Existing VFU methods address this by enforcing \textit{synchronous unlearning}~\cite{gu2025ferrarifederatedfeatureunlearning}: upon an unlearning request, all clients must coordinate in real-time to update their sub-models. Such reliance on synchronization is operationally impractical—network instability, client downtime, or resource constraints frequently disrupt client participation, thus breaking the unlearning process. 
%in existing VFU methods, upon receiving an unlearning request, all clients must be immediately assembled to synchronize and perform unlearning, which we call `synchronous unlearning'~\cite{gu2025ferrarifederatedfeatureunlearning}. In practical scenarios, unlearning requests can emerge continuously and unpredictably, requiring all VFL clients to remain online and ready to engage in unlearning at all times. However, clients may occasionally drop out due to various reasons like network issues, and thus synchronous unlearning is fragile in practice.
%This imposes substantial communication and computational costs. %Moreover, passive parties (i.e., the clients without training data labels), do not directly benefit from the VFL system. They can only ask for reward distribution from the active party (i.e., the client with task training labels) during the inference process. 
%The heavy burden of unlearning may discourage passive parties from participating in the unlearning process, as they lack sufficient incentives to go online and contribute.

In this paper, we propose a novel VFL framework with robust VFU capabilities of \textbf{forgetting any data at any time}. This framework is capable of handling diverse unlearning objectives in a unified manner and supports asynchronous unlearning method to alleviate the burden on clients. Built on the widely adopted aggregate VFL (AggVFL) systems \cite{yu2024surveyprivacythreatsdefenseaggvfl,liu2024vertical,fu2022label}, our framework unifies different unlearning targets from the perspective of confidence scores: the local models of clients in VFL (regardless of simple linear regression or complex deep neural networks) are treated as feature extractors, each generating class-specific confidence scores for the same sample based on different local features. The global model of the active party (the client who owns training labels \cite{liu2024vertical}) aggregates these outputs to maintain a matrix representing the training set, which then produces final predictions.

First, we convert VFU tasks for different objectives (client, feature, etc.) into the closed-form updates for the confidence matrix. The confidence matrix provides fine-grained information for unlearning without introducing extra privacy leakage risk compared to traditional VFL training processes~\cite{hardy2017privatefederatedlearningvertically,Yang2019ParallelDL}. Second, we propose to leverage the confidence matrix for backpropagation, thus enabling both model training and unlearning in a unified manner within our framework (Sec.~\ref{sec:method_diverse_unlearn}). For learning models with strongly convex \cite{boyd2004convex} loss functions (e.g., \textit{mean squared error} and \textit{logistic loss with L2 regularization}), we prove that our method offers certified unlearning with theoretical guarantees (Sec. ~\ref{sec:certified}).

To address the issue of excessive burdens placed on clients during unlearning, we also introduce an asynchronous unlearning method (Sec.~\ref{sec:method_asyn}). %In VFL, we need to balance the conflicting privacy and efficiency demands of different clients. 
In VFU, the client initiating the unlearning request demands prompt removal of data influences from all clients' sub-models, while the remaining clients require minimizing their unlearning burden and cannot be expected to keep staying online. Therefore, the forgetting times for different clients may vary, something traditional synchronous unlearning methods do not account for \cite{VFULR,VFUFR}. Our asynchronous unlearning approach allows only a subset of clients to participate in each unlearning epoch, significantly improving the unlearning practicality. %This enables clients with more urgent forgetting timelines to forget immediately, while others can remain offline. 
As shown in Figure \ref{stream}, when a client initiates an unlearning request, only the requesting client and the active party need to be online to process the request. Other clients may choose to participate in immediate unlearning or defer the process until later (e.g., until they submit their own unlearning requests).
% a subset of other clients participate in the unlearning process.  %until the next time constraint is met. 
Our experimental results demonstrate that asynchronous unlearning reduces computational and communication overhead while preserving model performance close to that of synchronous unlearning.

\begin{figure}[t]  
    \centering
     \includegraphics[width=\columnwidth]{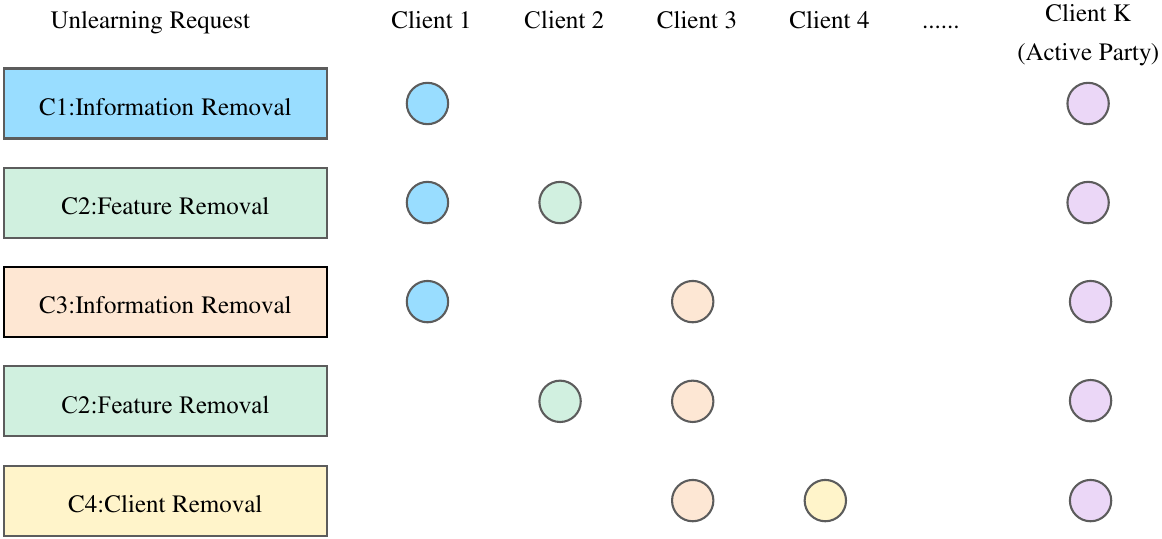} 
    \caption{What Asynchronous Unlearning Is. When Client 2 propose a feature removal request, only Client 1, 2 and K are online to unlearn.}
    \label{stream}
\end{figure}

% Furthermore, when all clients are online, our method is compatible with conventional batch processing~\cite{batchunlearning} and resolves the privacy leakage issue of storing unlearning information in batch processing. We quantify and store the impact of training set updates resulting from unlearning on the overall system using the confidence matrix, and we introduce the stability of the clients' update contribution factor. This allows us to implement asynchronous unlearning for clients who are not online. Our experiments show that asynchronous unlearning reduces computational and communication costs while ensuring that the performance of the unlearned model remains close to that of a model retrained from scratch.

This paper makes the following key contributions:

1. \textbf{A Unified Framework for Multiple VFU Targets with Theoretical Guarantees}:We present the first framework that supports diverse unlearning requests, including client, feature, sensitive information removal, etc., offering a flexible and scalable solution to address the RTBF requests in data regulation laws. For models with strongly convex losses, our method offers certified unlearning with theoretical guarantees.

2. \textbf{An Asynchronous VFU Method}: We propose the first asynchronous VFU system, designed to efficiently balance frequent unlearning requests from diverse clients while significantly lowering computational and communication overhead. 

3. \textbf{Extensive Experimental Validation}: Our extensive experimental results validate the effectiveness and applicability of the proposed VFU framework, demonstrating its practical viability in real-world scenarios.

%-------------------------------------------------------------------------------
\section{Related Work}
%-------------------------------------------------------------------------------

% \color{red}
% \subsubsection*{Vertical Federated Learning}

% Vertical Federated Learning (VFL) is ..... In general, there are two types of VFL, i.e., Aggregate VFL (AggVFL) and Split VFL (SplitVFL) \cite{liu2024vertical}. AggVFL is..... (representative work on AggVFL). SplitVFL is ..... (SplitVFL).

% In this work, we focus on AggVFL .....

\color{black}
Vertical Federated Learning (VFL) is a distributed machine learning paradigm that enables collaboration between multiple parties without sharing sensitive data. In general, there are two types of VFL, i.e., Aggregate VFL (AggVFL) and Split VFL (SplitVFL) \cite{liu2024vertical}. AggVFL employs a non-trainable global module (e.g., Sigmoid activation) to aggregate intermediate results through secure computation. Representative work on AggVFL includes SecureBoost \cite{cheng2021secureboostlosslessfederatedlearning} for federated gradient boosting trees and SFTL \cite{Liu2020SFTL} for  cryptographic defense. On the other hand, SplitVFL utilizes trainable global modules aligned with vertical split neural networks \cite{gupta2018distributedlearningdeepneural}. This type of VFL has been studied in various works, including privacy-preserving split learning \cite{vepakomma2018split} and communication-efficient CELU-VFL \cite{Fu2022CELU}. In this work, we focus on AggVFL due to its superior compatibility with applications requiring strict data isolation. 

\subsubsection*{Theoretically-Guaranteed Machine Unlearning}

Machine unlearning has attracted considerable attention in recent years. However, most existing research on machine unlearning lacks theoretical guarantees about its efficacy. There are currently two main approaches that provide formal guarantees of complete forgetting. The first approach, exemplified by SISA~\cite{bourtoule2020machineunlearning}, focuses on exact unlearning. These methods~\cite{ijcai2022ARCANE,Recommendation_Unlearning} partition both the model and the training set, performing retraining only on the sub-model relevant to the unlearning request, thereby ensuring complete unlearning. The second approach draws from differential privacy concepts~\cite{dp2011} and proves that the model post-unlearning is statistically indistinguishable from a retrained model, demonstrating the effect of approximate unlearning, also called \textit{certified unlearning}~\cite{Certified2020,neel2020descenttodeletegradientbasedmethodsmachine,warnecke2023machineunlearningfeatureslabels}.

While the first approach offers exact unlearning, it is inefficient in scenarios where unlearning requests are scatted in many sub-models~\cite{warnecke2023machineunlearningfeatureslabels}. %As such, it is unsuitable for our use case. 
Hence, our proposed framework aligns with the second approach, achieving certified approximate unlearning while ensuring efficiency.

\subsubsection*{Federated Unlearning}

Federated unlearning (FU) can be categorized based on the type of unlearning target~\cite{nguyen2024surveymachineunlearning}, including client removal, feature removal, and sample removal, among others. Most existing unlearning methods address specific unlearning requests and are not easily interoperable~\cite{zhao2025exploringfederatedunlearninganalysis}. However, in real-world applications, the targets for forgetting vary widely. This calls for a unified approach that can handle multiple forgetting requests simultaneously.

Existing research on FU primarily focuses on Horizontal FU (HFU), with few papers addressing Vertical FU (VFU). Deng \textit{et al}~\cite{VFULR} first introduce a VFU method for Logistic Regression (LR) models and client removal. Zhang \textit{et al}~\cite{VFUGBDT} propose VFU approach for Gradient Boosted Decision Trees, which can unlearn samples by recalculating split points, or unlearn features by changing the splitting feature. Wang \textit{et al}~\cite{VFUFR} introduce a model-agnostic VFU method that accelerates retraining, but it only addresses client removal. %Unlike models that are retrained from scratch, this does not guarantee complete unlearning. 
Gu \textit{et al}~\cite{gu2024fewshotlabelunlearningvertical} propose a VFU method to forget labels through gradient ascent techniques.

%\subsection{Asynchronous Federated Unlearning}

Recent research on FU has acknowledged the significance of asynchronous unlearning, but all on HFU tasks. Su \textit{et al}~\cite{knot} divide the HFL system into multiple small modules based on slicing, enabling asynchronous forgetting across different modules. This method assumes unlearning requests arrive during the system training phase. Gu \textit{et al}~\cite{gu2025ferrarifederatedfeatureunlearning} assume that all clients possess identical models, so that after a client performs a unlearning operation, the updated model can be broadcast to all clients, thus achieving global unlearning. %Both of these studies are based on the concept of HFU, where partial client forgetting does not require coordination with other clients.

VFU presents unique challenges in asynchronous unlearning. In VFU, each client holds only a partial model, and forgetting operations require global coordination among clients. This introduces distinct difficulties for asynchronous forgetting, as the model segments are distributed across clients.

Table \ref{table:relatedwork} provides a comparative summary of our approach and existing VFU methods, emphasizing the distinct contributions of our research.

\begin{table}[t]
\centering
\caption{Vertical Federated Unlearning Research Summary}
\resizebox{\columnwidth}{!}{%
\begin{tabular}{lcccc}
\toprule
\textbf{Paper} & \textbf{Unlearning Target} & \textbf{Asynchronous} & \textbf{Certified} & \textbf{Model-Agnostic} \\
% \midrule
% \textbf{VFU} & & & & \\
\midrule
\cite{VFULR} & Client & & & \\
\cite{VFUGBDT} & Feature\&Sample & & & \\
\cite{VFUFR} & Client & & & \checkmark\\
\cite{gu2024fewshotlabelunlearningvertical} & Label & & & \checkmark\\
 \textbf{Our Work}& Multiple& \checkmark & \checkmark & \checkmark\\
% \midrule
% \textbf{HFU} & & & & \\
% \midrule
% \cite{knot} & Sample  & \checkmark & & \checkmark\\
% \cite{gu2025ferrarifederatedfeatureunlearning} & Feature & \checkmark & & \checkmark\\
\bottomrule
\end{tabular}
}
\label{table:relatedwork}
\end{table}

%-------------------------------------------------------------------------------
\section{Method}
%-------------------------------------------------------------------------------
\subsection{Basic Idea}

Our approach is based on the following idea: in linear models $y=ax_1 + bx_2 + c$, forgetting $x_1$ corresponds to subtracting the corresponding term $ax_1$ from $y$ and then fine-tuning the model to remove $x_1$'s influence on the parameters $b$ and $c$. 

For more complex models in VFL, we can apply this idea at the output layer. In a classification problem, we maintain a confidence matrix in the active party of the VFL, where each row represents a sample and each column represents the confidence for a particular class. Upon receiving a forgetting request, we subtract the corresponding confidence value from the matrix, eliminating the forgotten target’s influence on the output. We then fine-tune the model using the updated output to remove the forgotten target's effect on the model parameters. 

For instance, in binary classification, the confidence vector associated with a sample is two-dimensional, where each element corresponds to one of the two classes. Consider two clients, \(i\) and \(j\), where Client \(i\) has a classification confidence vector of \([-1, 1]\) and Client \(j\) has a confidence vector of \([0, 3]\) for a specific sample. During aggregation, the global model combines these vectors, producing a summed confidence vector \([-1, 4]\), which is stored in a row of the confidence matrix. To remove Client \(i\)'s contribution on this sample, its confidence vector \([-1, 1]\) is subtracted from the corresponding row of the matrix. The updated confidence matrix is then used to compute the gradient with respect to the label. By updating Client \(j\)’s parameters using this revised matrix, the influence of Client \(i\) on Client \(j\)’s parameters is effectively removed.

% We suppose that there are two clients $i$ and $j$.
% Client $i$ has a classification confidence vector of \([-1, 1]\), while Client $j$ has a confidence vector of \([0, 3]\). When the global model aggregates these two confidence vectors, it results in \([-1, 4]\), which is stored in one row of the confidence matrix. To unlearn Client $i$'s contribution, the confidence vector \([-1, 1]\) is subtracted from the first row of the matrix. After this update, the modified confidence matrix is used to compute a new gradient with respect to the label, and upon updating Client $j$'s parameters, the influence of Client $i$ on Client $j$’s parameters is effectively eliminated.

\subsection{Compatibility with Diverse Requests}
\label{sec:method_diverse_unlearn}

% To ensure compatibility with different unlearning targets in VFL, we should design the system so that data does not leave the domain, while still allowing the forgotten information to be communicated to each client. The communicated information needs to be sufficiently granular to support various unlearning requests, such as feature removal or client removal. We find that the class confidence perfectly meets these requirements: unlearning tasks for different targets can be unified into a closed-form update of the confidence matrix. When the model parameters remain unchanged, updates to the confidence matrix can reflect changes in any data from a client's training set, without disclosing the original data. 

We want to design a VFL system that supports diverse unlearning requests (e.g., client removal, feature removal) while ensuring data remains within the local domain. This requires a mechanism to communicate forgotten information to each client in a granular yet privacy-preserving manner. 

% We start by using a widely-adopted aggregate VFL (AggVFL) structure, where the complex model is deployed across the clients~\cite{yu2024surveyprivacythreatsdefenseaggvfl}. Each client directly outputs a confidence vector, while the global model only aggregates these vectors. We have the client proposing the unlearning request calculate the difference in the confidence vectors of the training set before and after unlearning, and send this difference to the active party. Based on this, the active party updates the confidence matrix, effectively removing the influence of the forgotten data. Because any format of data changes in the training set can be represented by the confidence difference vector, unlearning requests for different targets can be uniformly transformed into updates to the confidence matrix, thereby standardizing the update process across various unlearning requests. Furthermore, this unlearning process transmits the same information as the training process in VFL, ensuring no additional privacy leakage. 

We leverage the confidence matrix as a unified representation for unlearning. When an unlearning request is made, the requesting client calculates the difference in confidence vectors before and after unlearning and sends this difference to the active party. The active party updates the confidence matrix to reflect the removal of the forgotten data. For Granularity, any data change (e.g., client removal, feature removal) can be represented as an update to the confidence matrix. For Privacy, this unlearning process transmits the same information as the training process in VFL, ensuring no additional privacy leakage.

Additionally, our framework makes a slight adjustment to the backpropagation steps, so the system starts the update process from the stored confidence matrix. This ensures that after the confidence matrix is updated due to unlearning, neural network training can proceed normally, avoiding reliance on raw data in the VFL system. The unlearning process is as follows, as illustrated in Figure \ref{overview}:

\begin{enumerate}
    \item The requesting client computes the confidence difference vector and sends it to the active party.
    \item The active party updates the confidence matrix and computes gradients based on the updated matrix and labels.
    \item Gradients are sent to the clients, who update their model parameters and compute new confidence differences.
    \item The active party aggregates all confidence differences and updates the confidence matrix for the next round.
\end{enumerate}

% Additionally, our framework makes a slight adjustment to the backpropagation steps, so the system starts the update process from the stored confidence matrix. This ensures that after the confidence matrix is updated due to unlearning, neural network training can proceed normally, avoiding reliance on raw data in the VFL system. The model training process is illustrated in Figure \ref{overview}:

% \begin{enumerate}
%     \item After a forgetting request is made, the requesting client computes the difference in confidence vectors (before and after unlearning) and sends this difference to the active party.
%     \item The active party uses the updated confidence matrix and labels to compute gradients, which are then sent to the clients.
%     \item The clients compute the gradient of the model parameters based on the received gradients, update the parameters, and compute the resulting confidence difference, which is sent back to the active party.
%     \item The active party aggregates all the confidence differences, sums them, and updates the confidence matrix to start the next round of updates.
% \end{enumerate}

\begin{figure*}[h]  
    
    \begin{minipage}[b]{0.43\textwidth}
        \centering
        \includegraphics[width=\textwidth]{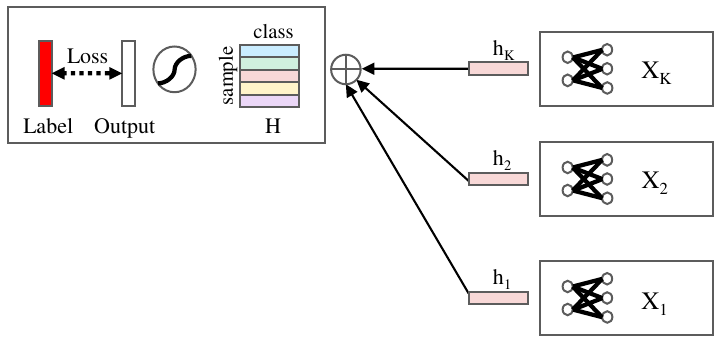} % 替换为您的图片路径
        % \caption{a1}
    \end{minipage}
    \hfill
    \begin{minipage}[b]{0.43\textwidth}
        \centering
        \includegraphics[width=\textwidth]{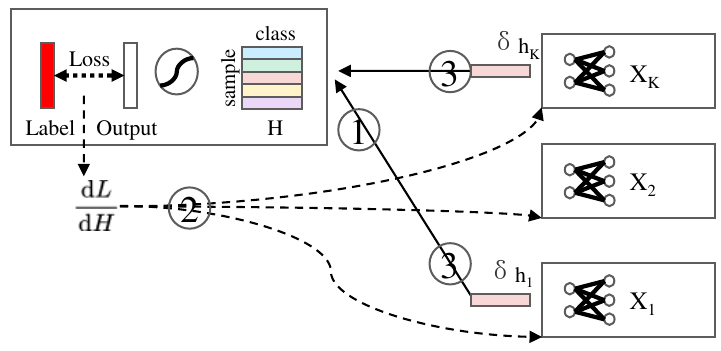} % 替换为您的图片路径
        % \caption{1}
    \end{minipage}

    \caption{Structure and the Unlearning Process of  Our Method.}
    \label{overview}
\end{figure*}

% The advantages of AggVFL structure are:
% \begin{enumerate}
%     \item \textbf{Privacy protection:} The local model outputs from clients represent high-level features (such as the class confidence in classification tasks), and the difference during unlearning is significantly distinct from the raw feature values, which better meets the privacy requirement of data staying within the domain. Additionally, the outputs from individual clients are aggregated, which can be enhanced using existing privacy-preserving techniques like obfuscation to prevent the server from inferring which client contributed to which update, further improving privacy protection.
%     \item \textbf{Mathematical rigor:} By aggregating the representations from each client in the global model using summation, the derivatives of the loss with respect to the overall confidence matrix and the derivatives of the loss with respect to each client's confidence output are mathematically equivalent and unaffected by the global model parameters. This property is useful when we later prove the certified forgetting.
% \end{enumerate}

We use the widely-adopted AggVFL structure, the advantages of the AggVFL structure include:

\begin{enumerate}
    \item \textbf{Privacy protection}: The local model outputs (e.g., class confidence) are high-level features distinct from raw data, ensuring data remains within the domain. Aggregation process can be further enhanced using existing privacy-preserving techniques like obfuscating individual contributions.
    \item \textbf{Mathematical rigor}: The summation-based aggregation of confidence outputs ensures that the derivatives of the loss with respect to the global confidence matrix and each client’s output are mathematically equivalent. This property is crucial for proving certified unlearning guarantees.
\end{enumerate}

\subsection{Achieving Asynchronous Unlearning}
\label{sec:method_asyn}

\subsubsection{Asynchronous Unlearning Setting}

% There are two types of time constraints to consider for VFL unlearning:
% \begin{enumerate}
%     \item \textbf{VFL system inference phase:} During inference, the VFL system requires data and computation from all clients. Therefore, all clients must be online before each inference, and all unlearning requests must be completed by these clients.
%     \item \textbf{Unlearning time requirement:} The unlearning request are proposed with a time requirement, i.e., all corresponding unlearning tasks for the clients must be completed before the deadline.\footnote{According to the GDPR and CCPA, while these regulations grant users the right to delete personal data, they do not specify a concrete time frame within which machine learning companies must complete the unlearning process or eliminate the impact of such data from their models.}
% \end{enumerate}

In VFU, two key time constraints must be considered:
\begin{enumerate}
    \item \textbf{VFL Inference Phase:} During inference, the VFL system relies on data and computations from all clients. Thus, all clients must be online, and any unlearning requests must be completed prior to inference.
    \item \textbf{Unlearning Deadline:} Unlearning requests are associated with a deadline, requiring all related tasks to be completed within the specified timeframe.\footnote{While GDPR and CCPA grant users the right to delete personal data, they do not mandate a specific timeframe for completing unlearning or mitigating its impact on models.}
\end{enumerate}

% In our asynchronous unlearning setup, within a specific time window during which the VFL system operates (between two consecutive time points when the time limit is reached), not all clients are required to remain online. When unlearning occurs, only the client proposing the unlearning request, the active party and a subset of other clients need to be online. In this asynchronous unlearning setting, data to be forgotten is directly stored in the client proposing the unlearning request, making this client more sensitive to the unlearning speed. The asynchronous approach allows this client to perform unlearning immediately, satisfying privacy requirements without requiring the other clients to remain online. On other clients, it allows the unlearning process to occur after a certain period, thus reducing the burden.

Our asynchronous unlearning framework operates within a defined time window between two consecutive inference phases. Unlike synchronous approaches, not all clients need to remain online. When an unlearning request is initiated, only the requesting client and the active party must be online. The requesting client stores the data to be forgotten, making it particularly sensitive to unlearning speed. This client can perform unlearning immediately, ensuring compliance with privacy regulations without requiring other clients to stay online. Other clients can defer their unlearning processes, reducing their operational burden.

% In the asynchronous unlearning setup, some clients may be offline and unable to participate in the update process. Thus, the difficulty lies in two main aspects:

The asynchronous setup introduces two main challenges: 

\begin{enumerate}
    \item \textbf{Quantifying Client Impact:} Accurately measuring how each client's model update contributes to the global model update.
    \item \textbf{Compensating for Offline Clients:} Addressing the absence of offline clients during unlearning by leveraging updates from online clients and stored information, ensuring no additional privacy leakage.
\end{enumerate}

% \begin{enumerate}
%     \item \textbf{Quantifying the impact of each client’s model on the global model update:} We must accurately determine how the update of each client's model contributes to the overall model update.
%     \item \textbf{Compensating the impact of offline clients:} When some clients are offline during the unlearning process, we need to use the updates from online clients and stored information to make up for the effect that the offline clients’ updates would have had. Additionally, during this process, the stored information must bring no extra privacy leakage.
% \end{enumerate}

The first challenge is effectively addressed by our confidence-based framework. During each update, the impact of each client's update is directly reflected in the change in its output confidence, which then influences the global model's output, and subsequently the gradient for the next update. The second challenge is how to calculate the change in confidence output resulting from model parameter updates for offline clients. We solve this by introducing the stability of the \textit{update contribution factor} in the operation of a VFL system.

% \begin{enumerate}
%     \item \textbf{Quantifying Client Impact:} Accurately measuring how each client's model update contributes to the global model update.
%     \item \textbf{Compensating for Offline Clients:} Addressing the absence of offline clients during unlearning by leveraging updates from online clients and stored information, ensuring no additional privacy leakage.
% \end{enumerate}

% Our confidence-based framework addresses the first challenge by directly linking each client's update impact to changes in its output confidence, which influences the global model's output and subsequent gradients. For the second challenge, we introduce the concept of the \textit{update contribution factor}, which quantifies the confidence output changes resulting from parameter updates for offline clients, ensuring system stability and privacy preservation.

\subsubsection{The Stability of The \textit{Update Contribution Factor}}

Each passive party computes the confidence score \( h_i \) of its local model. The active party aggregates these scores, computing \( H = \sum h_i \), and feeds \( H \) into the global model \( G \) to produce the final output \( p = G(H) \).

During backpropagation, as client parameters \( \theta_i \) are updated, their confidence scores \( h_i = f_i(\theta_i, X_i) \) also change. Let \( \delta(h_i) \) denote the change in \( h_i \). The global model's input change \( \delta(H) \) is a linear combination of these individual changes: \( \delta(H) = \sum \delta(h_i) \), which in turn affects the global output \( p \).

The linear relationship \( \delta(H) = \sum \delta(h_i) \) provides insight into each client's contribution to the global model update. The ratio \(R_i = \frac{\delta(h_i)}{\delta(H)} \), noted as termed the \textit{update contribution factor}, quantifies client \( i \)'s influence on the model's change during updates. Notably, in VFL systems using logistic regression, these factors remain stable across training epochs.

To illustrate, consider two clients, \( i \) and \( j \), with features \( i_1 \) to \( i_4 \) and \( j_1 \) to \( j_5 \), respectively. Their confidence scores are computed as:
\begin{align*}
 h_i &= f_i(\theta_i, x_{i1}, x_{i2}, x_{i3}, x_{i4}) \\
 h_j &= f_j(\theta_j, x_{j1}, x_{j2}, x_{j3}, x_{j4}, x_{j5}).
\end{align*}

Since the training set remains fixed, the representations output by each client depend solely on their parameters. We approximate changes in these representations using a first-order Taylor expansion. This approximation is justified in unlearning scenarios, where training set changes are minimal, and parameter updates before and after unlearning are not significantly different.

\begin{align*}
 \delta(h_i) &= \delta(\theta_{i1})\frac{\partial{h_i}}{\partial{\theta_{i1}}} + \delta(\theta_{i2})\frac{\partial{h_i}}{\partial{\theta_{i2}}} + \delta(\theta_{i3})\frac{\partial{h_i}}{\partial{\theta_{i3}}} + \delta(\theta_{i4})\frac{\partial{h_i}}{\partial{\theta_{i4}}} \\
 \delta(h_j) &= \delta(\theta_{j1})\frac{\partial{h_j}}{\partial{\theta_{j1}}} + \cdots + \delta(\theta_{j5})\frac{\partial{h_j}}{\partial{\theta_{j5}}} 
\end{align*}

Subsequently, the change in the parameters can be directly derived from the backpropagation update rule, denoted that $\eta$ is the learning rate. By substituting these changes into the update rule, we observe that the ratio of the changes in the representations of client \( i \) and client \( j \) depends solely on the training data in the regression models. Since the training data remains unchanged between epochs, this ratio remains stable over time.

\begin{align*}
 \delta(\theta_{i1}) &= -\eta\frac{\partial{L}}{\partial{\theta_{i1}}} = -\eta\frac{\partial{L}}{\partial{H}}\frac{\partial{H}}{\partial{h_i}}\frac{\partial{h_i}}{\partial{\theta_{i1}}}\\
 \delta(h_i) &= -[(\frac{\partial{h_i}}{\partial{\theta_{i1}}})^2 + \cdots + (\frac{\partial{h_i}}{\partial{\theta_{i4}}})^2]\eta\frac{\partial{L}}{\partial{H}}\frac{\partial{H}}{\partial{h_i}} \\
  &= -[x_{i1}^2 + \cdots + x_{i4}^2]\eta\frac{\partial{L}}{\partial{H}}\\
\delta(h_j) &= -[(\frac{\partial{h_j}}{\partial{\theta_{j1}}})^2 + \cdots + (\frac{\partial{h_j}}{\partial{\theta_{j5}}})^2]\eta\frac{\partial{L}}{\partial{H}}\frac{\partial{H}}{\partial{h_j}} \\
&= -[x_{j1}^2 + \cdots + x_{j5}^2]\eta\frac{\partial{L}}{\partial{H}}\\
\frac{\delta(h_i)}{\delta(h_j)} &= \frac{[x_{i1}^2 + \cdots + x_{i4}^2]}{[x_{j1}^2 + \cdots + x_{j5}^2]}\\
R_i &= \frac{\delta(h_i)}{\delta(H)} = \frac{[x_{i1}^2 + \cdots + x_{i4}^2]}{[x_{i1}^2 + \cdots + x_{i4}^2] + [x_{j1}^2 + \cdots + x_{j5}^2]}
\end{align*}

From the above derivation, it is clear that for a Multilayer Perceptron (MLP), the contribution coefficients of different clients to the update of \(z\) may vary across epochs. However, under the unlearning setting, if the MLP has a limited number of layers, this variation is negligible. We provide a proof of this observation in Appendix A.

Since the \textit{update contribution factor} remains stable, we can leverage the confidence updates from online clients to estimate those of the offline client K. For the global model update, we have: 

\begin{align*}
 \delta(h_{K}) &= \frac{R_K}{\sum_{i\in\{online\}}R_i}\sum_{i\in\{online\}}\delta(h_{i})\\
  \delta(H) &= \frac{1}{\sum_{i\in\{online\}}R_i}\sum_{i\in\{online\}}\delta(h_{i})
\end{align*}

The \textit{update contribution factor} must be computed and stored by the active party prior to receiving unlearning requests. This can be readily accomplished at the conclusion of the VFL system's training phase. 

\subsection{Achieving Certified Unlearning}
\label{sec:certified}
%-------------------------------------------------------------------------------

\subsubsection{Definition}

%The widely accepted definition of certified unlearning is analogous to differential privacy. 
Consider a learning algorithm $\mathcal{A}$ that, when trained on a dataset $D$, generates a model $\model \in \Model$. An unlearning method $\mathcal{U}$ that transforms a model $\model$ into a corrected version $\model_\mathcal{U} = \mathcal{U}(\model, D, D^\prime)$. $D^\prime$ contains the perturbations $\perts$ needed for unlearning, while the corresponding original data is $\points$. The concept of \emph{$\epsilon$-certified unlearning} means that it is hard to distinguish models after unlearned $\mathcal{U}$ from the set of possible retrained models $\mathcal{A}(D^\prime)$~\cite{Certified2020,warnecke2023machineunlearningfeatureslabels}.

\newtheorem{defin}{Definition}
\begin{defin}
	\label{def-ecr}
	Given some $\epsilon >0$ and a learning algorithm
	$\mathcal{A}$, an unlearning method~$\mathcal{U}$ is
	$\epsilon$-certified if
	\begin{equation*}
		e^{-\epsilon} \leq
		\frac{P\Big(\mathcal{U}\big(\mathcal{A}(D),D,D^\prime\big)
			\in\mathcal{T}\Big)}
		{P\big(\mathcal{A}(D^\prime)\in\mathcal{T}\big)}\leq e^{\epsilon}
		\label{eqn:ecr}
	\end{equation*}
	holds for all
	$\mathcal{T}\subset \Model, D, \text{and } D^\prime$.
\end{defin}
The \emph{$(\epsilon, \delta)$-certified unlearning} is similarly defined.

\begin{defin}
	\label{def-edcr}
	Under the assumptions of \cref{def-ecr}, an
	unlearning method $\mathcal{U}$ is
	$(\epsilon,\delta)$-certified if
	\begin{gather*}
		P\Big(\mathcal{U}\big(\mathcal{A}(D),D,D^\prime\big)
		\in\mathcal{T}\Big)
		\leq e^{\epsilon}P\big(\mathcal{A}(D^\prime)\in\mathcal{T}\big)+
		\delta \\
		\text{and}\\
		P\big(\mathcal{A}(D^\prime)\in\mathcal{T}\big) \leq e^{\epsilon}
		P\Big(\mathcal{U}\big(\mathcal{A}(D),D,D^\prime\big)\in
		\mathcal{T}\Big)+\delta
	\end{gather*}
	hold for all
	$\mathcal{T}\subset \Model, D, \text{and } D^\prime$.
\end{defin}

\subsubsection{Design for Certified Unlearning}

To achieve rigorously defined certified unlearning, our method incorporates two critical components:
\begin{enumerate}
    \item \textbf{Noise Injection during Training}: We add Gaussian noise $b \sim \mathcal{N}(0, \sigma^2 I)$ to the gradients during model training and unlearning, ensuring bounded parameter sensitivity.
    \item \textbf{First-Round Gradient Ascent}: During the initial unlearning update, we simultaneously perform:
    \begin{align*}
        \theta_{\text{unlearn}} = \theta^* -\tau \Big(\underbrace{\nabla\bigloss(\theta^* ; D^\prime)}_{\text{descent on new data}} - \underbrace{\nabla\bigloss(\theta^* ; D)}_{\text{ascent on old data}}\Big)
        %- \tau\bigg(\underbrace{\sum_{z \in \perts}\nabla\loss(z,\theta^*)}_{\text{descent on new data}} - \underbrace{\sum_{z \in \points}\nabla\loss(z,\theta^*)}_{\text{ascent on old data}}\bigg)
    \end{align*}
\end{enumerate}

\begin{theorem}[Certified Unlearning Guarantee]
\label{thm:main}
Assume the loss $\loss(\theta; z)$ is convex, $\gamma$-smooth with $L_2$ regularization $\frac{\lambda}{2}\|\theta\|_2^2$. For any data modification $(\points, \perts)$, our method achieves $(\epsilon, \delta)$-certified unlearning with
	$\delta=1.5e^{-c^2/2}$ when:
\begin{itemize}
\item Training noise $p\sim \mathcal{N}(0, c(1+\tau\gamma_z n)\gamma_zM\vert\points\vert/\epsilon)^d$ for some $c > 0$
    % $\sigma = \frac{c\sqrt{|\points|}}{\epsilon}$ for some $c > 0$
%     \item Unlearning rate $\tau \leq \frac{1}{\gamma n}$
\item Assume that $\Vert x_i\Vert_2 \leq 1$ for all data points and the
%\DIFdelbegin \DIFdel{loss }\DIFdelend
%\DIFaddbegin \DIFadd{gradient }\DIFaddend 
gradient
$\nabla\loss(z,\model)$ is $\gamma_z$-Lipschitz. Further let $\perts$ change the features $j,\dots,j+F$ by
magnitudes at most $m_j,\dots,m_{j+F}$, and $M=\sum_{j=1}^{F} m_j$ 
\end{itemize}

% The unlearned model satisfies:
% \begin{align*}
%     \mathbb{P}\left[\|\theta_{\text{unlearn}} - \mathcal{A}(D')\|_2 \leq \epsilon\right] \geq 1 - \delta
% \end{align*}
% where $\delta = 1.5e^{-c^2/2}$.
\end{theorem}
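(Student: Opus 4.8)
The plan is to reduce the statement to the certified-removal framework of \cite{Certified2020,warnecke2023machineunlearningfeatureslabels}: if I can bound the \emph{gradient residual} of $\model_\mathcal{U}$ on the modified data $D^\prime$, then the $(\epsilon,\delta)$ guarantee follows from a Gaussian-mechanism argument applied to the noise injected during training. I would model noise injection as objective perturbation (equivalently, the stationary point of the noised gradient dynamics), so that $\optmodel$ exactly minimizes $\bigloss(\model;D)+p\T\model$ and obeys $\nabla\bigloss(\optmodel;D)=-p$, while the retrained reference $\mathcal{A}(D^\prime)$ minimizes $\bigloss(\model;D^\prime)+p^\prime\T\model$ for an independent draw $p^\prime$. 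Strong convexity from the $\tfrac{\lambda}{2}\lVert\model\rVert_2^2$ regularizer makes each such minimizer unique, so both the unlearned and the retrained models are well-defined deterministic functions of their Gaussian noise, which is exactly the setting the mechanism needs.

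First I would compute the residual $u=\nabla\bigloss(\model_\mathcal{U};D^\prime)+p$, which quantifies the failure of $\model_\mathcal{U}$ to exactly minimize the $D^\prime$-objective under the \emph{same} noise $p$. Setting $\Delta=\nabla\bigloss(\optmodel;D^\prime)-\nabla\bigloss(\optmodel;D)=\sum_{\point\in\points}\bigl(\nabla\loss(\pert,\optmodel)-\nabla\loss(\point,\optmodel)\bigr)$, where the unchanged samples and the regularizer cancel, the first-round update reads exactly $\model_\mathcal{U}=\optmodel-\tau\Delta$. Expanding $\nabla\bigloss(\cdot;D^\prime)$ around $\optmodel$ with the integral form of the remainder gives $\nabla\bigloss(\model_\mathcal{U};D^\prime)=\nabla\bigloss(\optmodel;D^\prime)-\tilde H\,\tau\Delta$ with $\tilde H=\int_0^1\nabla^2\bigloss(\optmodel-s\tau\Delta;D^\prime)\,ds$. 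Substituting $\nabla\bigloss(\optmodel;D^\prime)=-p+\Delta$, the injected noise cancels and I am left with the noise-independent identity $u=(I-\tau\tilde H)\Delta$. This cancellation, enabled precisely by performing ascent on $D$ and descent on $D^\prime$ from the same $\optmodel$, is the conceptual heart of the bound.

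Next I would bound $\lVert u\rVert$ by the sensitivity that fixes the noise scale. Because $\nabla\loss(\point,\optmodel)$ is $\gamma_z$-Lipschitz in the data and $\perts$ changes features $j,\dots,j+F$ by magnitudes $m_j,\dots,m_{j+F}$ with $\lVert\pert-\point\rVert\le\sum_j m_j=M$, each summand satisfies $\lVert\nabla\loss(\pert,\optmodel)-\nabla\loss(\point,\optmodel)\rVert\le\gamma_z M$, hence $\lVert\Delta\rVert\le\gamma_z M\lvert\points\rvert$. For the operator norm I would use $\gamma$-smoothness to bound the $n$ per-sample Hessians plus the regularizer, $\lVert\tilde H\rVert\le n\gamma+\lambda$, so $\lVert I-\tau\tilde H\rVert\le 1+\tau(n\gamma+\lambda)$ by the triangle inequality; absorbing the $\lambda$ term and writing the smoothness and data-Lipschitz constants uniformly as $\gamma_z$ per the theorem's notation yields the factor $1+\tau\gamma_z n$. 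Multiplying the two estimates gives $\lVert u\rVert\le(1+\tau\gamma_z n)\gamma_z M\lvert\points\rvert$, exactly the quantity scaling the standard deviation of $p$.

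Finally I would convert this into the certified guarantee. Since $\model_\mathcal{U}$ exactly minimizes $\bigloss(\model;D^\prime)+(p-u)\T\model$, it is distributed as a retrained model whose effective noise differs from the actual draw by $u$; a change of variables on the Gaussian density then bounds the density ratio in \cref{def-edcr} by $e^{\epsilon}$ outside a tail event, and choosing the standard deviation equal to $c(1+\tau\gamma_z n)\gamma_z M\lvert\points\rvert/\epsilon$ makes that tail have mass $\delta=1.5e^{-c^2/2}$. The VFL-specific step that licenses this centralized analysis is that AggVFL aggregates client confidences additively, so the gradient of the loss with respect to the aggregate equals its gradient with respect to each client output, and the distributed update behaves as one regularized empirical risk minimization. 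I expect the main obstacle to be the final change of variables rather than the residual bound: for a general convex smooth loss $\tilde H$, and therefore $u$, depends on $p$ through $\optmodel$, so the map $p\mapsto p-u$ carries a nontrivial Jacobian $I-\nabla u$ that must be controlled (for purely quadratic losses $\tilde H$ is constant and this difficulty disappears). Subordinate care is also needed to make the operator-norm estimate on $I-\tau\tilde H$ reproduce the stated factor $1+\tau\gamma_z n$ rather than a contraction bound.
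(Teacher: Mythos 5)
Your overall strategy is the same as the paper's: reduce the theorem to a bound on the gradient residual $\Vert\nabla\bigloss(\model_\mathcal{U};D^\prime)\Vert_2$ and then invoke the Gaussian-mechanism argument for noise-perturbed objectives. The difference is that you re-derive the residual bound from first principles (Taylor expansion with integral remainder, $u=(I-\tau\tilde H)\Delta$, data-Lipschitzness giving $\Vert\Delta\Vert\le\gamma_z M\vert\points\vert$), whereas the paper simply observes that the first-round ascent-on-$D$/descent-on-$D^\prime$ step is algebraically identical to the update of Eq.~\eqref{eqn:update1} and cites Lemma~\ref{thm:thm1} and Lemma~\ref{thm:thm3} of \cite{warnecke2023machineunlearningfeatureslabels} as black boxes. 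Your derivation is more informative, and your closing worry about the Jacobian of $p\mapsto p-u$ is a real subtlety that the paper never surfaces (it is absorbed into the cited lemma); likewise you correctly notice that the operator-norm estimate naturally produces $1+\tau(n\gamma+\lambda)$ rather than the $1+\tau\gamma_z n$ appearing in the theorem statement — the paper itself is inconsistent here, writing $(1+\tau\gamma n)$ in Lemma~\ref{thm:thm1} but $(1+\tau\gamma_z n)$ in the noise scale.

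There is, however, one genuine gap: you certify only the single update $\model_\mathcal{U}=\optmodel-\tau\Delta$, but ``our method'' as defined in the paper does not stop there. After the first-round ascent/descent step it runs multiple further fine-tuning epochs on the updated confidence matrix (up to 50 in the experiments), with early stopping on the training loss, and the model whose distribution must be compared to $\mathcal{A}(D^\prime)$ is the one produced \emph{after} those epochs. The noise calibration $c(1+\tau\gamma_z n)\gamma_z M\vert\points\vert/\epsilon$ is only valid if the gradient residual of that final model is still bounded by $(1+\tau\gamma_z n)\gamma_z M\vert\points\vert$. The paper supplies the missing link by arguing that, because the loss on $D^\prime$ is strongly convex and monotonically decreasing under early stopping, $\Vert\nabla\bigloss(\theta_{t+1})\Vert^2\le\Vert\nabla\bigloss(\theta_t)\Vert^2$, so the first-round bound persists. (That argument is itself loose — decreasing loss plus $\lambda$-strong convexity and $\gamma$-smoothness only yields $\Vert\nabla\bigloss(\theta_{t+1})\Vert^2\le\frac{\gamma}{\lambda}\Vert\nabla\bigloss(\theta_t)\Vert^2$ in general — but some argument of this kind is indispensable, and your proposal omits it entirely.) To complete your proof you would need to either add this monotonicity step or restrict the claim to the one-step unlearning update.
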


\begin{proof}[Proof Sketch]
The certification follows three key arguments:
\begin{enumerate}
    \item Gradient residual bound using Lipschitz continuity and our update rule
    \item Relating the gradient residual to the sensitivity of perturbation vector $b$
b using the $L_2$-regularized strong convexity.
    
    \item Applying Gaussian noise over models to yield $(\epsilon, \delta)$-guarantees.
\end{enumerate}
Full proof appears in Appendix~\ref{app:proof}.
\end{proof}

%-------------------------------------------------------------------------------
\section{Evaluation}
%-------------------------------------------------------------------------------

%In the experimental section, we evaluate the performance of our method in typical unlearning scenarios in VFL. 
While our method addresses asynchronous unlearning—a capability absent in existing approaches—our evaluation is structured in two stages.
Specifically, we aim to verify:% two key points: 
%\begin{enumerate}

\textbf{Synchronous Unlearning}. Our method performs comparably, or even better, than existing unlearning methods \cite{VFULR,VFUFR} when all clients are online.

\textbf{Asynchronous Unlearning}. During asynchronous unlearning, our method maintains comparable performance to synchronous unlearning.
    %When performing asynchronous unlearning, our method does not significantly underperform compared to the case where all clients are online.
%\end{enumerate}

\subsection{Experimental Setup}

\subsubsection*{Unlearning Scenarios}

%In the context of VFL, where different clients share samples but have different features, we identify three primary forgetting scenarios commonly encountered:
There are three common unlearning scenarios in VFL:

    1. \textbf{Client Removal:} A client exits the VFL system, and its data's influence must be completely removed. %Lemma~\ref{lemma1}, 
    This is equivalent to retraining the model by setting all data from the client to zero \cite{warnecke2023machineunlearningfeatureslabels}.
    
    2. \textbf{Feature Removal:} A certain feature (or feature set) from a client, which involves sensitive user information, is no longer available due to policy changes or other reasons. The influence of this feature must be removed from the system. %According to Lemma~\ref{lemma1}, 
    This is equivalent to retraining the model with the feature set to zero \cite{warnecke2023machineunlearningfeatureslabels}.
    
    3. \textbf{Sensitive Information Removal:} A feature from a client may be sensitive to a subset of users, prompting requests for removal (e.g., some users modify the visibility of the age field from `public' to `private'). %The influence of this sensitive information needs to be erased. 
    This can be achieved by retraining the model where the sensitive information is replaced by the mean value of that feature across all samples \cite{warnecke2023machineunlearningfeatureslabels}.

\begin{figure}[t]  
    \centering
     \includegraphics[width=0.4\columnwidth]{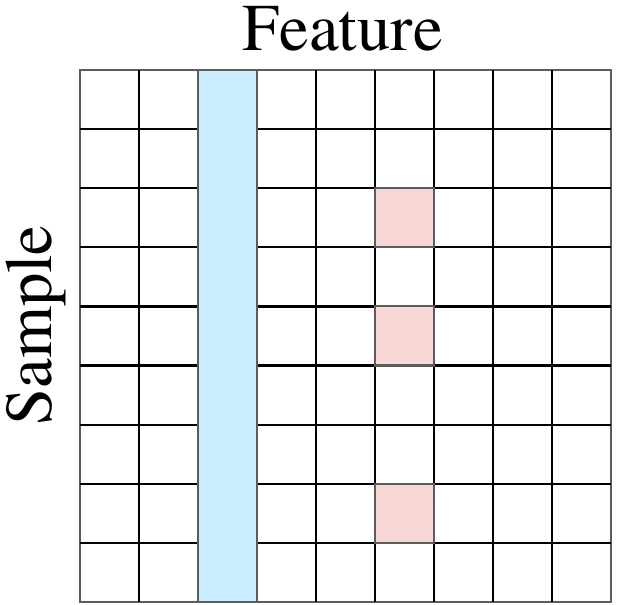} 
    \caption{Suppose the whole matrix is the data from one client. In \textit{client removal}, the whole matrix is removed. In \textit{feature removal}, the feature to be removed is marked with blue. In \textit{sensitive information removal}, the sensitive information to be removed is marked with red.}
    \label{scenary}
    \vspace{-1em}
\end{figure}

%These three scenarios are progressively more specific: a feature represents a subset of data from a client, and sensitive information is a subset of a feature's data. Their visual representations are shown in Figure ~\ref{scenary}.
The relationships between these three scenarios are illustrated in Figure~\ref{scenary}. As sensitive information may pertain to a single feature of a single sample (the smallest data unit), our method can easily extend to tasks such as \textit{sample removal} (forgetting all data from a single sample) and \textit{class removal} (forgetting all samples of a particular class). %While we focus on the three most common tasks in VFL, our method is also applicable to other forgetting scenarios.

% \begin{lemma}
% \cite{warnecke2023machineunlearningfeatureslabels}
% 	\label{lemma1}
% 	Consider a model $\optmodel$ trained on a dataset $D \subset \mathbb{R}^d$. If we remove a set of features $\feats$ from $D$ and retrain the model, the resulting optimal model is $\optmodel_{-\feats}$ with reduced input dimensions. In contrast, if we set the values of $\feats$ to zero and retrain, we obtain a model $\optmodel_{\feats=0}$ with the same input dimensions as $\optmodel$. For learning models processing inputs $x$ using linear
% 	transformations of the form $\model^{T} x$, we have
% 	$\optmodel_{-\feats} \equiv \optmodel_{\feats=0}$.
% \end{lemma}

\subsubsection*{Performance Measures}

Following \cite{VFUFR}, we conduct evaluation from two aspects,
%In machine unlearning, the success of unlearning is typically evaluated from three aspects, \textit{efficacy}, 
\textit{fedility} and \textit{efficiency}:

%\begin{enumerate}
    % \item \textbf{Efficacy} measures the effectiveness of removing the influence of the forgotten target on the model. The general criterion is that the model after unlearning should closely resemble the retrained model. Specific metrics include:
    % \begin{itemize}
    %     \item The \textit{gradient residual} of the model after unlearning. A value closer to zero indicates that the unlearned model is more similar to the retrained model. \footnote{The concept of gradient residual is originally proposed for convex functions. For models like MLP with non-convex loss functions, the region around the optimal point can be approximated as convex~\cite{choromanska2015losssurfacesmultilayernetworks}. During the unlearning process, the model parameter changes are usually limited, allowing the loss function to be approximated as convex.}

    %     \item The \textit{Kullback-Leibler divergence} between the outputs of the unlearned model and the retrained model on the test set. A value closer to zero indicates greater similarity between the two models.
    % \end{itemize}
    
    \textbf{Fidelity} assesses whether unlearning affects the utility of the model. The performance of the unlearned model should remain close to that of the original model, and the unlearning process should not significantly impact the model's practical utility. The metrics include \textit{Accuracy} and \textit{AUC} 
    %and \textit{loss} 
    of the unlearned model on the test set.
    
    \textbf{Efficiency} concerns the time taken for unlearning compared to retraining. The unlearning process should be as fast as possible. % relative to retraining. 
    %Additionally, asynchronous unlearning should have lower computational and communication costs compared to synchronous unlearning since fewer clients are required to participate in the unlearning process. 
    We compute efficiency using the total number of epochs and communication cost. %communication cost per epoch.
%\end{enumerate}

%\subsubsection*{Comparison with Baselines and Ablation Study}

\subsubsection*{Baselines} %and Our Method Variants}

We compare our method with three existing unlearning approaches.
%across the dimensions of %efficacy, 
%fidelity and efficiency, we selected three baselines.

    \textbf{VFULR~\cite{VFULR}:} VFULR addresses the client removal problem in the VFL logistic regression system. The method directly subtracts the contributions of the client to be forgotten from the sum of linear terms, followed by a single update. %A regularization term is added to ensure that the influence of any individual client does not dominate.
    
    \textbf{VFUFR~\cite{VFUFR}:} VFUFR addresses the client removal issue in general VFL systems, not limited to logistic regression. The method stores the original bottom model in each client to accelerate the retraining process, and it uses an enhanced optimizer for further speedup.
    
    \textbf{Retrain:} This method retrains the model from scratch on the new training dataset after data removal. It generally yields the best performance and serves as an upper bound for unlearning methods, though it is computationally expensive~\cite{nguyen2024surveymachineunlearning}.

Notably, VFULR and VFUFR are designed for VFL client-removal tasks on synchronous unlearning; we thus limit our comparison with these two baselines on this scenario.

%\textbf{Ablation study.} 
% \noindent \textbf{Our Method Variants.}
% To achieve certified unlearning, our method requires two components: (1) adding noise during the training of the original model, and (2) performing gradient ascent on the original data during the first update. We conduct an ablation study to evaluate the impact of each component, leading to the following method variants:
% %The experiments are divided into four groups based on whether these two conditions are met:
% \begin{itemize}
%     \item \textbf{Ours}: both (1) and (2) are satisfied.
%     \item \textbf{Ours without noise (Ours w.o. noise)}: (1) is not satisfied, but (2) is.
%     \item \textbf{Ours without gradient ascending (Ours w.o. ascend)}: (1) is satisfied, but (2) is not.
%     \item \textbf{Ours without gradient ascending without noise (Ours w.o. ascend\&noise)}: neither (1) nor (2) is satisfied.
% \end{itemize}

\subsection{Synchronous Client Removal}

In existing VFL literature, client removal is a commonly studied unlearning scenario. In this context, we address the first key question: whether our method performs comparably to or even better than existing unlearning approaches when all clients are online (i.e., \textit{synchronous unlearning}). 

To validate this, we select five datasets: \textit{Adult Income}~\cite{adult_2}, \textit{Credit}~\cite{default_of_credit_card_clients_350}, \textit{Diabetes}~\cite{diabetes_34}, \textit{Nursery}~\cite{nursery_76}, and \textit{Malware}~\cite{drebin}. %and News20-S5~\cite{news20}. 
Among these, \textit{Adult Income}, \textit{Credit}, \textit{Diabetes} and \textit{Nursery} are tabular datasets, while \textit{Malware} is a text-based dataset. For the text dataset, we extract bag-of-words features~\cite{zou2024vflairresearchlibrarybenchmark}. %In a VFL system, samples are shared across clients, but features are not. It is uncommon to partition an image dataset into separate features distributed across different clients, making VFL unsuitable for image data. Consequently, we did not include image datasets in our evaluation. 
We split 80\% of the data for training and reserve 20\% for testing. Table~\ref{tab:datasets} overviews the datasets and their vertical federated setting. Without loss of generality, we set one client for removal; the number of features in the removed client is in Table~\ref{tab:datasets}.

\begin{table}[t]
	\begin{center}    
		\caption{Datasets for unlearning.}
		%\vspace{-5pt}
		\label{tab:datasets}
		%\tablesize
        \resizebox{\columnwidth}{!}{%
		\begin{tabular}{lcccccc}

			\toprule
			& \textbf{Adult Income}  & \textbf{Credit}&\textbf{Diabetes} 
& \textbf{Nursery} & \textbf{Malware} %& \textbf{News20-S5}
\\
			\midrule
			\#Samples & 48842  & 30000&768 
& 10950& 49226%& 5000
\\
			\#Features  & 108& 23&8 & 19& 2081 %& 87748
            \\
			\midrule
 \#Clients& 16& 4& 4& 4& 16&%2
 \\
 %Active Party& 15& 3& 3& 3& 15&1\\
 %Client Removed& 0& 0& 0& 0& 0&0\\
 \#Feat. in Removed Client & 27& 6& 2& 7& 520%&21937
 \\
  \bottomrule
		\end{tabular}
        }
	\end{center}
\end{table}

For each dataset, we conduct experiments using both the LR and MLP models, resulting in a total of 6×2=12 experimental setups. Since VFULR is designed only for the LR model, there is no VFULR results in the MLP model experiments. %For detailed experimental results, please refer to Tables \ref{tab:b}. %In this scenario, our approach assumes that all clients are online. 
%Some results are listed in Appendix D.

%\textbf{Efficacy}: The smaller the values of the gradient residual and KL divergence metrics, the closer the unlearned model is to the retrained model. In the "Retrain" column of the tables, the KL divergence is always 0 because the comparison baseline for KL divergence is itself. However, the gradient residual is not always zero, as in practice, early stopping is used during training, which may prevent the model from reaching the theoretical optimal point of the loss function. The gradient residual values for the retrained models are small and can be considered negligible errors.  When aggregating results across all 6 datasets, our method exhibits performance that is very similar to VFUFR, and both outperform VFULR by a significant margin. Notably, for the News20-S5 dataset, the gradient residual of our method is even smaller than that of the retrained model. This can be attributed to the relatively large feature-to-sample ratio in that datasets, which may result in smaller gradients near the optimal point of the loss function, making it difficult to converge exactly to the theoretical minimum. The three variants of our method in the ablation study show minimal deviation from the original approach.

\textbf{Fidelity:} Table \ref{tab:f-lr} and \ref{tab:f-mlp} show the fidelity results for both LR and MLP model unlearning. Higher accuracy and AUC indicate better performance. As expected, Retrain performs the best in most cases, while it generally needs much more epochs to converge (we will show in the efficiency results). Our method performs similarly to VFUFR and outperforms VFULR. In datasets with a higher number of features, the performance difference between retraining and various unlearning methods is not significant. In contrast, in datasets like Diabetes, which have fewer features, client removal results in a noticeable drop in model performance. This may be due to the fact that, in datasets with many features, the clients removed during unlearning likely do not contain high-information features. %The three variants in our ablation study exhibit performance that is close to that of the original method.

\textbf{Efficiency:} Table~\ref{tab:e-lr} and \ref{tab:e-mlp} show the efficiency results. We measure training time in terms of epochs. VFULR performs only one update, thus being the most efficient; but its fidelity is not satisfactory as aforementioned.
Other methods can perform multiple updates. We employed an early stopping mechanism, limiting retraining to a maximum of 400 epochs and other methods to a maximum of 50 epochs, since  the retraining method typically requires more epochs to converge (the hyper-parameter choice will be discussed in Section ~\ref{sec:choice}). The results show that the retraining method typically requires hundreds of epochs, significantly higher than other methods. Our method has a similar training efficiency as VFUFR.
We also evaluate communication cost per epoch. %, which reflects the burden of data transfer in a VFL system. 
All the methods transfer similar data amount in each epoch.
%Without the initial gradient ascent step, our methods (Ours w.o. ascend, Ours w.o. ascend\&noise) and the retraining and VFUFR methods have similar communication costs per epoch. When the first round of gradient ascent is performed, our methods (Ours, Ours w.o. noise) incur a slightly higher communication cost per epoch, though still outperforming VFULR.

In summary, for synchronous VFL unlearning in client-removal scenarios, our method achieves performance comparable to the state-of-the-art VFUFR framework. Meanwhile, our approach surpasses VFUFR by offering unique advantages: greater flexibility in handling diverse unlearning scenarios (e.g., feature or sensitive data removal), the ability to perform asynchronous unlearning when not all clients are online, and theoretically grounded certified unlearning guarantees. These additional capabilities will be evaluated subsequently.

\begin{table*}[t]
\footnotesize
	\begin{center}    
		\caption{Fidelity of VFL Client Removal for LR models on Synchronous Unlearning.}
		%\vspace{-5pt}
		\label{tab:f-lr}
		%\tablesize
		\begin{tabular}{lcccccccccccc}

			\toprule
            &\multicolumn{2}{c}{\textbf{Adult Income}} &\multicolumn{2}{c}{\textbf{Credit}} &\multicolumn{2}{c}{\textbf{Diabetes}} &\multicolumn{2}{c}{\textbf{Nursery}} &\multicolumn{2}{c}{\textbf{Malware}} &\multicolumn{2}{c}{\textbf{Average}}\\
           \cmidrule(l){2-3}\cmidrule(l){4-5}\cmidrule(l){6-7}\cmidrule(l){8-9}\cmidrule(l){10-11}\cmidrule(l){12-13}
            \textbf{Metric} & \textbf{Accuracy} & \textbf{AUC} & \textbf{Accuracy} & \textbf{AUC} & \textbf{Accuracy} & \textbf{AUC} & \textbf{Accuracy} & \textbf{AUC} & \textbf{Accuracy} & \textbf{AUC} & \textbf{Accuracy} & \textbf{AUC} \\
			\midrule
			\textbf{VFULR} & $0.832$ & $0.886$ & $0.779$ & $0.689$ & $0.630$ & $0.652$ & $0.737$ & $0.908$ & $0.975$ & $0.933$ & $0.791$ & $0.814$ 
\\
			\textbf{VFUFR} & $0.847$ & $0.896$ & $0.781$ & $0.688$ & $0.695$ & $0.747$ & $0.728$ & $0.906$ & $0.979$ & $0.947$ & $\underline{0.806}$ & $\underline{0.837}$ 
\\
			\textbf{Ours} & $0.846$ & $0.896$ & $0.781$ & $0.688$ & $0.695$ & $0.740$ & $0.730$ & $0.906$ & $0.979$ & $0.948$ & $\underline{0.806}$ & $0.836$ 

\\
            \midrule
            \textbf{Retrain} & $0.853$ & $0.905$ & $0.797$ & $0.686$ & $0.695$ & $0.782$ & $0.731$ & $0.906$ & $0.979$ & $0.954$ & $\textbf{0.811}$ & $\textbf{0.847}$

\\
			\bottomrule
		\end{tabular}
	\end{center}
\end{table*}

\begin{table*}[t]
	\footnotesize
	\begin{center}    
		\caption{Fidelity of VFL Client Removal for MLP models on Synchronous Unlearning.}
		%\vspace{-5pt}
		\label{tab:f-mlp}
		%\tablesize
		\begin{tabular}{lcccccccccccc}

			\toprule
            &\multicolumn{2}{c}{\textbf{Adult Income}} &\multicolumn{2}{c}{\textbf{Credit}} &\multicolumn{2}{c}{\textbf{Diabetes}} &\multicolumn{2}{c}{\textbf{Nursery}} &\multicolumn{2}{c}{\textbf{Malware}} &\multicolumn{2}{c}{\textbf{Average}}\\
            \cmidrule(l){2-3}\cmidrule(l){4-5}\cmidrule(l){6-7}\cmidrule(l){8-9}\cmidrule(l){10-11}\cmidrule(l){12-13}
            \textbf{Metric} & \textbf{Accuracy} & \textbf{AUC} & \textbf{Accuracy} & \textbf{AUC} & \textbf{Accuracy} & \textbf{AUC} & \textbf{Accuracy} & \textbf{AUC} & \textbf{Accuracy} & \textbf{AUC} & \textbf{Accuracy} & \textbf{AUC} \\
			\midrule
			\textbf{VFUFR} & $0.859$ & $0.912$ & $0.776$ & $0.743$ & $0.623$ & $0.709$ & $0.735$ & $0.910$ & $0.983$ & $0.957$ & $0.795$ & $\underline{0.846}$ 
 
\\
			\textbf{Ours} & $0.859$ & $0.911$ & $0.778$ & $0.742$ & $0.636$ & $0.704$ & $0.737$ & $0.910$ & $0.982$ & $0.956$ & $\underline{0.798}$ & $0.845$
\\ \midrule
            \textbf{Retrain} & $0.859$ & $0.912$ & $0.805$ & $0.751$ & $0.740$ & $0.818$ & $0.749$ & $0.911$ & $0.984$ & $0.958$ & $\mathbf{0.827}$ & $\mathbf{0.870}$ 

\\
			\bottomrule
		\end{tabular}
	\end{center}
\end{table*}

\begin{table*}[t]
	\small
	\begin{center}    
		\caption{Efficiency of VFL Client Removal for LR models on Synchronous Unlearning.}
		%\vspace{-5pt}
		\label{tab:e-lr}
		%\tablesize
        \resizebox{\textwidth}{!}{%
		\begin{tabular}{lcccccccccccc}

			\toprule
            &\multicolumn{2}{c}{\textbf{Adult Income}} &\multicolumn{2}{c}{\textbf{Credit}} &\multicolumn{2}{c}{\textbf{Diabetes}} &\multicolumn{2}{c}{\textbf{Nursery}} &\multicolumn{2}{c}{\textbf{Malware}}\\
            \cmidrule(l){2-3}\cmidrule(l){4-5}\cmidrule(l){6-7}\cmidrule(l){8-9}\cmidrule(l){10-11}
            \textbf{Metric} & \textbf{\#Epoch} & \textbf{Comm./Epoch} & \textbf{\#Epoch} & \textbf{Comm./Epoch} & \textbf{\#Epoch} & \textbf{Comm./Epoch} & \textbf{\#Epoch} & \textbf{Comm./Epoch} & \textbf{\#Epoch} & \textbf{Comm./Epoch}\\
			\midrule
			\textbf{VFULR} & $1$ & $4.17$ & $1$ & $0.732$ & $1$ & $0.0187$ & $1$ & $0.791$ & $1$ & $4.81$\\
			\textbf{VFUFR} & $50$ & $3.91$ & $50$ & $0.553$ & $50$ & $0.0141$ & $50$ & $0.597$ & $50$ & $4.51$\\
			\textbf{Ours} & $50$ & $3.99$ & $50$ & $0.576$ & $50$ & $0.0144$ & $50$ & $0.609$ & $50$ & $4.60$\\ \midrule
            \textbf{Retrain} & $180$ & $3.91$ & $400$ & $0.553$ & $129$ & $0.0141$ & $400$ & $0.597$ & $209$ & $4.51$\\
			\bottomrule
		\end{tabular}
        }
	\end{center}
\end{table*}

\begin{table*}[t]
	%\small
	\begin{center}    
		\caption{Efficiency of VFL Client Removal for MLP models on Synchronous Unlearning.}
		%\vspace{-5pt}
		\label{tab:e-mlp}
		%\tablesize
        \resizebox{\textwidth}{!}{%
		\begin{tabular}{lcccccccccccc}

			\toprule
            &\multicolumn{2}{c}{\textbf{Adult Income}} &\multicolumn{2}{c}{\textbf{Credit}} &\multicolumn{2}{c}{\textbf{Diabetes}} &\multicolumn{2}{c}{\textbf{Nursery}} &\multicolumn{2}{c}{\textbf{Malware}}\\
            \cmidrule(l){2-3}\cmidrule(l){4-5}\cmidrule(l){6-7}\cmidrule(l){8-9}\cmidrule(l){10-11}
            \textbf{Metric}     & \textbf{\#Epoch} & \textbf{Comm./Epoch} & \textbf{\#Epoch} & \textbf{Comm./Epoch} & \textbf{\#Epoch} & \textbf{Comm./Epoch} & \textbf{\#Epoch} & \textbf{Comm./Epoch} & \textbf{\#Epoch} & \textbf{Comm./Epoch}\\
			\midrule
			\textbf{VFUFR} & $50$ & $3.91$ & $50$ & $0.553$ & $50$ & $0.0141$ & $50$ & $0.597$ & $50$ & $4.51$\\
			\textbf{Ours} & $50$ & $3.99$ & $50$ & $0.564$ & $50$ & $0.0144$ & $50$ & $0.609$ & $50$ & $4.60$\\ \midrule
            \textbf{Retrain} & $400$ & $3.91$ & $236$ & $0.553$ & $400$ & $0.0141$ & $168$ & $0.597$ & $225$ & $4.51$\\
			\bottomrule
		\end{tabular}
        }
	\end{center}
\end{table*}

\subsection{Sync. \& Async. Feature Removal}

Since VFULR and VFUFR do not support the feature removal scenario, our analysis is exclusively benchmarked against Retrain. We further evaluate our method on both synchronous and asynchronous settings, demonstrating comparable performance between the two settings. For the asynchronous unlearning setting, we suppose that a random 25\% of clients will drop out in each epoch.
%Feature removal is a further refinement of the client removal scenario. In this case, there is no baseline method, so we address the second research question: whether our method performs comparably in asynchronous unlearning as it does when all clients are online. 
Due to the page limitation, we show experiment results on two datasets: \textit{Adult Income} and \textit{Malware}. The experimental setup is similar to that of the client removal scenario, except that in this case, we unlearn a portion of the features from the clients rather than unlearning entire clients. The results are presented in Figure ~\ref{Feature Removal Results}.

\textbf{Fidelity:}  The first row of Figure \ref{Feature Removal Results} shows the test accuracy metric. The performance of our method is very similar to retraining, with the test accuracy decreasing only slightly as the proportion of features removed increases, which we have already analyzed in the client removal scenario. %The three variants of our method in the ablation study show little deviation from the original approach, and 
Specifically, there is no significant difference between the accuracy of our method on synchronous or asynchronous unlearning, verifying the effectiveness on the asynchronous setting. The AUC results are similar and not shown due to the page limitation.

\textbf{Efficiency:}  The second and third rows of Figure \ref{Feature Removal Results} display the epoch number and communication cost per epoch. From the figure, it is evident that the change in the proportion of feature removal has minimal effect on the efficiency metrics. In terms of the number of training epochs, the retraining method requires several times more epochs than our methods. %The communication cost per epoch result is similar to what we analyzed in the client removal scenario. %When gradient ascent is applied in the first round, the communication cost per epoch for our method is slightly higher. 
It is noteworthy that our method on asynchronous unlearning incurs approximately 25\% less communication cost than synchronous unlearning per epoch, as fewer clients are online, reducing the communication burden. The proportion of reduction in communication cost is directly related to the proportion of clients that are offline.

\begin{figure*}[h]  % 使用 figure* 环境，使图片跨越两栏
    \centering

    % 第三列
    \begin{minipage}[b]{0.23\textwidth}
        \centering
        \includegraphics[width=\textwidth]{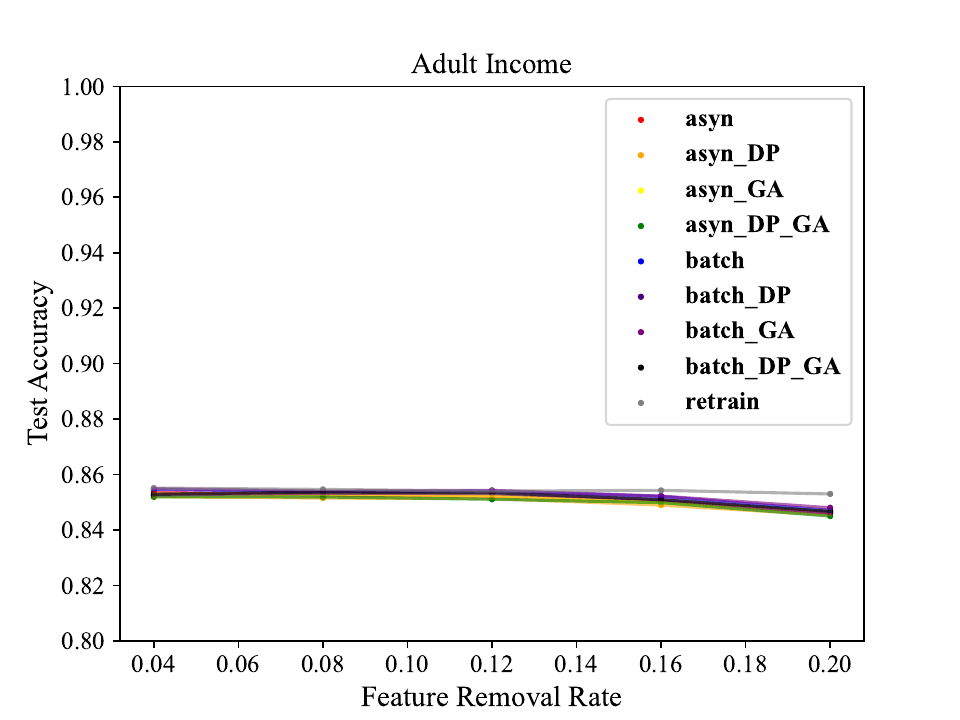}
        % \caption{a3}
    \end{minipage}
    \hfill
    \begin{minipage}[b]{0.23\textwidth}
        \centering
        \includegraphics[width=\textwidth]{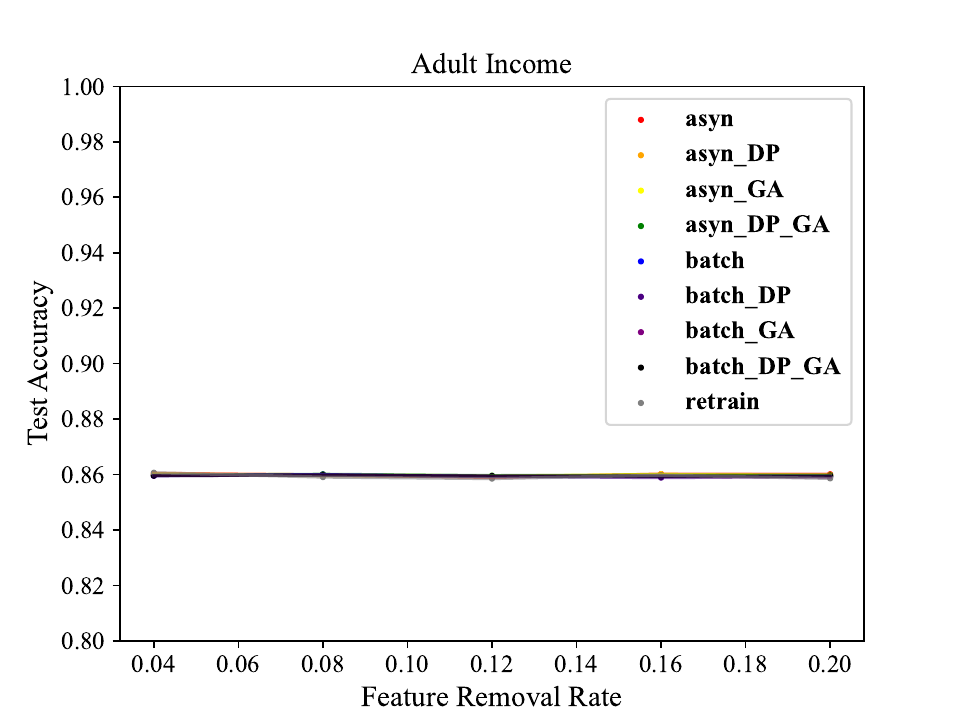}
        % \caption{子图3}
    \end{minipage}
    \hfill
    \begin{minipage}[b]{0.23\textwidth}
        \centering
        \includegraphics[width=\textwidth]{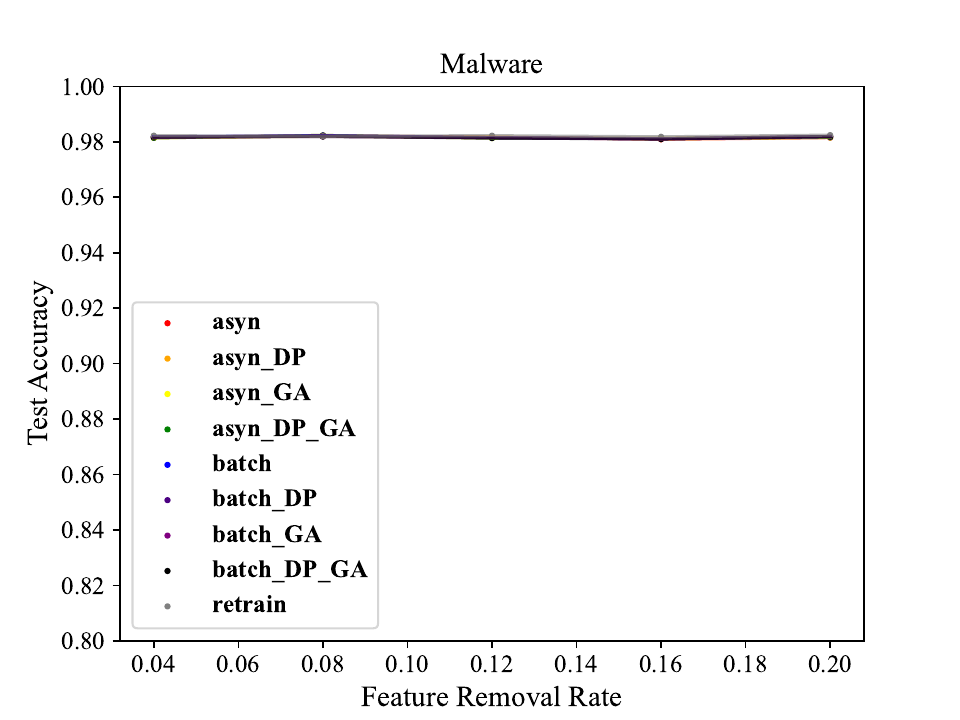}
        % \caption{子图3}
    \end{minipage}
    \hfill
    \begin{minipage}[b]{0.23\textwidth}
        \centering
        \includegraphics[width=\textwidth]{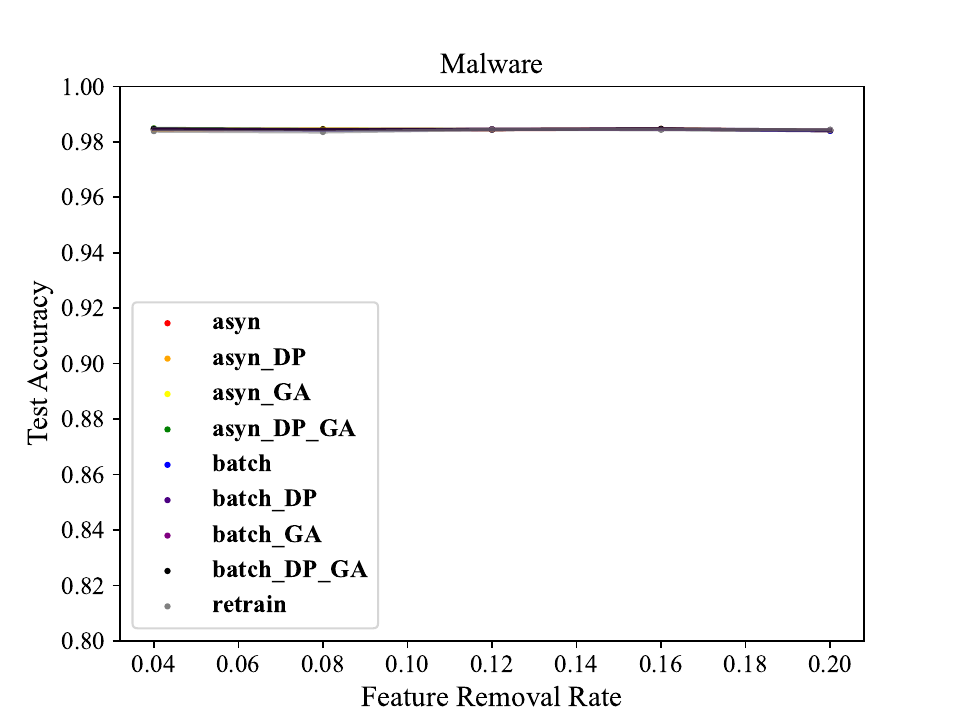}
        % \caption{子图3}
    \end{minipage}

    \vspace{0.5cm}  % 行间距

    % 第六列
    \begin{minipage}[b]{0.23\textwidth}
        \centering
        \includegraphics[width=\textwidth]{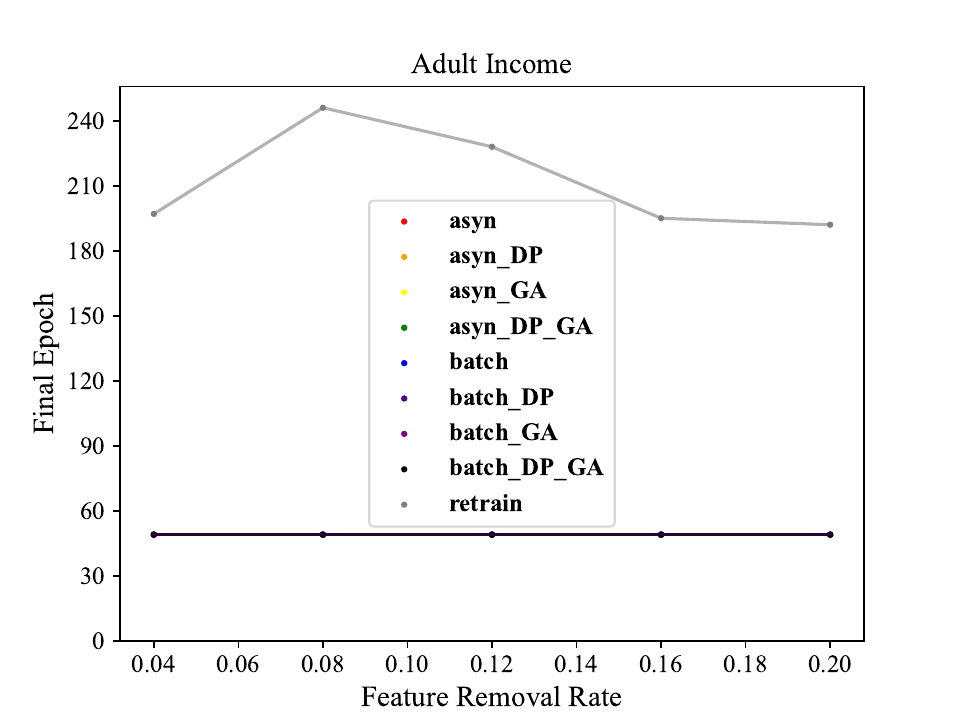}
        % \caption{a6}
    \end{minipage}
    \hfill
    \begin{minipage}[b]{0.23\textwidth}
        \centering
        \includegraphics[width=\textwidth]{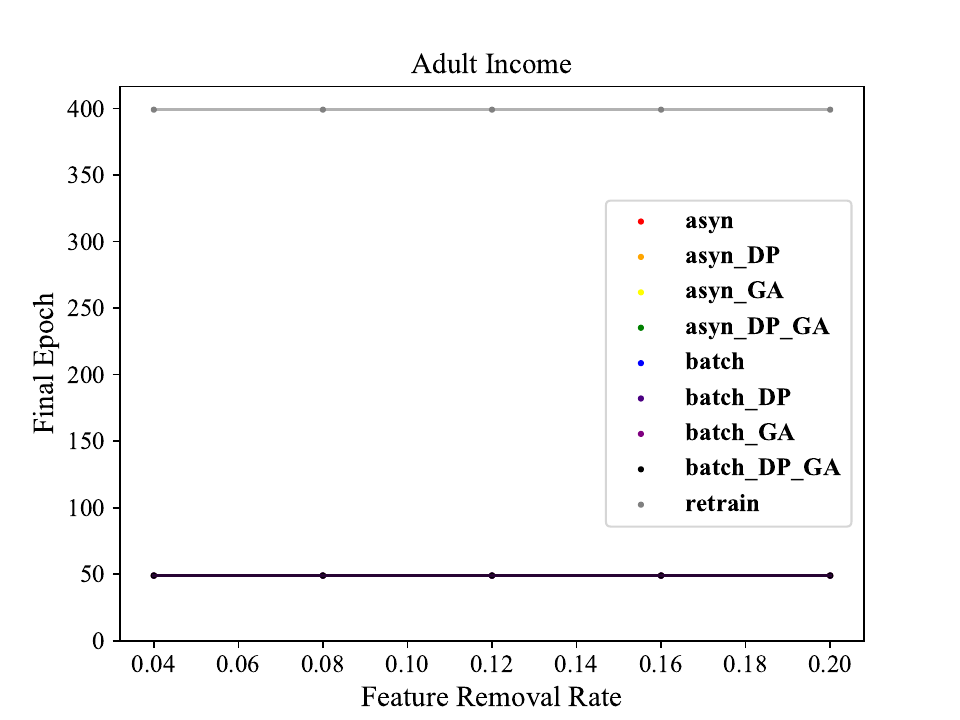}
        % \caption{子图6}
    \end{minipage}
    \hfill
    \begin{minipage}[b]{0.23\textwidth}
        \centering
        \includegraphics[width=\textwidth]{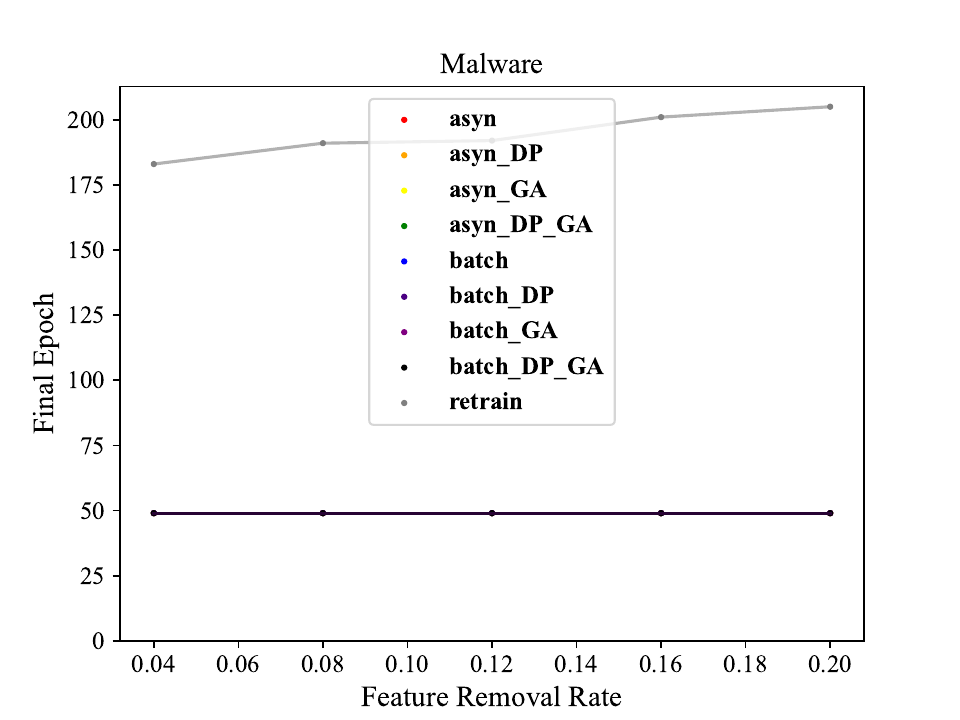}
        % \caption{子图6}
    \end{minipage}
    \hfill
    \begin{minipage}[b]{0.23\textwidth}
        \centering
        \includegraphics[width=\textwidth]{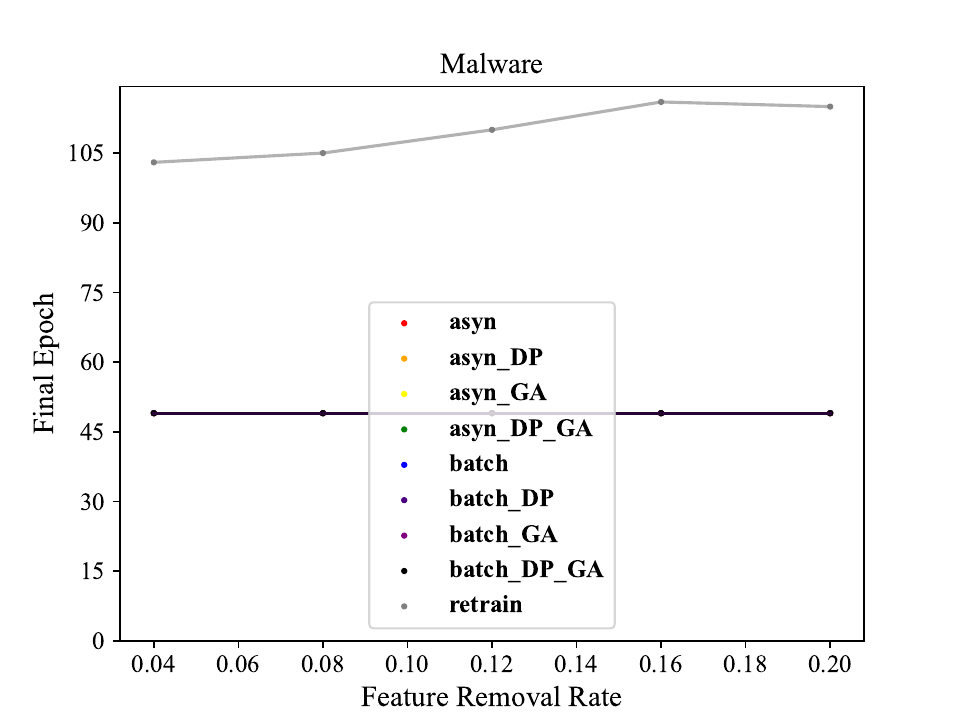}
        % \caption{子图6}
    \end{minipage}

    \vspace{0.5cm}  % 行间距

    % 第七列
    \begin{minipage}[b]{0.23\textwidth}
        \centering
        \includegraphics[width=\textwidth]{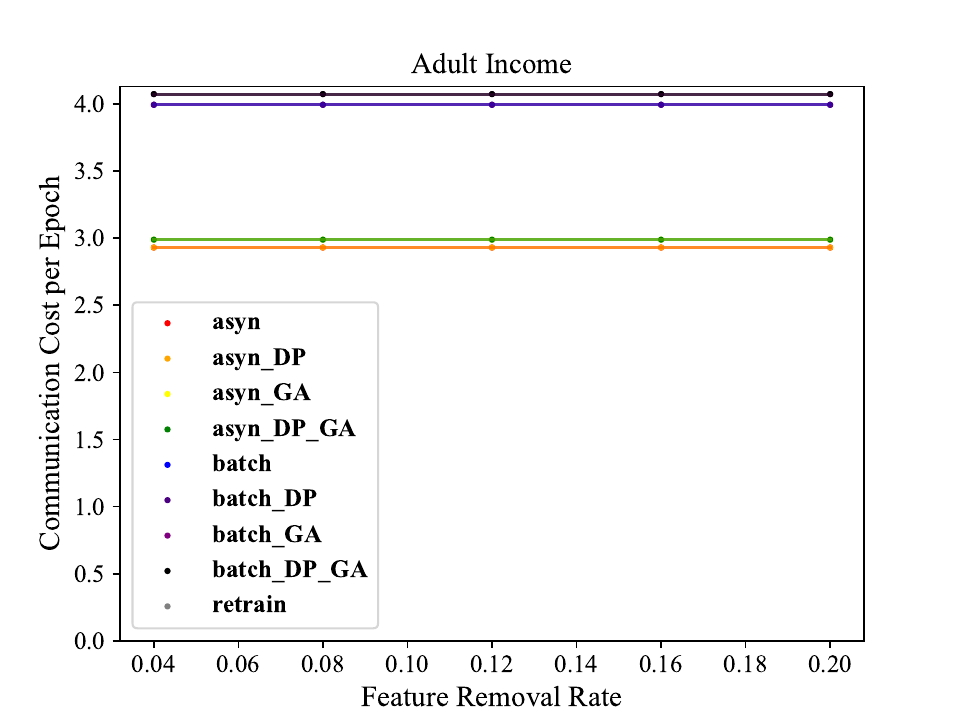}
        % \caption{a7}
    \end{minipage}
    \hfill
    \begin{minipage}[b]{0.23\textwidth}
        \centering
        \includegraphics[width=\textwidth]{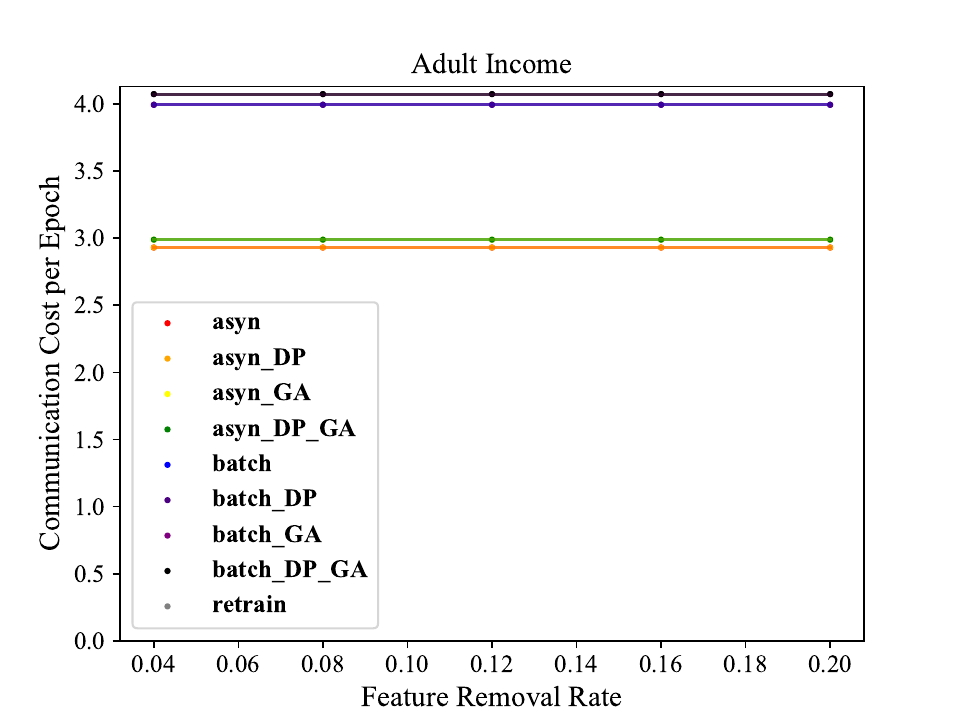}
        % \caption{子图7}
    \end{minipage}
    \hfill
    \begin{minipage}[b]{0.23\textwidth}
        \centering
        \includegraphics[width=\textwidth]{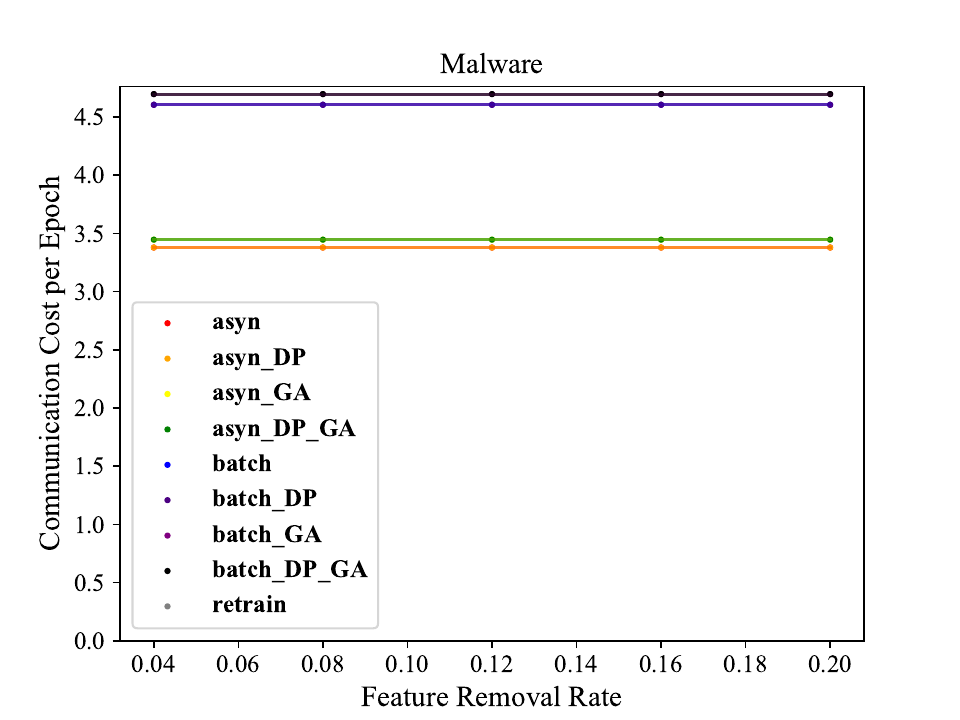}
        % \caption{子图7}
    \end{minipage}
    \hfill
    \begin{minipage}[b]{0.23\textwidth}
        \centering
        \includegraphics[width=\textwidth]{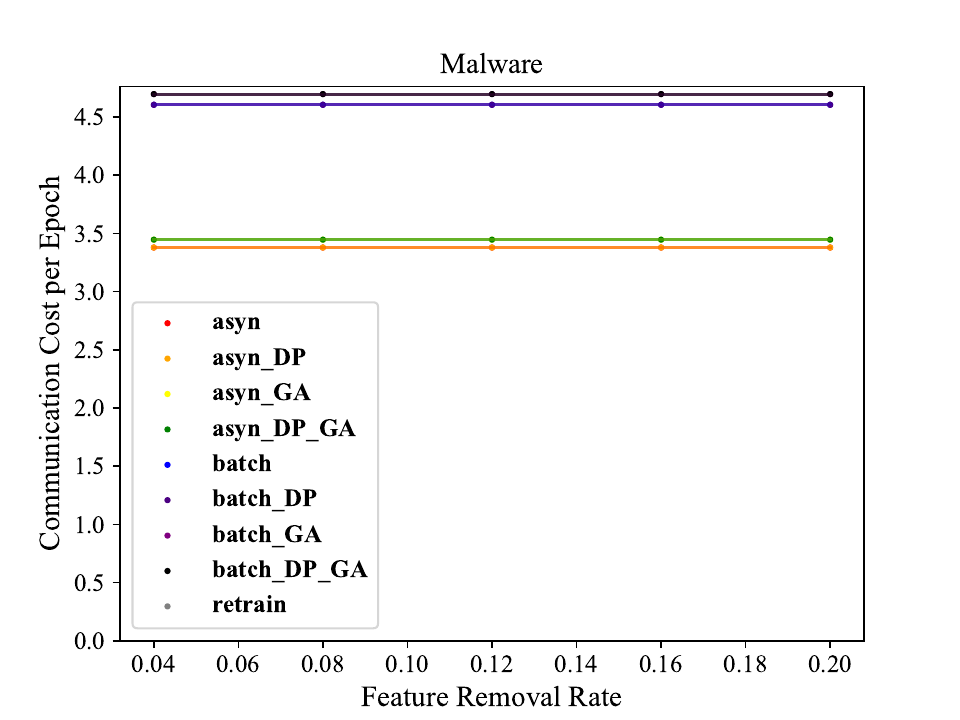}
        % \caption{子图7}
    \end{minipage}
    
    \caption{Feature Removal Results.}
    \label{Feature Removal Results}
\end{figure*}

\subsection{Sync. \& Async. Sensitive Info. Removal}

Sensitive information removal represents a more fine-grained refinement of the feature removal scenario, where only specific features from certain samples within a client are removed. %We continue to investigate the second question: whether the performance of our method in asynchronous unlearning is not significantly worse than when all clients are online. 
%We carefully select two datasets, Credit and Diabetes, for experimentation as they include sensitive attributes that users may want to hide from public. Specifically, the Credit dataset includes the age feature, while the Diabetes dataset includes the number of pregnancies feature. These features may be considered sensitive information for some users, so we focus on how to unlearn these data points. The results are shown in Figure ~\ref{Sensitive Information Removal Results.}. %The figures are organized into four columns based on the experimental datasets and models: from left to right, Credit-LR, Credit-MLP, Diabetes-LR, and Diabetes-MLP. Due to page limitations, it is not possible to present all the performance metrics in the main text. For additional results, please refer to Appendix C. In each subplot, the x-axis represents the proportion of the number of values forgotten in this feature relative to the total number, and the y-axis represents the corresponding performance metric.
We select two datasets, \textit{Credit} and \textit{Diabetes}, for experimentation due to their inclusion of sensitive attributes that users might prefer to keep private. Specifically, \textit{Credit} contains the `age' feature, while \textit{Diabetes} includes the `number of pregnancies' feature. These attributes may be regarded as sensitive by some users, prompting our focus on unlearning such data points. The results are presented in Figure ~\ref{Sensitive Information Removal Results.}.

% \textbf{Efficacy:} The first row in the figure shows the gradient residual metric. As the proportion of sensitive information removal increases, the gradient residual also increases overall. In the Credit dataset, the gradient residual curves for the batch method and the asynchronous method are very close to each other, and they are almost identical to the retrained model curve, indicating that the forgetting process is very effective and nearly identical to retraining the model. However, in the Diabetes dataset, both the asynchronous and batch methods have significantly higher gradient residuals compared to the retrained curve, with the asynchronous method being slightly higher than the batch method. This phenomenon can be explained by the analysis in the Client Removal section. The limited number of features in the Diabetes dataset likely contribute to the importance of the forgotten information. Moreover, the three ablation study variants of our method show almost no difference compared to the original method.

\textbf{Fidelity:} The first row in the figure presents the test accuracy metric. As the proportion of sensitive information removal increases, the test accuracy generally decreases. This decrease is more pronounced in \textit{Diabetes} which includes only 8 features in total, and the retrained model curve is slightly higher than that of our method, indicating some performance gap. This observation is also consistent with previous results in Sec.  that datasets with s %The three ablation study variants of our method show minimal differences from the original method, and there is almost no gap between the asynchronous and batch methods.

\textbf{Efficiency:} The third and fourth rows in the figure show the number of epochs and the communication cost per epoch. For the training epochs, although there are some fluctuations in the results, the retrained method requires several times more epochs than the unlearning methods. In the communication cost per epoch for the Diabetes dataset, when the sensitive information removal rate is close to 0.2, the communication cost per epoch of the asynchronous method is actually higher than that of the synchronous method. This occurs because the asynchronous method converges after approximately 20 epochs, and the additional communication overhead for evaluation is divided by a smaller denominator, resulting in a relatively large impact.

\begin{figure*}[h]  % 使用 figure* 环境，使图片跨越两栏
    \centering

    % 第三列
    \begin{minipage}[b]{0.23\textwidth}
        \centering
        \includegraphics[width=\textwidth]{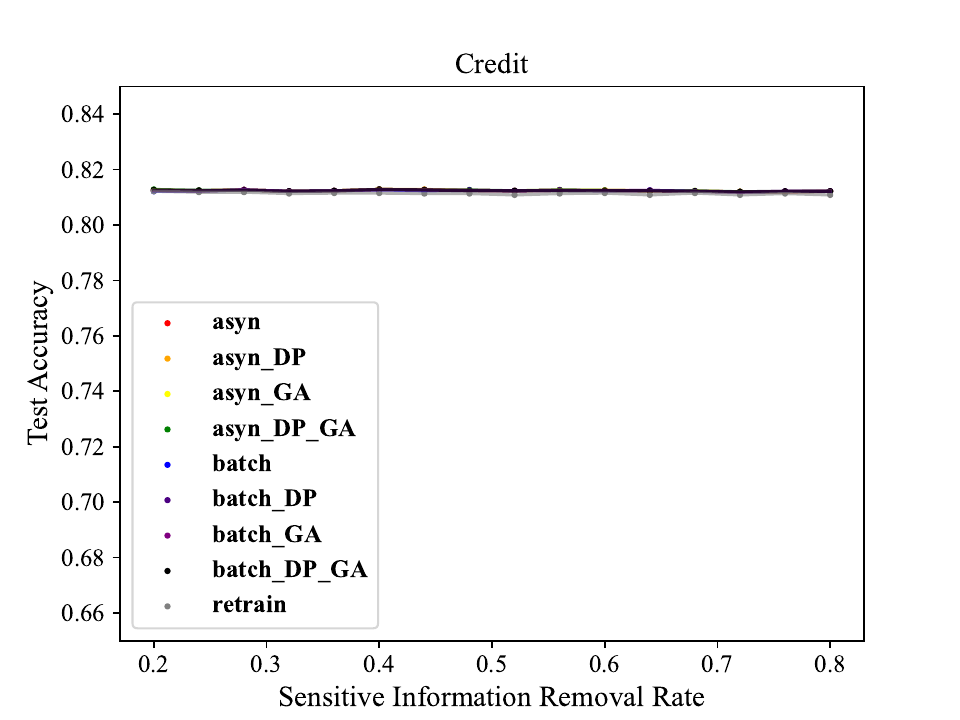}
        % \caption{a3}
    \end{minipage}
    \hfill
    \begin{minipage}[b]{0.23\textwidth}
        \centering
        \includegraphics[width=\textwidth]{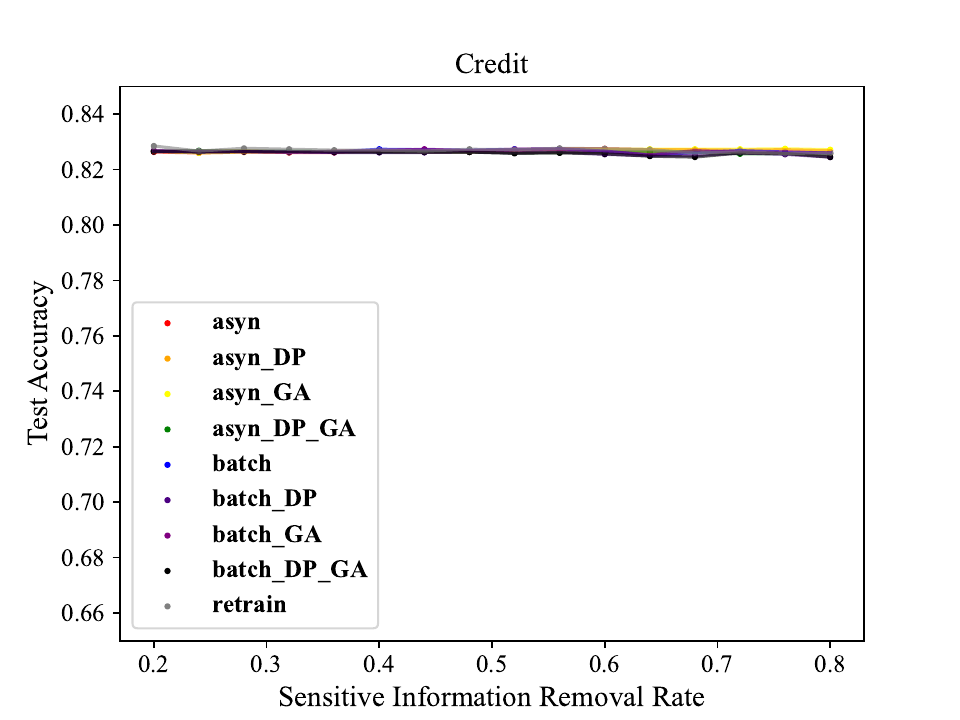}
        % \caption{子图3}
    \end{minipage}
    \hfill
    \begin{minipage}[b]{0.23\textwidth}
        \centering
        \includegraphics[width=\textwidth]{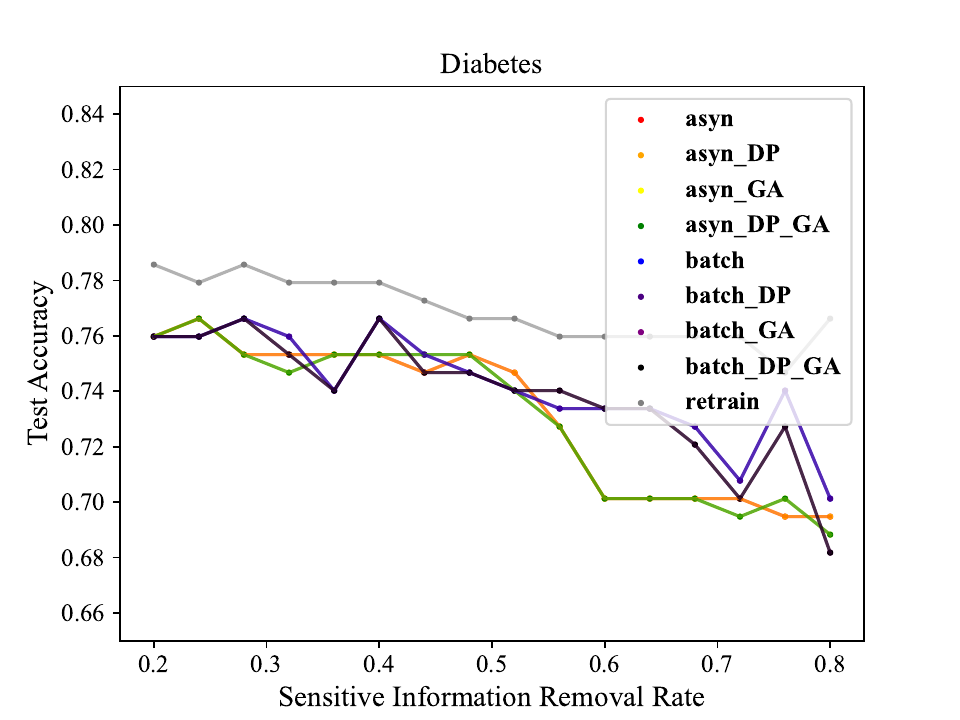}
        % \caption{子图3}
    \end{minipage}
    \hfill
    \begin{minipage}[b]{0.23\textwidth}
        \centering
        \includegraphics[width=\textwidth]{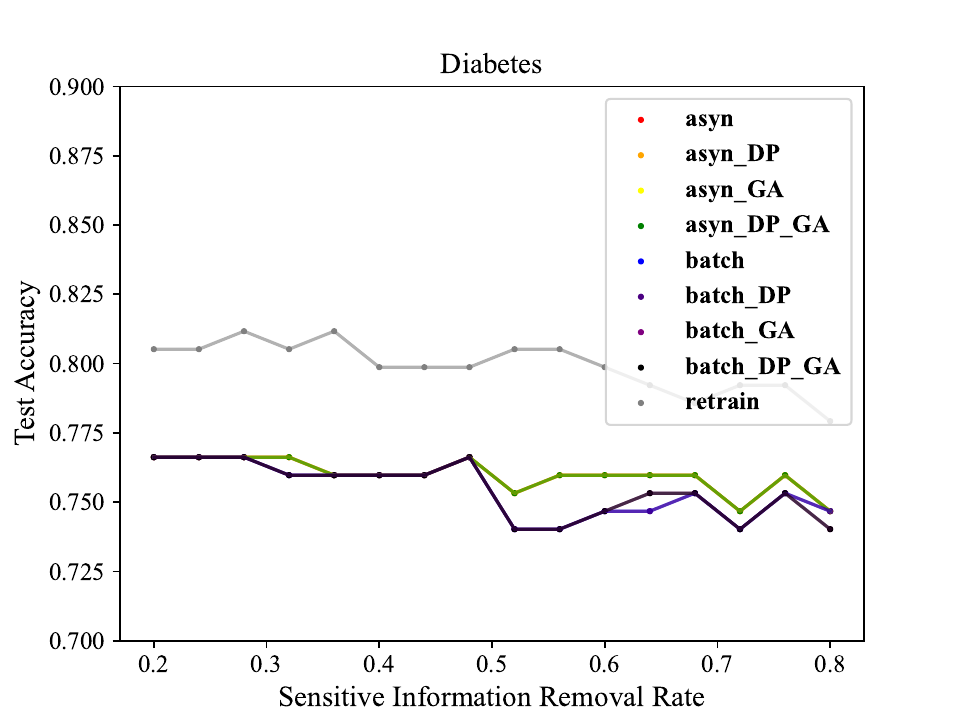}
        % \caption{子图3}
    \end{minipage}

    \vspace{0.5cm}  % 行间距

    % 第六列
    \begin{minipage}[b]{0.23\textwidth}
        \centering
        \includegraphics[width=\textwidth]{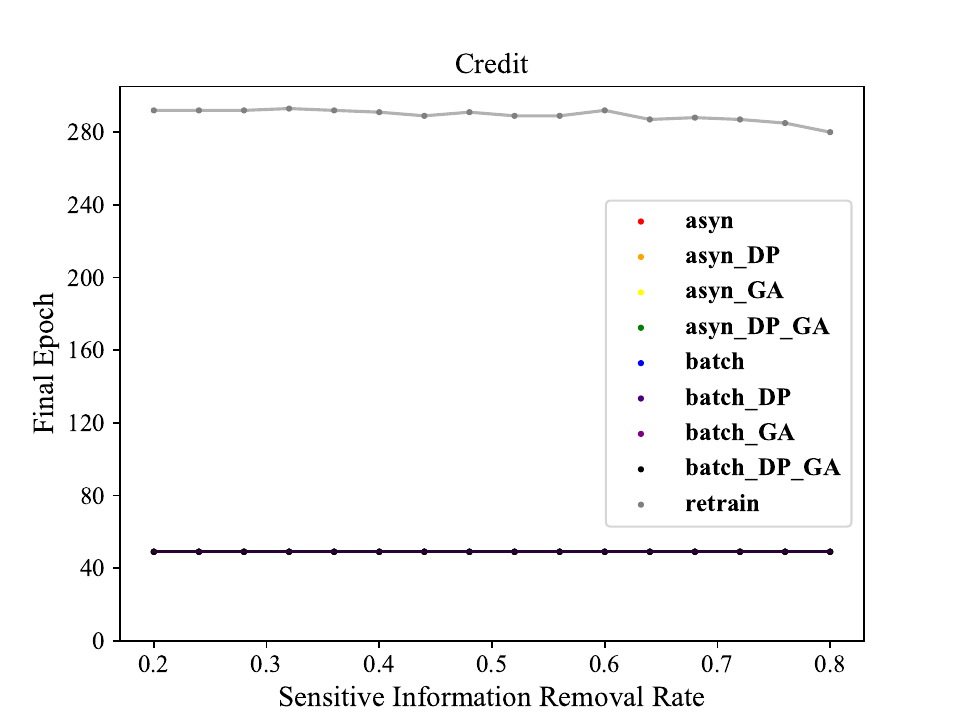}
        % \caption{a6}
    \end{minipage}
    \hfill
    \begin{minipage}[b]{0.23\textwidth}
        \centering
        \includegraphics[width=\textwidth]{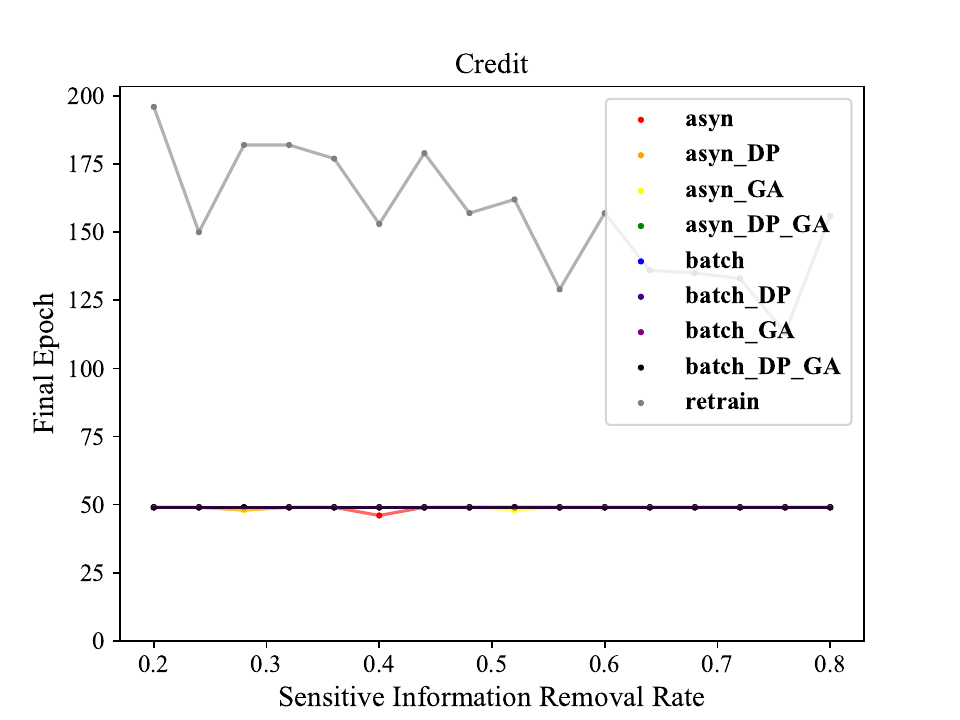}
        % \caption{子图6}
    \end{minipage}
    \hfill
    \begin{minipage}[b]{0.23\textwidth}
        \centering
        \includegraphics[width=\textwidth]{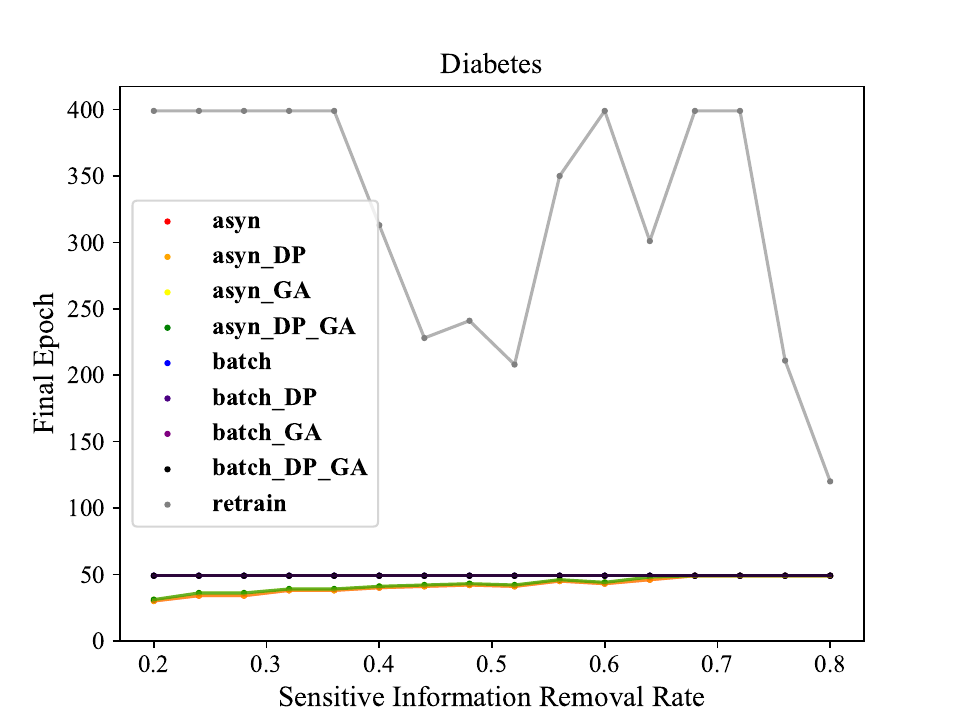}
        % \caption{子图6}
    \end{minipage}
    \hfill
    \begin{minipage}[b]{0.23\textwidth}
        \centering
        \includegraphics[width=\textwidth]{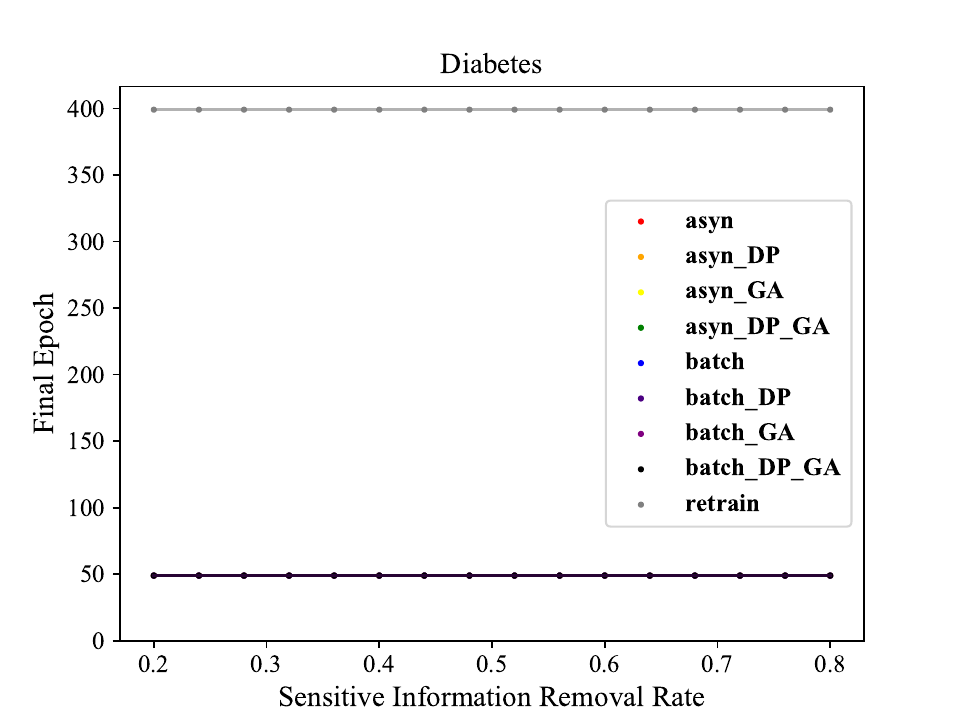}
        % \caption{子图6}
    \end{minipage}

    \vspace{0.5cm}  % 行间距

    % 第七列
    \begin{minipage}[b]{0.23\textwidth}
        \centering
        \includegraphics[width=\textwidth]{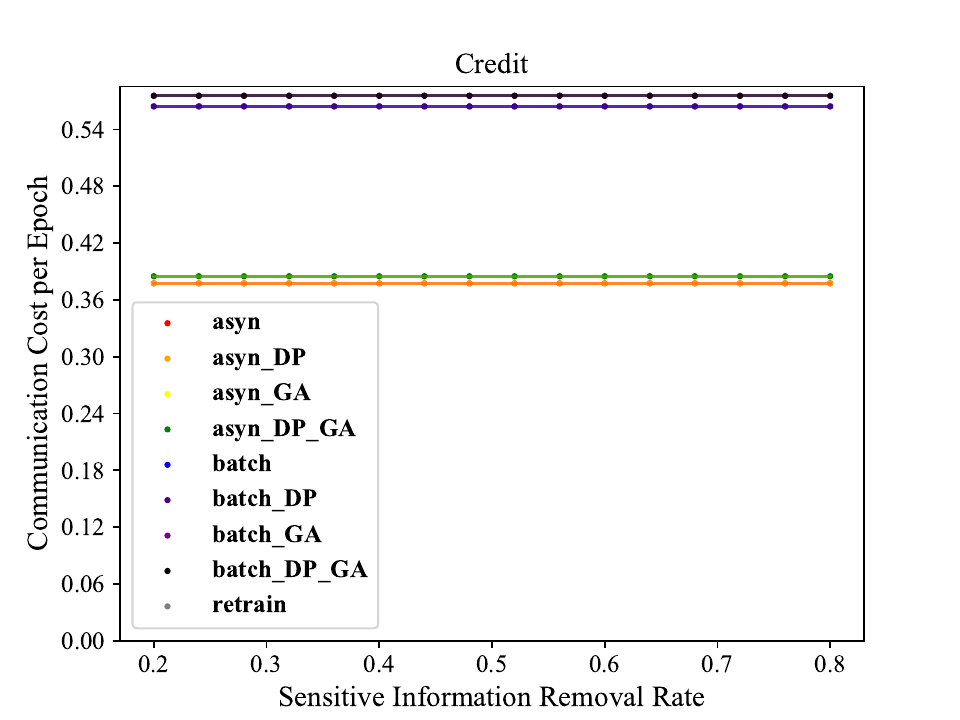}
        % \caption{a7}
    \end{minipage}
    \hfill
    \begin{minipage}[b]{0.23\textwidth}
        \centering
        \includegraphics[width=\textwidth]{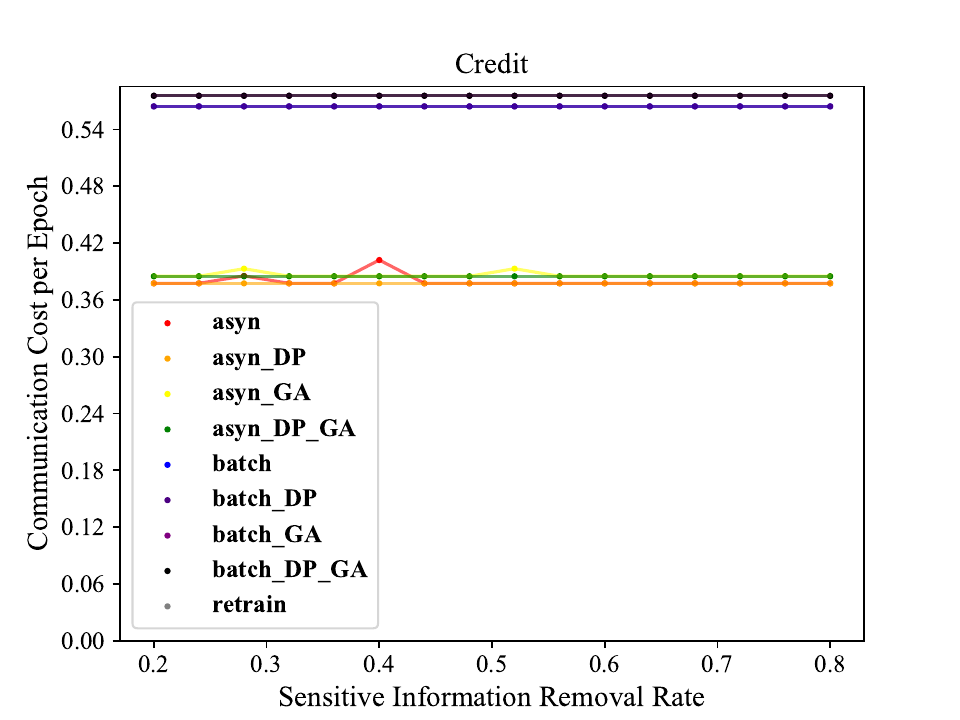}
        % \caption{子图7}
    \end{minipage}
    \hfill
    \begin{minipage}[b]{0.23\textwidth}
        \centering
        \includegraphics[width=\textwidth]{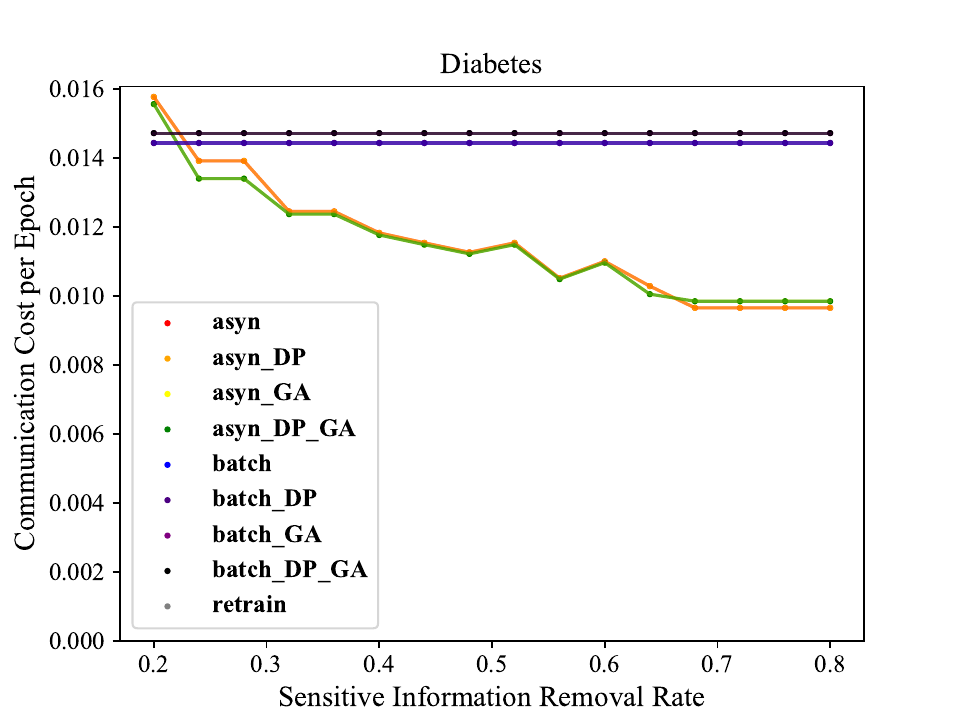}
        % \caption{子图7}
    \end{minipage}
    \hfill
    \begin{minipage}[b]{0.23\textwidth}
        \centering
        \includegraphics[width=\textwidth]{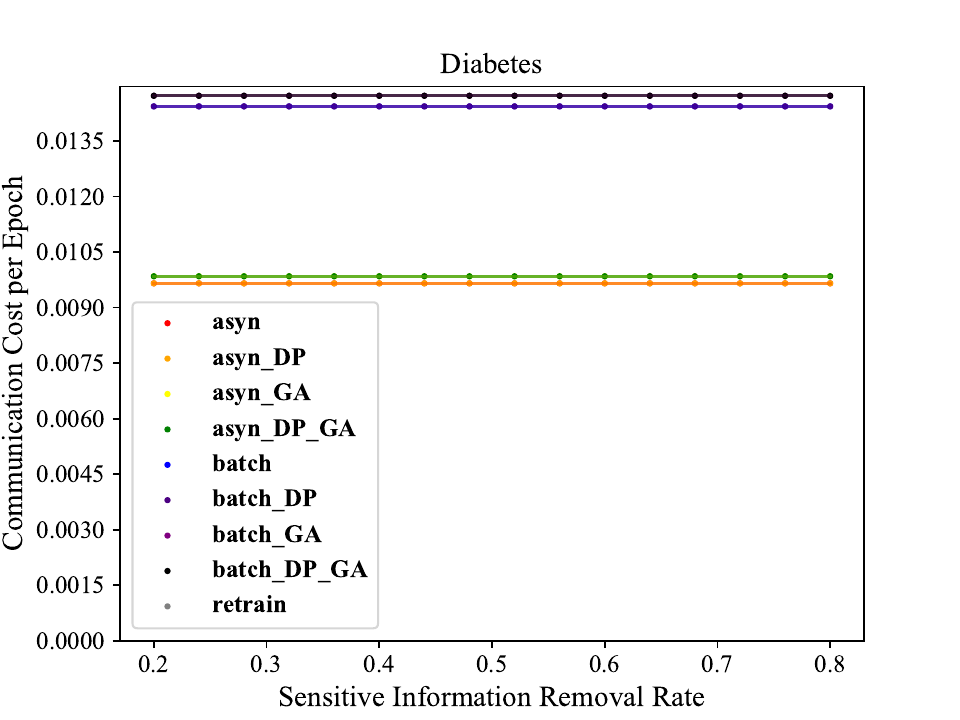}
        % \caption{子图7}
    \end{minipage}
    
    \caption{Sensitive Information Removal Results.}
    \label{Sensitive Information Removal Results.}
\end{figure*}

% \color{red}
% \subsection{Ablation Study}

% To achieve certified unlearning, our method requires two components: (1) adding noise during the training of the original model, and (2) performing gradient ascent on the original data during the first update. We conduct an ablation study to evaluate the impact of each component, leading to the following method variants:
% %The experiments are divided into four groups based on whether these two conditions are met:
% \begin{itemize}
%     \item \textbf{Ours}: both (1) and (2) are satisfied.
%     \item \textbf{Ours without noise (Ours w.o. noise)}: (1) is not satisfied, but (2) is.
%     \item \textbf{Ours without gradient ascending (Ours w.o. ascend)}: (1) is satisfied, but (2) is not.
%     \item \textbf{Ours without gradient ascending without noise (Ours w.o. ascend\&noise)}: neither (1) nor (2) is satisfied.
% \end{itemize}

% Here, we compare these variants of our method to show .... 

% \color{black}

\subsection{Unlearning Epoch Number Choice}
\label{sec:choice}

Our method performs unlearning through multiple training rounds. This raises the question: how many rounds of unlearning are required? Typically, in model training, an early stopping mechanism is used to prevent overfitting and determine the number of training epochs. In a VFL system, the test set data is generally not accessible during unlearning. Therefore, we use the training loss as the criterion for early stopping. While the training loss typically decreases during regular training, our asynchronous method approximates updates for clients that are offline. Thus, using the training loss for early stopping is still reasonable.

On the other hand, if too many rounds of unlearning are performed, the efficiency advantage over retraining is lost. Therefore, we cannot allow an excessive number of rounds. We need to determine a maximum number of rounds that is fewer than the retraining epochs, but still ensures sufficient training. We demonstrate this using the MLP model on the Malware dataset in a client removal experiment. When all clients are online, unlearning could potentially require hundreds of epochs. By recording the training loss, training accuracy, and gradient residual during the training process, we observe that the loss, accuracy, and gradient residual decrease quickly in the early stages of training and stabilize in the later stages, as shown in Figure ~\ref{epoch_choice}. Therefore, we select 50 epochs as the maximum number of rounds for unlearning. In the figure, we also plot the retraining curve for comparison, and it is evident that unlearning is more efficient than retraining.

\begin{figure*}[h]  
    
    \begin{minipage}[b]{0.3\textwidth}
        \centering
        \includegraphics[width=\textwidth]{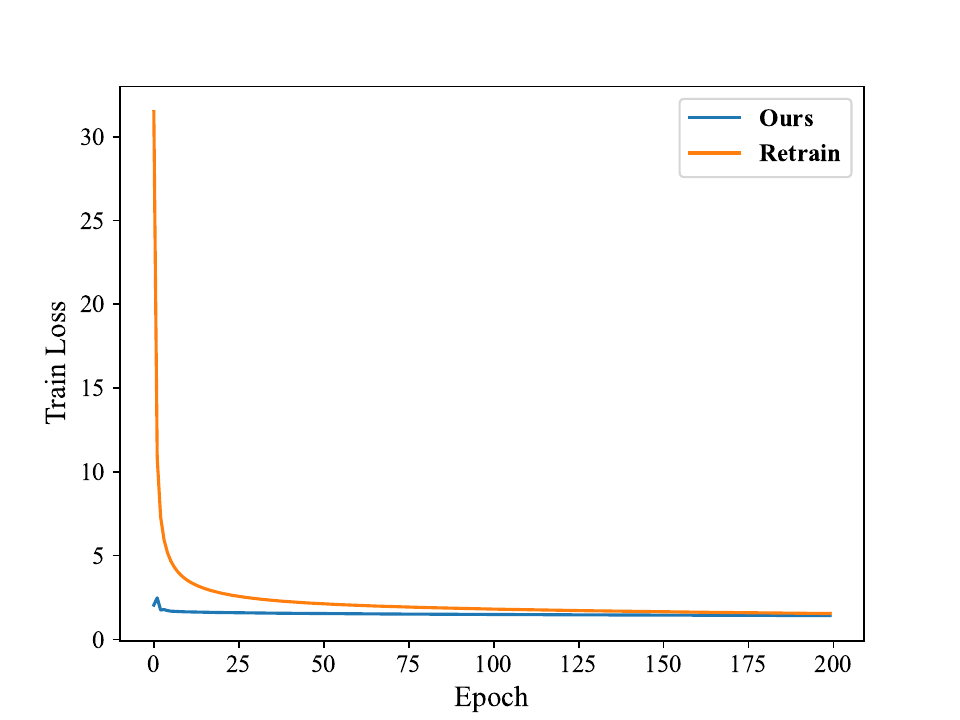} % 替换为您的图片路径
        % \caption{a1}
    \end{minipage}
    \hfill
    \begin{minipage}[b]{0.3\textwidth}
        \centering
        \includegraphics[width=\textwidth]{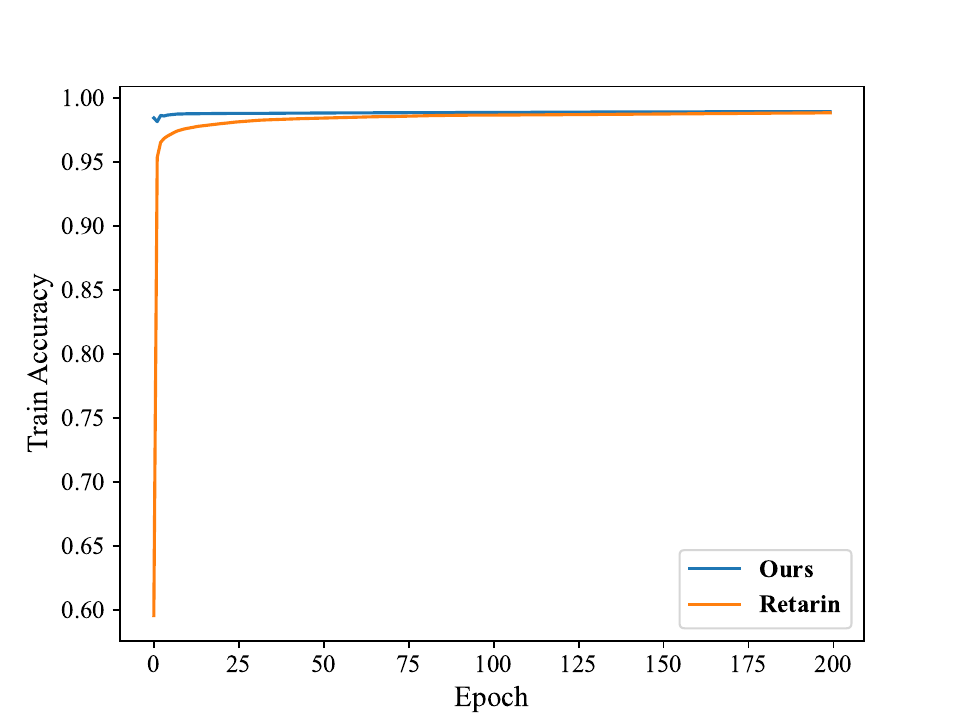} % 替换为您的图片路径
        % \caption{1}
    \end{minipage}
    \hfill
     \begin{minipage}[b]{0.3\textwidth}
        \centering
        \includegraphics[width=\textwidth]{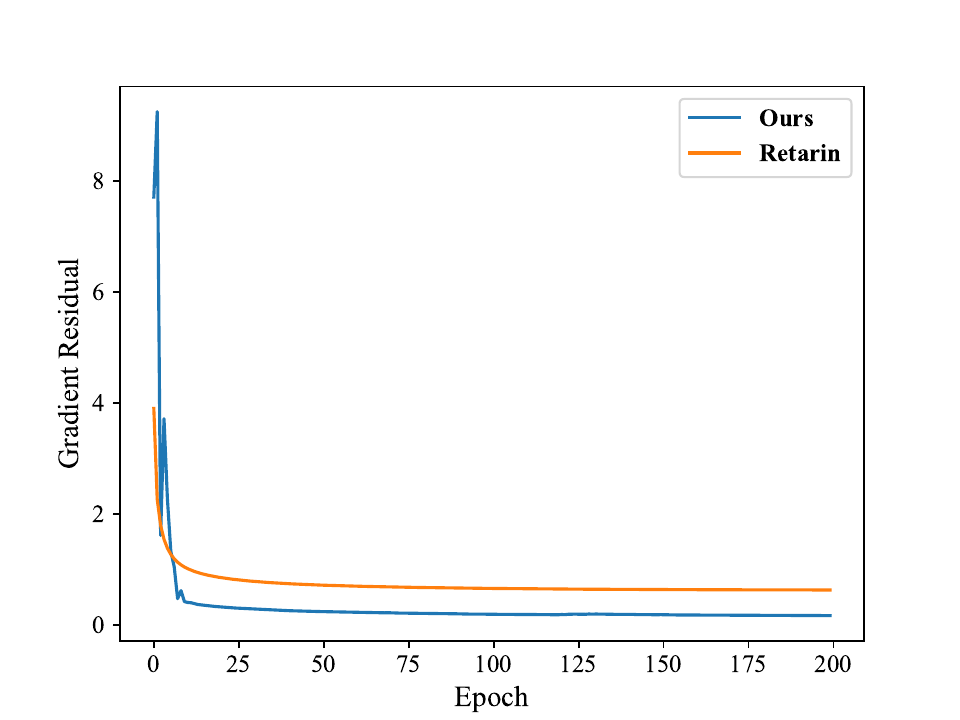} % 替换为您的图片路径
        % \caption{1}
    \end{minipage}

    \caption{The variation curves of training loss, training accuracy, and gradient residuals as updates progress.}
    \label{epoch_choice}
\end{figure*}

\subsection{Asynchronous Unlearning Online Rate}

% Our asynchronous method supports unlearning updates even when some clients are offline. One question that arises is whether the proportion of online clients affects the performance of our asynchronous method. To investigate this, we conducted experiments on client removal using the LR model on the Adult Income dataset. The Adult Income dataset contains 108 features, and we set up 16 clients, with 27 features assigned to the client selected for removal (Client 0), and the remaining 15 clients each possessing 5 or 6 features. In our system, there are always at least two clients online: the active party (Client 15) and the client requesting unlearning (Client 0), with a maximum of 16 clients online simultaneously.

Our asynchronous unlearning method enables updates even when a subset of clients is offline. A critical question is how the proportion of online clients affects performance. To address this, we evaluate client removal using the LR model on the Adult Income dataset. The dataset contains 108 features distributed among 16 clients: the target client (Client 0) holds 27 features, while the remaining 15 clients each possess 5 or 6 features. In our setup, the active party (Client 15) and the requesting client (Client 0) are always online, with the system supporting up to 16 concurrent clients.

% Since we cannot obtain the gradients of offline clients, gradient residuals cannot be computed. Additionally, when only a subset of clients are online, we are unable to compute metrics on the test set. Therefore, we use training set loss and accuracy as the evaluation metrics. In Table ~\ref{tab:online rate}, we tested the impact of different numbers of online clients after performing 50 rounds of unlearning. As observed, the number of online clients has minimal impact on the effectiveness of unlearning. Notably, when only two clients are online, the batch method achieves nearly the same unlearning performance as the fully online system, with only one-eighth of the system's total online time and communication cost for unlearning. This is a significant advantage.

We measure the fidelity of the unlearned model using test accuracy and AUC. In Table ~\ref{tab:online rate}, we tested the impact of different numbers of online clients after performing 50 rounds of unlearning. The results indicate that the number of online clients has minimal effect on fidelity. While model fidelity slightly improves with higher online rates, the asynchronous method achieves nearly identical performance to the fully online system when approximately 3/4 of clients are online.  This reduces the total online time and communication costs by 25\%, offering a substantial efficiency gain.

\begin{table}[t]
    \small
    \begin{center}
        \caption{The number of online clients has little impact on the effectiveness of unlearning.}
        \label{tab:online rate}
        \resizebox{\columnwidth}{!}{%
        \begin{tabular}{lcc}
            \toprule
            \textbf{Online Client Number} & \textbf{Test Accuracy}& \textbf{Test AUC}\\
            \midrule
            3& 0.813& 0.892\\
 4& 0.817&0.893\\
            6& 0.828& 0.894\\
            9& 0.841& 0.895\\
            12& 0.847& 0.896\\
            16 & 0.846& 0.896\\
            \bottomrule
        \end{tabular}
        }
    \end{center}
\end{table}

%-------------------------------------------------------------------------------
\section{Conclusion}
%-------------------------------------------------------------------------------

In VFU, multiple unlearning requests targeting different objectives may arise, which the current methods cannot handle in a compatible manner. Additionally, the requirement for all clients to participate in unlearning imposes a significant burden on the clients. To address the practical challenges in VFU tasks, we propose a new unlearning framework that can compatibly handle different unlearning objectives. For learning models with strongly convex loss functions, our method provides certified unlearning with theoretical guarantees. Furthermore, considering the characteristics of VFU, we introduce the first asynchronous VFU system capable of unlearning. This system balances the needs of different clients while reducing computation and communication costs. Extensive experiments validate the applicability of our method. 

However, our method still has limitations: the theoretical analysis of certified unlearning currently applies only to models with strongly convex loss functions. Additionally, our approach is currently focused on VFL systems using the AggVFL structure, and further research is needed to extend it to more complex split-NN-based VFL systems. These challenges require further exploration, and we hope to continue expanding the practical applications of VFU, achieving comprehensive privacy protection both during and after training.

% \clearpage

% \section*{\textbf{Ethics}}

% \begin{itemize}
%     \item "I attest that I read the ethics considerations discussions in the conference call for papers, the detailed submissions instructions, and the guidelines for ethics document."
%     \item "I attest that the research team considered the ethics of this research, that the authors believe the research was done ethically, and that the team's next-step plans (e.g., after publication) are ethical."
%     \item "I attest that the submission has a clearly-marked section on ethical considerations in the body of the paper and/or in the extra 'ethics considerations and compliance with the open science policy' page."
% \end{itemize}

% \section*{\textbf{Open Science Policy}}
% We use public datasets, we make all our implementations publicly available at https://anonymous.4open.science/r/vertical-federated-unlearning-DD85.

% \clearpage

%-------------------------------------------------------------------------------
% \bibliographystyle{plain}
\bibliography{usenix}

%-------------------------------------------------------------------------------
\section*{Appedix A}
%-------------------------------------------------------------------------------
Previous work has shown that in a VFL system using logistic regression, the update contribution coefficients of each client remain fixed during each update. We now extend this result to Multi-Layer Perceptron (MLP). While the contribution coefficients do not remain constant, we demonstrate that, under an appropriate setting for the unlearning tasks, the change in coefficients is minimal when the MLP's depth and width are constrained.

\subsection*{Mathematical Notation}\label{sec:converg}

Consider a classification problem with \( K \) labels and a cross-entropy loss function. 
The input vector is denoted as \( \mathbf{x} \in \mathbb{R}^n \), and the output 
\( \mathbf{y} \in \mathbb{R}^K \) is a one-hot encoded vector, where all components 
are 0 except for a single entry \( y_k = 1 \), which corresponds to the correct 
class label \( k \) for the input \( \mathbf{x} \). We use a fully connected neural network \( f_{\boldsymbol{\theta}}(\cdot) \) with 
\( L \) layers, applying the ReLU activation function after each linear transformation. 
The ReLU function, defined as 
\[
\sigma(\mathbf{x}) = [\mathbf{x} \geq \mathbf{0}] \mathbf{x},
\]
outputs the element-wise maximum of each input component and zero. The network's output is a vector of logits \( \mathbf{h} \in \mathbb{R}^K \), computed recursively as follows:

\begin{align*}\label{align:fc}
    &\mathbf{h}^{(p)} = \mathbf{W}^{(p)} \mathbf{x}^{(p)} + \mathbf{b}^{(p)}, \\
    &\mathbf{x}^{(p+1)} = \sigma(\mathbf{h}^{(p)}).
\end{align*}

Let the input and output of the \( p \)-th layer be denoted by \( \mathbf{x}^{(p)} \) and \( \mathbf{h}^{(p)} \), respectively.  We set \( \mathbf{x}^{(1)} = \mathbf{x} \) and \( \mathbf{h} = f_{\boldsymbol{\theta}}(\mathbf{x}) = \mathbf{h}^{(L)} \). The parameters of the network are represented as 
\[
\boldsymbol{\theta} = \mathrm{col}(\mathbf{w}^{(1)}, \mathbf{b}^{(1)}, \ldots, \mathbf{w}^{(L)}, \mathbf{b}^{(L)}) \in \mathbb{R}^{P},
\]
where \( \mathbf{w}^{(p)} \) is the flattened version of the weight matrix \( \mathbf{W}^{(p)} \) for the \( p \)-th layer, and \( \mathbf{b}^{(p)} \) is the corresponding bias vector. The output confidence vector \( \mathbf{p} \in \mathbb{R}^{K} \) is defined as the softmax of the logits \( \mathbf{h} \), i.e.,

\[ p_i = \mathrm{softmax}(\mathbf{h})_i = \dfrac{\exp{(h_i)}}{\sum_{j=1}^{K} \exp{(h_j)}} \in (0; 1). \] 
The loss function is cross-entropy loss: 

\[ \ell(\mathbf{h}, \mathbf{y}) = \mathrm{CE}(\mathbf{p}, \mathbf{y}) = - \sum_{k=1}^{K} y_k \log p_k \in \mathbb{R}^{+}. \]

Here the following is denoted:
\begin{itemize}
    \item Matrix representation of the ReLU activation function:
    \[ \mathbf{D}^{(p)} = \mathrm{diag}([\mathbf{h}^{(p)} \geqslant \mathbf{0}]), \]
    \item The partial derivative of logits w.r.t. logits at $p$-th layer:
    \[ \mathbf{G}^{(p)} = \dfrac{\partial \mathbf{h}}{\partial \mathbf{h}^{(p)}} = \mathbf{W}^{(L)} \mathbf{D}^{(L-1)} \mathbf{W}^{(L-1)} \mathbf{D}^{(L-2)} \cdot \ldots \cdot \mathbf{D}^{(p)}, \]
    \item Its stacked version:
    \[ \mathbf{F}\T = \begin{pmatrix}
        (\mathbf{G}^{(1)})\T \otimes \mathbf{x}^{(1)} \\
        (\mathbf{G}^{(1)})\T \\ 
        \vdots \\
        (\mathbf{G}^{(L)})\T \otimes \mathbf{x}^{(L)} \\
        (\mathbf{G}^{(L)})\T \\ 
    \end{pmatrix}, \]

\end{itemize}

Consider the next epoch, in which all the parameters in the MLP are updated by
\begin{align*}
    \bar{\mathbf{W}}^{(p)} &= \mathbf{W}^{(p)} + \tau \Delta \mathbf{W}^{(p)},\\
    \bar{\mathbf{b}}^{(p)} &= \mathbf{b}^{(p)} + \tau \Delta \mathbf{b}^{(p)},\quad p=1,\ldots, L.
\end{align*}
Then conduct back propagation this time, we would have
\begin{align*}
    &\bar{\mathbf{h}}^{(p)} = \bar{\mathbf{W}}^{(p)} \bar{\mathbf{x}}^{(p)} + \bar{\mathbf{b}}^{(p)}, \\
    &\bar{\mathbf{x}}^{(p+1)} = \sigma(\bar{\mathbf{h}}^{(p)}),\\
    &\bar{\mathbf{D}}^{(p)} = \mathrm{diag}([\bar{\mathbf{h}}^{(p)} \geqslant \mathbf{0}]),\\
    &\bar{\mathbf{G}}^{(p)} = \bar{\mathbf{W}}^{(L)} \bar{\mathbf{D}}^{(L-1)} \bar{\mathbf{W}}^{(L-1)} \bar{\mathbf{D}}^{(L-2)} \cdot \ldots \cdot \bar{\mathbf{D}}^{(p)}.
\end{align*}
Equipped with these notations,
\begin{equation*}
    \bar{\mathbf{F}}\T = \begin{pmatrix}
        (\bar{\mathbf{G}}^{(1)})\T \otimes \bar{\mathbf{x}}^{(1)} \\
        (\bar{\mathbf{G}}^{(1)})\T \\ 
        \vdots \\
        (\bar{\mathbf{G}}^{(L)})\T \otimes \bar{\mathbf{x}}^{(L)} \\
        (\bar{\mathbf{G}}^{(L)})\T \\ 
    \end{pmatrix}.
\end{equation*}
Our goal is to bound
\begin{equation*}
    \frac{\Vert\bar{\mathbf{F}}\Vert_F^2}{\Vert \mathbf{F}\Vert_F^2}.
\end{equation*}

\subsection*{Derivation}

Firstly, we need to add an assumption:

For any epoch t, the distance between parament $\theta_t$ and optimum parament $\theta^*$ is bounded by a small value $g$, an assumption widely used in NTK,

\begin{equation*}
    {\Vert \mathbf{\theta_t - \theta^*}\Vert_2} < g.
\end{equation*}

In machine unlearning, the data to be unlearned constitutes only a small portion of the original training set. As a result, the difference between model parament before and after unlearning is minimal, validating the assumption.

P.S. Our experiments show that the gradient of the original model on the training set after unlearning is negligible, further confirming the validity of this assumption.

Since we assume that the distance between \(\mathbf{F}_t\) and \(\mathbf{F}^*\) is small, meaning that \(\mathbf{F}_t\) is gradually approaching \(\mathbf{F}^*\) during the iteration process, then the two quantities can be related as follows: 
   \[
   \Vert \mathbf{F}_t \Vert_F^2 = \Vert \mathbf{F}^* \Vert_F^2 + 2 \langle \mathbf{F}_t - \mathbf{F}^*, \mathbf{F}^* \rangle + \Vert \mathbf{F}_t - \mathbf{F}^* \Vert_F^2.
   \]

Thus,
   \[
   \frac{\Vert \mathbf{F}_t \Vert_F^2}{\Vert \mathbf{F}^* \Vert_F^2} = 1 + \frac{2 \langle \mathbf{F}_t - \mathbf{F}^*, \mathbf{F}^* \rangle}{\Vert \mathbf{F}^* \Vert_F^2} + \frac{\Vert \mathbf{F}_t - \mathbf{F}^* \Vert_F^2}{\Vert \mathbf{F}^* \Vert_F^2}.
   \]

When \(\mathbf{F}_t\) is close to \(\mathbf{F}^*\), \(\langle \mathbf{F}_t - \mathbf{F}^*, \mathbf{F}^* \rangle\) is relatively small, Thus \(\frac{\Vert \mathbf{F}_t \Vert_F^2}{\Vert \mathbf{F}^* \Vert_F^2}\) equals to \(1 + \frac{\Vert \mathbf{F}_t - \mathbf{F}^* \Vert_F^2}{\Vert \mathbf{F}^* \Vert_F^2}\).

Our goal is to bound

\begin{equation*}
    \Vert \mathbf{F_t - F^*}\Vert_F^2,
\end{equation*}

note that since loss function is not considered
    
\begin{equation*}
    \Vert \mathbf{F^*}\Vert_F^2 \neq 0,
\end{equation*}

\[( \mathbf{F_t - F^*})\T = \begin{pmatrix}
        (\mathbf{G}^{(1)})\T \otimes \mathbf{x}^{(1)} - (\mathbf{G}^{(1)*})\T \otimes \mathbf{x}^{(1)*} \\
        (\mathbf{G}^{(1)} - \mathbf{G}^{(1)*})\T \\ 
        \vdots \\
        (\mathbf{G}^{(L)})\T \otimes \mathbf{x}^{(L)} - (\mathbf{G}^{(L)*})\T \otimes \mathbf{x}^{(L)*} \\
        (\mathbf{G}^{(L)} - \mathbf{G}^{(L)*})\T \\ 
    \end{pmatrix}. \]

Then, using the property that squared spectral norm of vertically-stacked matrices is less or equal to the sum of their squared spectral norms (it is easy to observe), we get:
\begin{align*} 
&\quad\| \mathbf{F_t - F^*} \|_2^2 \\
&\leqslant \sum\limits_{p=1}^{L} \Big( 
\| (\mathbf{G}^{(p)})^\top \otimes \mathbf{x}^{(p)} 
  - (\mathbf{G}^{(p)*})^\top \otimes \mathbf{x}^{(p)*} \|_2^2 \\
&\quad + \| (\mathbf{G}^{(p)} - \mathbf{G}^{(p)*})^\top \|_2^2 
\Big).
\end{align*}

The Kronecker product satisfies the distributive property over matrix addition,

\begin{align*}
     \| \mathbf{F_t - F^*} \|_2^2 
     &\leqslant \sum\limits_{p=1}^{L} \Big( 
     \| (\mathbf{G}^{(p)})^\top \otimes \mathbf{x}^{(p)} 
     - (\mathbf{G}^{(p)*})^\top \otimes \mathbf{x}^{(p)} \\
     &\quad + (\mathbf{G}^{(p)*})^\top \otimes \mathbf{x}^{(p)} 
     - (\mathbf{G}^{(p)*})^\top \otimes \mathbf{x}^{(p)*} \|_2^2 \\
     &\quad + \| (\mathbf{G}^{(p)} - \mathbf{G}^{(p)*})^\top \|_2^2 
     \Big) \\
     &\leqslant \sum\limits_{p=1}^{L} \Big( 
     \| (\mathbf{G}^{(p)} - \mathbf{G}^{(p)*})^\top \otimes \mathbf{x}^{(p)} \\
     &\quad + (\mathbf{G}^{(p)*})^\top \otimes (\mathbf{x}^{(p)} - \mathbf{x}^{(p)*}) \|_2^2 \\
     &\quad + \| (\mathbf{G}^{(p)} - \mathbf{G}^{(p)*})^\top \|_2^2 
     \Big).
\end{align*}

By the triangle inequality and the sub-additivity of the spectral norm,

\begin{align*}
     \| \mathbf{F_t - F^*} \|_2^2 
     &\leqslant \sum\limits_{p=1}^{L} \Big( 
     \| (\mathbf{G}^{(p)} - \mathbf{G}^{(p)*})^\top \otimes \mathbf{x}^{(p)} \|_2^2 \\
     &\quad + \| (\mathbf{G}^{(p)*})^\top \otimes (\mathbf{x}^{(p)} - \mathbf{x}^{(p)*}) \|_2^2 \\
     &\quad + \| (\mathbf{G}^{(p)} - \mathbf{G}^{(p)*})^\top \|_2^2 
     \Big).
\end{align*}

Spectral norm of the Kronecker matrix product is equal to their ordinary product norm, 

\begin{align*}
\| \mathbf{F_t - F^*} \|_2^2 
&\leqslant \sum\limits_{p=1}^{L} \Bigg( 
\| \mathbf{G}^{(p)} - \mathbf{G}^{(p)*} \|_2^2 \cdot \left( \| \mathbf{x}^{(p)} \|_2^2 + 1 \right) \\
&\quad + \| (\mathbf{G}^{(p)*})^\top \|_2^2 \cdot \| (\mathbf{x}^{(p)} - \mathbf{x}^{(p)*}) \|_2^2
\Bigg).
\end{align*}

In this problem, we assume that the distance between the network parameters \(\mathbf{\theta_t}\) and the optimal parameters \(\mathbf{\theta^*}\) is bounded by a small value \(g\), and the goal now is to derive an upper bound for the difference in the intermediate values \(\| \mathbf{x}^{(p)} - \mathbf{x}^{(p)*} \|_2^2\)  and $\| \mathbf{G}^{(p)} - \mathbf{G}^{(p)*} \|_2^2 $ for each layer of the MLP.

Bound on \(\| \mathbf{x}^{(p)} - \mathbf{x}^{(p)*} \|_2^2\):

The correct expression for the difference in activations at layer \( p \) is:

\[
\mathbf{x}^{(p)} - \mathbf{x}^{(p)*} = \sigma\left( \mathbf{W}^{(p)}  \mathbf{x}^{(p-1)} -  \mathbf{W}^{(p)*}\mathbf{x}^{(p-1)*}  + \mathbf{b}^{(p)} - \mathbf{b}^{(p)*} \right),
\]

We define the activation function as being Lipschitz continuous with a constant $\gamma_z$. For ReLU function we used, $\gamma_z = 1.$ The difference in activations can be bounded as:

\begin{align*}
    \| \mathbf{x}^{(p)} - \mathbf{x}^{(p)*} \|_2^2 
    &\le 2\gamma_z^2\left[\|\mathbf{W}^{(p)}  \mathbf{x}^{(p-1)} -  \mathbf{W}^{(p)*}\mathbf{x}^{(p-1)*}\|^2_2 \right. \\
    &\quad \left. + \|\mathbf{b}^{(p)} - \mathbf{b}^{(p)*}\|_2^2\right] \\
    &\le 4\gamma_z^2\left[\|\mathbf{W}^{(p)}   -  \mathbf{W}^{(p)*}\|^2_2\|\mathbf{x}^{(p-1)*}\|_2^2 \right. \\
    &\quad + \|\mathbf{W}^{(p)}\|_2^2\|\mathbf{x}^{(p-1)}-\mathbf{x}^{(p-1)*}\|^2_2 \\
    &\quad \left. + \|\mathbf{b}^{(p)} - \mathbf{b}^{(p)*}\|_2^2\right].
\end{align*}

Assuming that the $\|\mathbf{x}^{(t)}\|_2^2$ is bounded by $C_x^2$, and $\|\mathbf{W}^{(p)}\|_2^2$ is bounded by $C_W^2$, we can use the fact that the perturbation in the parameters is bounded by \( g \), and propagate this bound through the layers.  Since$\|\mathbf{x}^{(1)} - \mathbf{x}^{(1)*} \|_2^2 = 0$, the bound on the difference in activations at layer \( p \) becomes:

\begin{align*}
     \|\mathbf{x}^{(p)} - \mathbf{x}^{(p)*} \|_2^2 
     &\le 4\gamma_z^2\left[g^2 C_x^2+g^2+C_W^2\|\mathbf{x}^{(p-1)} - \mathbf{x}^{(p-1)*} \|_2^2\right] \\
     &\le 4\gamma_z^2g^2\left[ C_x^2+1\right] \left[\frac{(4\gamma_z^2C_W^2)^{p-1} - 1}{4\gamma_z^2C_W^2 - 1}\right] \\
     &\quad + (4\gamma_z^2C_W^2)^{p-1}\|\mathbf{x}^{(1)} - \mathbf{x}^{(1)*} \|_2^2 \\
     &= 4\gamma_z^2g^2\left[ C_x^2+1\right] \left[\frac{(4\gamma_z^2C_W^2)^{p-1} - 1}{4\gamma_z^2C_W^2 - 1}\right].
\end{align*}

Bound on $\| \mathbf{G}^{(p)} - \mathbf{G}^{(p)*} \|_2^2 $:

For the gradients, we consider the difference in gradients at layer \( p \):

\[
\mathbf{G}^{(p)} - \mathbf{G}^{(p)*} = \frac{\partial \mathbf{h}}{\partial \mathbf{h}^{(p)}}  -  \frac{\partial \mathbf{h^*}}{\partial \mathbf{h}^{(p)*}},
\]
where \( \frac{\partial \mathbf{h}}{\partial \mathbf{h}^{(p)}} = \mathbf{W}^{(L)} \mathbf{D}^{(L-1)} \mathbf{W}^{(L-1)} \mathbf{D}^{(L-2)} \cdots \mathbf{D}^{(p)} \) is the chain of derivatives through the layers.

\begin{equation*}
    \mathbf{G}^{(p)}=\mathbf{G}^{(p+1)}\mathbf{W}^{(p+1)}\mathbf{D}^{(p)},\quad \mathbf{G}^{(p)\ast}=\mathbf{G}^{(p+1)\ast}\mathbf{W}^{(p+1)\ast}\mathbf{D}^{(p)\ast},
\end{equation*}

\begin{align*}
&\quad \|\mathbf{G}^{(p)} - \mathbf{G}^{(p)*}\|^2 \\
&= \|\mathbf{G}^{(p+1)}\mathbf{W}^{(p+1)}\mathbf{D}^{(p)} - \mathbf{G}^{(p+1)\ast}\mathbf{W}^{(p+1)\ast}\mathbf{D}^{(p)\ast}\|^2 \\
&\le 2\|\mathbf{G}^{(p+1)}\|^2 \cdot \|\mathbf{W}^{(p+1)}\mathbf{D}^{(p)} - \mathbf{W}^{(p+1)\ast}\mathbf{D}^{(p)\ast}\|^2 \\
&\quad + 2\|\mathbf{W}^{(p+1)\ast}\mathbf{D}^{(p)\ast}\|^2 \cdot \|\mathbf{G}^{(p+1)} - \mathbf{G}^{(p+1)\ast}\|^2.
\end{align*}

Assuming that the $\|\mathbf{G}^{(p+1)}\|_2^2$ is bounded by $C_G^2$, and $\|\mathbf{W}^{(p+1)\ast}\mathbf{D}^{(p)\ast}\|_2^2$ is bounded by $C_W^2$, similarly:

\begin{align*}
&\quad\|\mathbf{G}^{(p)} - \mathbf{G}^{(p)*} \|^2\\
    &\le 2C_G^2\|\mathbf{W}^{(p+1)}\mathbf{D}^{(p)}-\mathbf{W}^{(p+1)\ast}\mathbf{D}^{(p)\ast}\|^2\\
    &\quad+2C_W^2\|\mathbf{G}^{(p+1)}-\mathbf{G}^{(p+1)\ast}\|^2\\
    &\le 2C_G^2g^2+2C_W^2\|\mathbf{G}^{(p+1)}-\mathbf{G}^{(p+1)\ast}\|^2\\
    &\le 2C_G^2g^2\frac{(2C_W^2)^{L-p} - 1}{2C_W^2 - 1}+(2C_W^2)^{L-p}\|\mathbf{G}^{(L)}-\mathbf{G}^{(L)\ast}\|^2\\
    &\le 2C_G^2g^2\frac{(2C_W^2)^{L-p} - 1}{2C_W^2 - 1}+(2C_W^2)^{L-p}g^2.
\end{align*}

The original inequality is given by:

\begin{align*}
\| \mathbf{F_t - F^*} \|_2^2 
    &\leq \sum_{p=1}^{L} \left( \| \mathbf{G}^{(p)} - \mathbf{G}^{(p)*} \|_2^2 \cdot (\| \mathbf{x}^{(p)} \|_2^2 + 1) \right. \\
    &\quad \left. + \|(\mathbf{G}^{(p)*})^\top \|_2^2 \cdot \|(\mathbf{x}^{(p)} - \mathbf{x}^{(p)*}) \|_2^2 \right).
\end{align*}

Substitute \( \| \mathbf{G}^{(p)} - \mathbf{G}^{(p)*} \|_2^2 \):
\[
\| \mathbf{G}^{(p)} - \mathbf{G}^{(p)*} \|^2 \leq 2C_G^2g^2 \frac{(2C_W^2)^{L-p} - 1}{2C_W^2 - 1} + (2C_W^2)^{L-p}g^2.
\]
\begin{align*}
    &\quad\| \mathbf{G}^{(p)} - \mathbf{G}^{(p)*} \|_2^2 \cdot (\| \mathbf{x}^{(p)} \|_2^2 + 1) \\
    &\leq [2C_G^2g^2 \frac{(2C_W^2)^{L-p} - 1}{2C_W^2 - 1} + (2C_W^2)^{L-p}g^2 ](\| \mathbf{x}^{(p)} \|_2^2 + 1).
\end{align*}

Substitute \( \| \mathbf{x}^{(p)} - \mathbf{x}^{(p)*} \|_2^2 \):
\[
\|\mathbf{x}^{(p)} - \mathbf{x}^{(p)*} \|_2^2 \leq 4\gamma_z^2g^2 \left[C_x^2 + 1\right] \frac{(4\gamma_z^2C_W^2)^{p-1} - 1}{4\gamma_z^2C_W^2 - 1}.
\]

\begin{align*}
    &\quad\|(\mathbf{G}^{(p)*})^\top\|_2^2 \cdot \|(\mathbf{x}^{(p)} - \mathbf{x}^{(p)*}) \|_2^2 \\
    &\leq \|(\mathbf{G}^{(p)*})^\top\|_2^2 \cdot 4\gamma_z^2g^2 \left[C_x^2 + 1\right] \cdot \frac{(4\gamma_z^2C_W^2)^{p-1} - 1}{4\gamma_z^2C_W^2 - 1}.
\end{align*}

We assume the following bounds:
\[
\|\mathbf{x}^{(p)}\|_2^2 \leq C_x^2, \quad \|(\mathbf{G}^{(p)*})^\top\|_2^2 \leq C_G^2.
\]

Substitute these bounds into the inequality:
\begin{align*}
    &\quad\| \mathbf{F_t - F^*} \|_2^2 \\
    &\leq  \sum_{p=1}^L \Bigg\{
    \left[2C_G^2g^2 \frac{(2C_W^2)^{L-p} - 1}{2C_W^2 - 1} + (2C_W^2)^{L-p}g^2\right](C_x^2 + 1) \\
    &\quad + 4C_G^2\gamma_z^2g^2 (C_x^2 + 1) \frac{(4\gamma_z^2C_W^2)^{p-1} - 1}{4\gamma_z^2C_W^2 - 1}
    \Bigg\} \\
    &\leq  \sum_{p=1}^L \Bigg\{
    \left[2C_G^2 \frac{(2C_W^2)^{L-p} - 1}{2C_W^2 - 1} + (2C_W^2)^{L-p} \right. \\
    &\quad \left. + 4C_G^2\gamma_z^2\frac{(4\gamma_z^2C_W^2)^{p-1} - 1}{4\gamma_z^2C_W^2 - 1}\right]g^2(C_x^2 + 1)
    \Bigg\}.
\end{align*}

since
\[
\| \mathbf{A} \|_F \leq \sqrt{\text{rank}(\mathbf{A})} \| \mathbf{A} \|_2,
\]

we get

\[
\| \mathbf{F_t - F^*} \|_F^2 \leqslant \sqrt{\text{rank}(\mathbf{F})} \| \mathbf{F_t - F^*} \|_2^2.
\]

Assuming that the layers of the MLP are constrained to have a width of less than $w$, 
\[
\text{rank}(\mathbf{F}) = max(w^2, 2L).
\]
$\text{rank}(\mathbf{F})$ is determined by the largest width or depth of the MLP. So,

\begin{align*}
    &\quad\| \mathbf{F_t - F^*} \|_F^2 \\
    &\leq  \sqrt{max(w^2, 2L)}\sum_{p=1}^L \Bigg\{
    \left[2C_G^2 \frac{(2C_W^2)^{L-p} - 1}{2C_W^2 - 1} + (2C_W^2)^{L-p} \right. \\
    &\quad \left. + 4C_G^2\gamma_z^2\frac{(4\gamma_z^2C_W^2)^{p-1} - 1}{4\gamma_z^2C_W^2 - 1}\right]g^2(C_x^2 + 1)
    \Bigg\}.
\end{align*}

When $g$ is close to 0, $\| \mathbf{F_t - F^*} \|_F^2$ is close to 0.

The analysis of the bound expression reveals that the stability of the update contribution coefficient is influenced by several factors, including the MLP model’s depth and width, as well as the absolute magnitudes of the parameters, activations, and gradients at each layer. Of these, the model's depth has the most significant impact.

\section*{Appedix B}
% %-------------------------------------------------------------------------------

We can also prove that in VFL, performing unlearning only for the client requesting unlearning is insufficient to achieve the overall unlearning objective.

\subsection*{Objective}

We aim to prove that when only a subset of clients participates in the unlearning process, the unlearning operation cannot achieve approximate unlearning. To achieve approximate unlearning, we need to ensure that the parameters after unlearning are very close to the parameters of a retrained model. This can be done by controlling the gradient residual.

\subsection*{Composition of Prediction $P$}

In vertical federated learning, the prediction $P$ is calculated based on the intermediate values of all client output.

\[
P = Global(\sum_{i=1}^{k}h_i(\theta_i, \points_i)).
\]

where:
- $h_i(\theta_i, \points_i)$ is the intermediate output of the $i$-th client.
- $k$ is the number of clients participating in federated learning.
- $Global()$ is the global model located in the activate party.

\subsection*{Gradient Representation}

To derive the requirements for approximate unlearning, we first examine the gradient of the logistic regression model's loss function with respect to the model parameters. Using the chain rule, the gradient can be written as the product of the following three terms:

\[
\nabla_{\theta} \mathcal{L} = \frac{\partial \mathcal{L}}{\partial P} \cdot \frac{\partial P}{\partial h} \cdot \frac{\partial h}{\partial \theta}.
\]

where:
\begin{itemize}
    \item $\frac{\partial \mathcal{L}}{\partial P}$: The derivative of the loss function (e.g., cross-entropy loss) with respect to the prediction $P$.
    \item $\frac{\partial P}{\partial h}$: The derivative of the prediction $P$ with respect to each client's intermediate value $h$.
    \item $\frac{\partial h}{\partial \theta}$: The derivative of each client's intermediate value $h$ with respect to the client's parameters $\theta$.
\end{itemize}

Our goal is to make the gradient $\nabla_{\theta} \mathcal{L}$ close to zero, which would achieve approximate unlearning. The main factor affecting the gradient is the first term, $\frac{\partial \mathcal{L}}{\partial P}$, i.e., the derivative of the loss function with respect to the prediction $P$. This term determines the size of the final gradient. If the prediction after unlearning is close to the retrained prediction, this term will be close to zero, which would bring the gradient close to zero. So, to achieve the approximate unlearning, the difference between the unlearned predictions and the retrained predictions should be small

\[
|\tilde{P} - \tilde{P}^*| = |Global(\sum_{i=1}^{k}h_i(\tilde{\theta}_i, \perts_i)) - Global(\sum_{i=1}^{k}h_i(\tilde{\theta}^*_i, \perts_i))|,
\]

where $\perts$ is the training set after the unlearning request,  $\tilde{\theta}_i$ is the unlearned parament of i-th client and $\tilde{\theta}^*_i$ is the retrained parament.

In AggVFL, the global model is not trainable, so the summed predictions from the clients should be close to that of the retrained model's output. To achieve the approximate unlearning, we want that there exists a small value $\epsilon > 0$ that

\[
|\sum_{i=1}^{k}h_i(\tilde{\theta}_i, \perts_i) - \sum_{i=1}^{k}h_i(\tilde{\theta}^*_i, \perts_i)| < \epsilon.
\]

Suppose overall there are k features, among them the first j features take part in the unlearning process 

\[
\tilde{\theta} = [\tilde{\theta}_1^T,...,\tilde{\theta}_j^T,\theta_{j+1}^{*T}, ..., \theta_{k}^{*T}]^T.
\]

The unchanged intermediate client output with features denoted as $\theta_{i}^{*T}$, are different from the retrained value with a distance $|h_U(\theta_i^\ast, \perts_i)|$ caused by unlearned feature.

\begin{align*}
&\quad|\sum_{i=1}^{k}h_i(\theta_i^\ast, \perts_i) - \sum_{i=1}^{k}h_i(\tilde{\theta}^*_i, \perts_i)| \\
&=|\sum_{i=1}^{k}h_i(\theta_i^\ast, \perts_i)-\sum_{i=1}^{k}h_i(\theta^*_i, \points_i)+\sum_{i=1}^{k}h_i(\theta^*_i, \points_i) - \sum_{i=1}^{k}h_i(\tilde{\theta}^*_i, \perts_i)|\\
&=|h_U(\theta^\ast, \perts)+\epsilon|.
\end{align*}

The approximate unlearning target below is hard to achieve when $j$ is small because $h$ has a much larger dimension, the unlearned features' gap can hardly be covered by $j$ features. 

\begin{align*}
&\quad|\sum_{i=1}^{k}h_i(\tilde{\theta}_i, \perts_i) - \sum_{i=1}^{k}h_i(\tilde{\theta}^*_i, \perts_i)| \\
&= |\sum_{i=1}^{j}h_i(\tilde{\theta}_i, \perts_i) - \sum_{i=1}^{j}h_i(\theta^*_i, \perts_i) +\sum_{i=1}^{k}h_i(\theta_i^\ast, \perts_i) - \sum_{i=1}^{k}h_i(\tilde{\theta}^*_i, \perts_i)|\\
& =|\sum_{i=1}^{j}h_i(\tilde{\theta}_i, \perts_i) - \sum_{i=1}^{j}h_i(\theta^*_i, \perts_i) +h_U(\theta^\ast, \perts)+\epsilon|.
\end{align*}

Besides, we can interpret the prediction difference in terms of vector spaces. Let \( A = \sum_{i=1}^{j} h_i(\tilde{\theta}_i, \perts_i) \) represent the modified outputs, and \( B = \sum_{i=j+1}^{k} h_i(\theta_i^*, \perts_i) \) represent the unchanged outputs. The difference between the modified and retrained outputs is:

\[
|A - B| = |h_U(\theta^*, \perts) + \epsilon|.
\]

When \( j \) is small, the unlearning operation can only change a small part of the output space. Since \( h_U(\theta^*, \perts) \) involves contributions from the unchanged clients' outputs, and their rank may be high, the modification of \( j \) clients' outputs is insufficient to reduce the overall error. Hence, the difference cannot be small enough to achieve approximate unlearning.

The approximate unlearning target is difficult to achieve when \( j \) is small because the change induced by unlearning only a few clients cannot cover the large-dimensional gap caused by the unlearned features. The rank of the output matrix will not be sufficiently reduced, and the residual error will remain significant, preventing the global prediction from matching the retrained model's output. Therefore, unlearning by a subset of clients is insufficient for achieving approximate unlearning.

% %-------------------------------------------------------------------------------
\section*{Appedix C}
% %-------------------------------------------------------------------------------

The results are shown in Figure ~\ref{feature extra} and ~\ref{info extra}.

\begin{figure*}[h]  % 使用 figure* 环境，使图片跨越两栏
    \centering

    % 第二列
    \begin{minipage}[b]{0.23\textwidth}
        \centering
        \includegraphics[width=\textwidth]{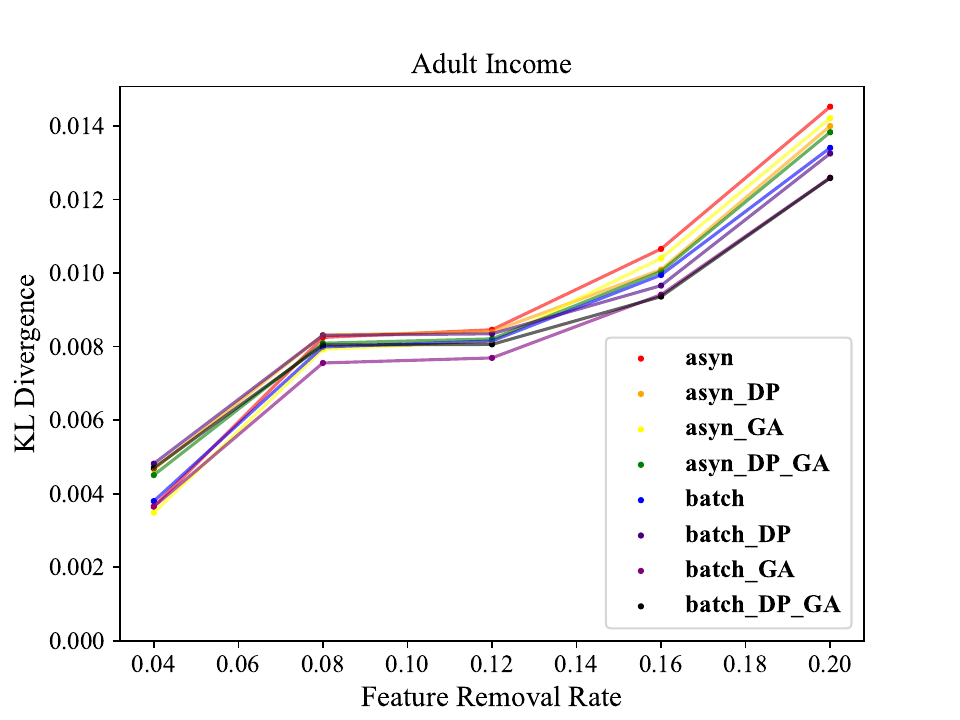}
        % \caption{a2}
    \end{minipage}
    \hfill
    \begin{minipage}[b]{0.23\textwidth}
        \centering
        \includegraphics[width=\textwidth]{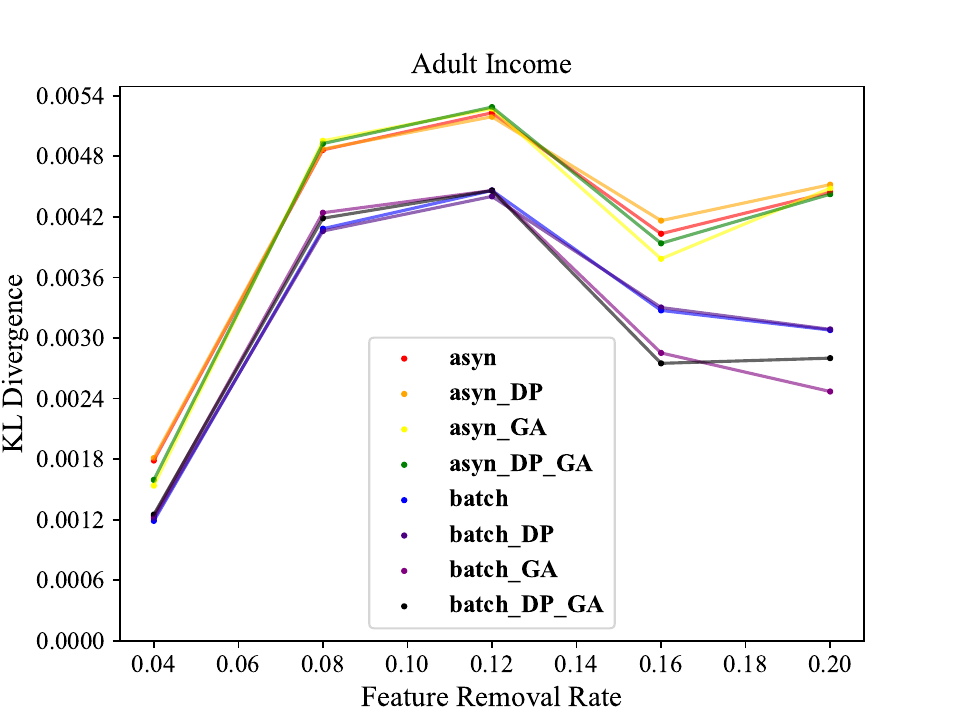}
        % \caption{子图2}
    \end{minipage}
    \hfill
    \begin{minipage}[b]{0.23\textwidth}
        \centering
        \includegraphics[width=\textwidth]{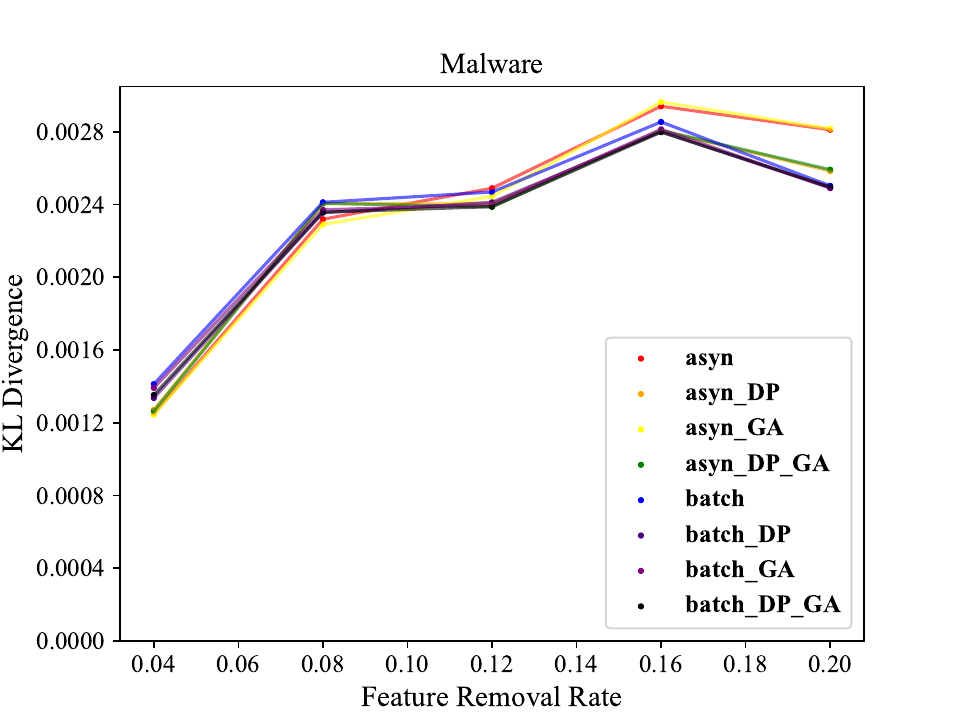}
        % \caption{子图2}
    \end{minipage}
    \hfill
    \begin{minipage}[b]{0.23\textwidth}
        \centering
        \includegraphics[width=\textwidth]{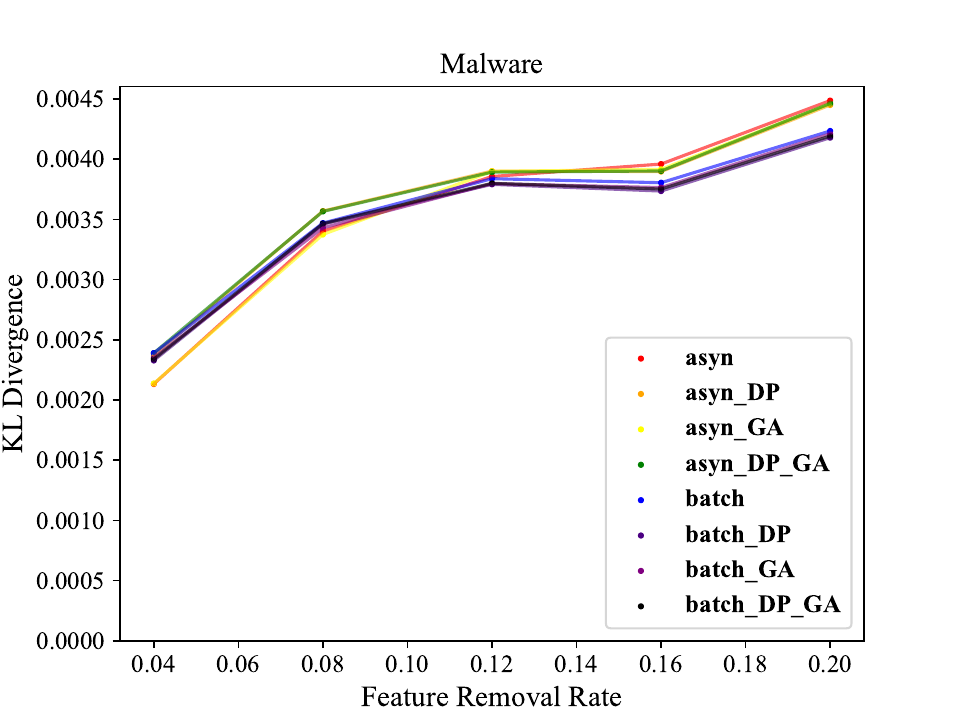}
        % \caption{子图2}
    \end{minipage}

    \vspace{0.5cm}  % 行间距

    % 第四列
    \begin{minipage}[b]{0.23\textwidth}
        \centering
        \includegraphics[width=\textwidth]{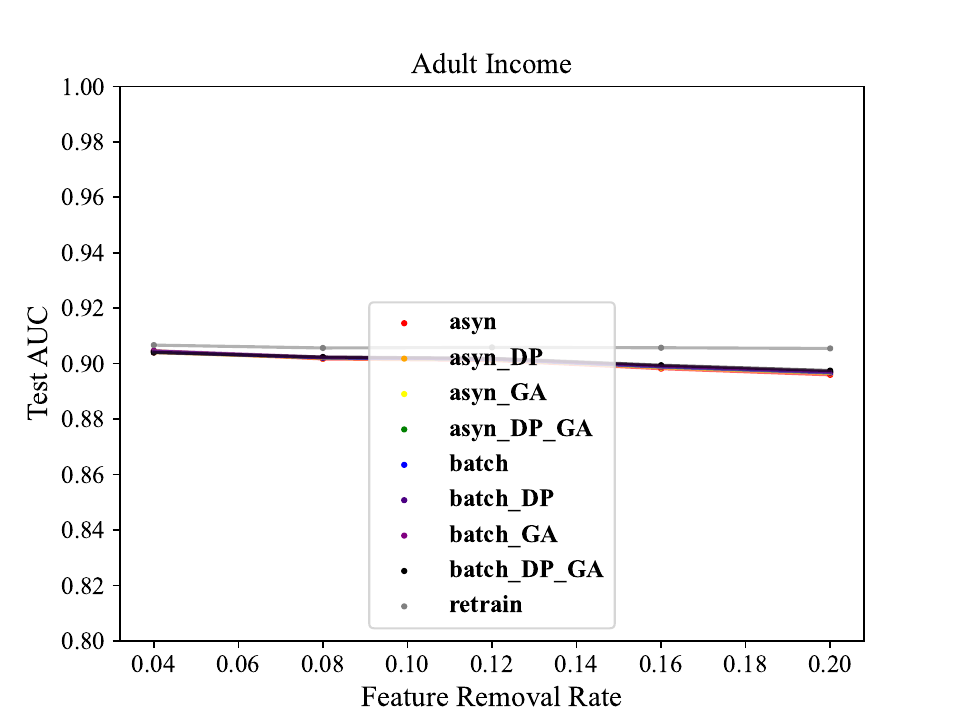}
        % \caption{a4}
    \end{minipage}
    \hfill
    \begin{minipage}[b]{0.23\textwidth}
        \centering
        \includegraphics[width=\textwidth]{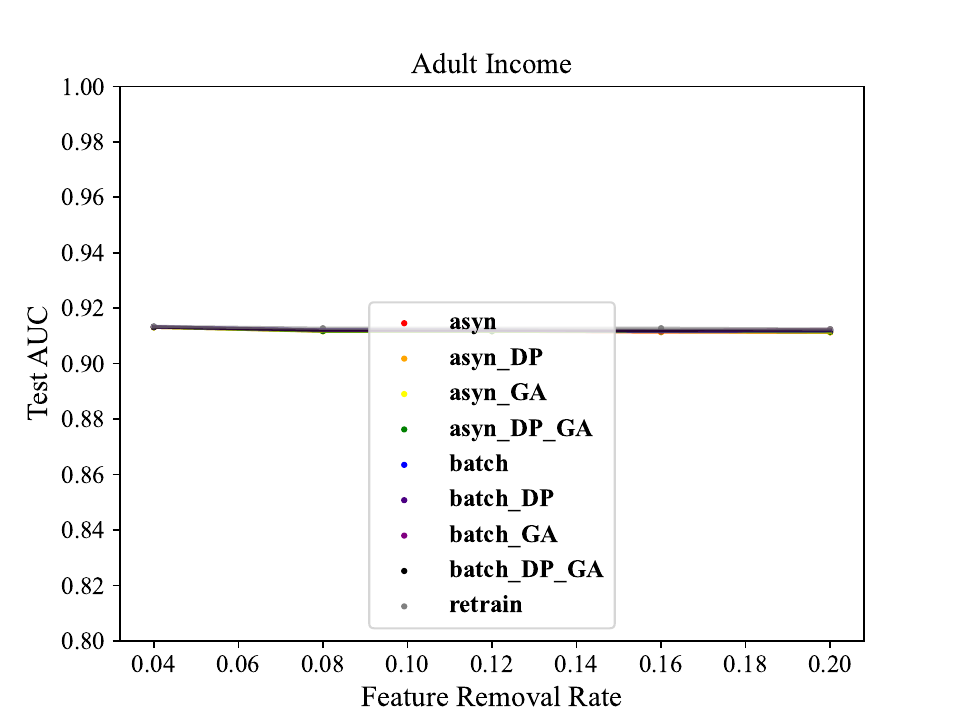}
        % \caption{子图4}
    \end{minipage}
    \hfill
    \begin{minipage}[b]{0.23\textwidth}
        \centering
        \includegraphics[width=\textwidth]{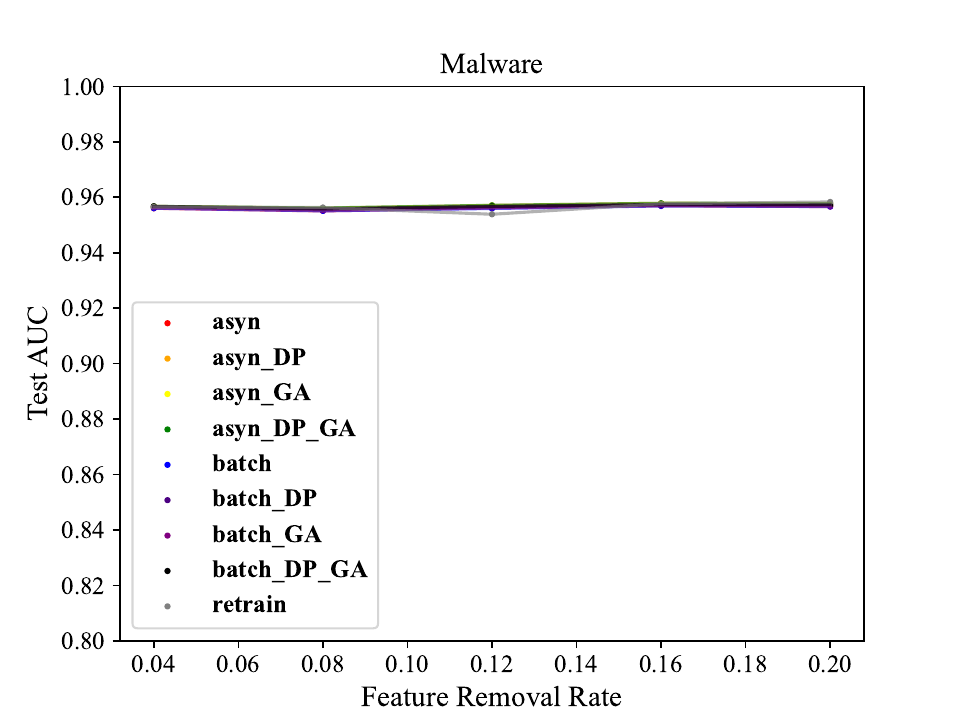}
        % \caption{子图4}
    \end{minipage}
    \hfill
    \begin{minipage}[b]{0.23\textwidth}
        \centering
        \includegraphics[width=\textwidth]{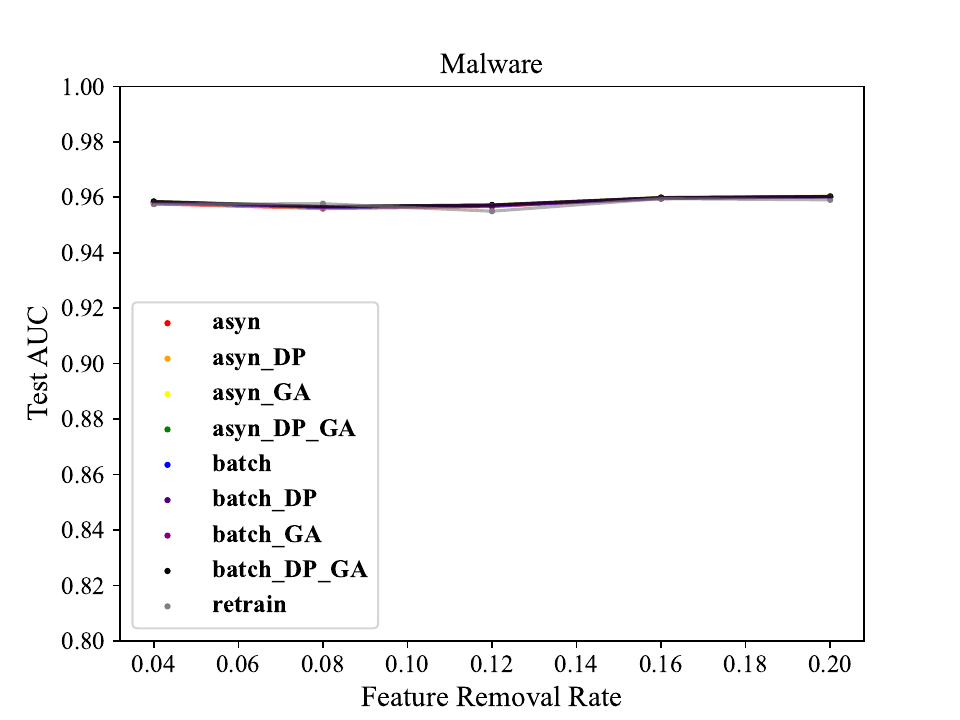}
        % \caption{子图4}
    \end{minipage}

    \vspace{0.5cm}  % 行间距

    % 第五列
    \begin{minipage}[b]{0.23\textwidth}
        \centering
        \includegraphics[width=\textwidth]{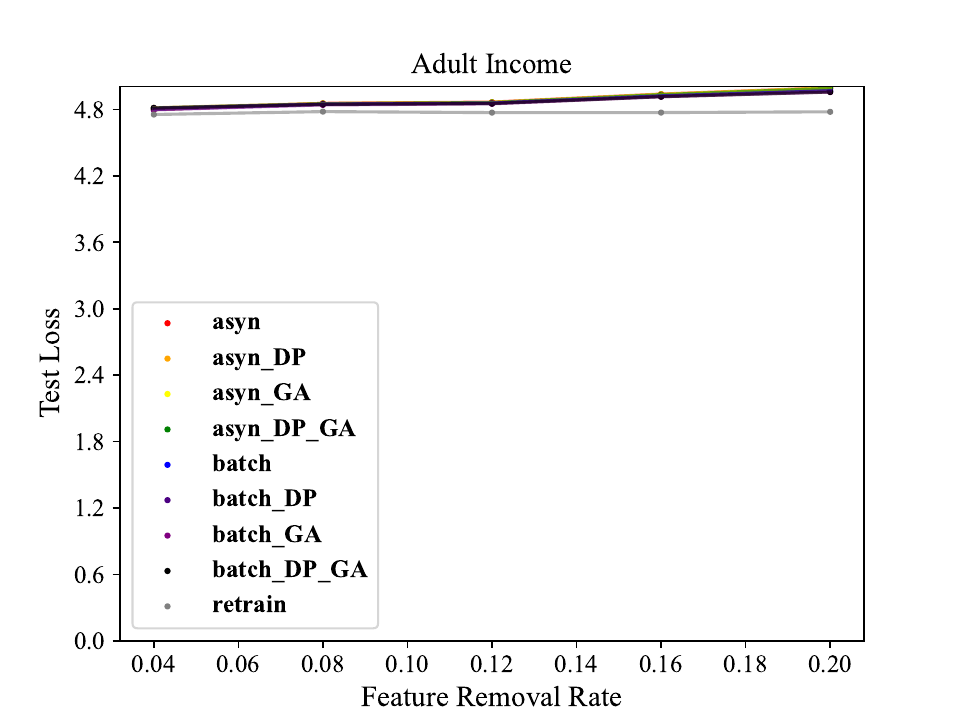}
        % \caption{a5}
    \end{minipage}
    \hfill
    \begin{minipage}[b]{0.23\textwidth}
        \centering
        \includegraphics[width=\textwidth]{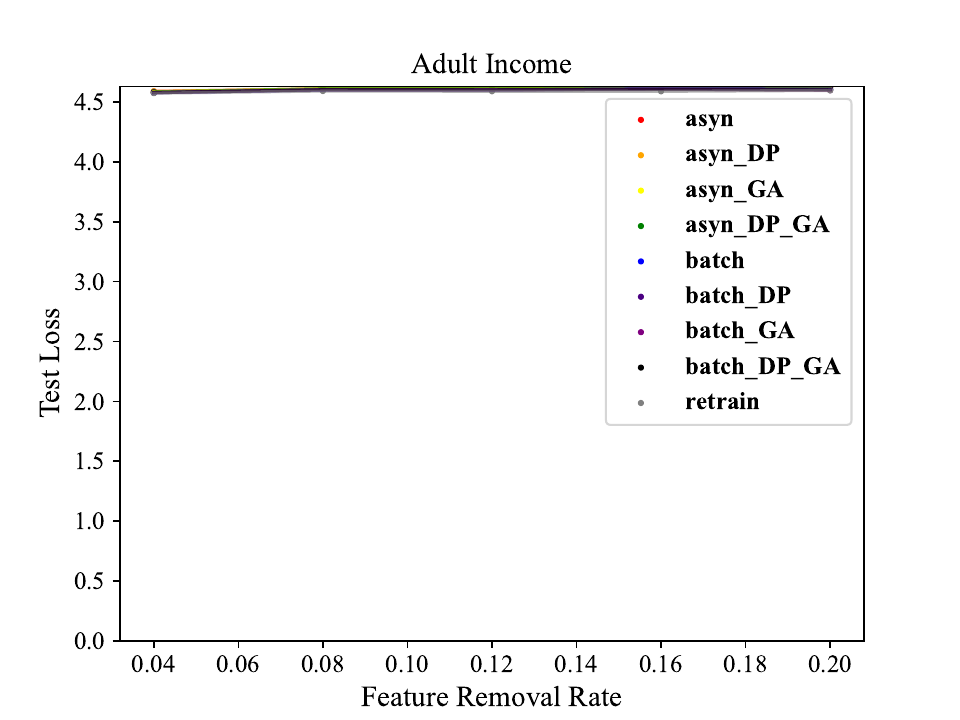}
        % \caption{子图5}
    \end{minipage}
    \hfill
    \begin{minipage}[b]{0.23\textwidth}
        \centering
        \includegraphics[width=\textwidth]{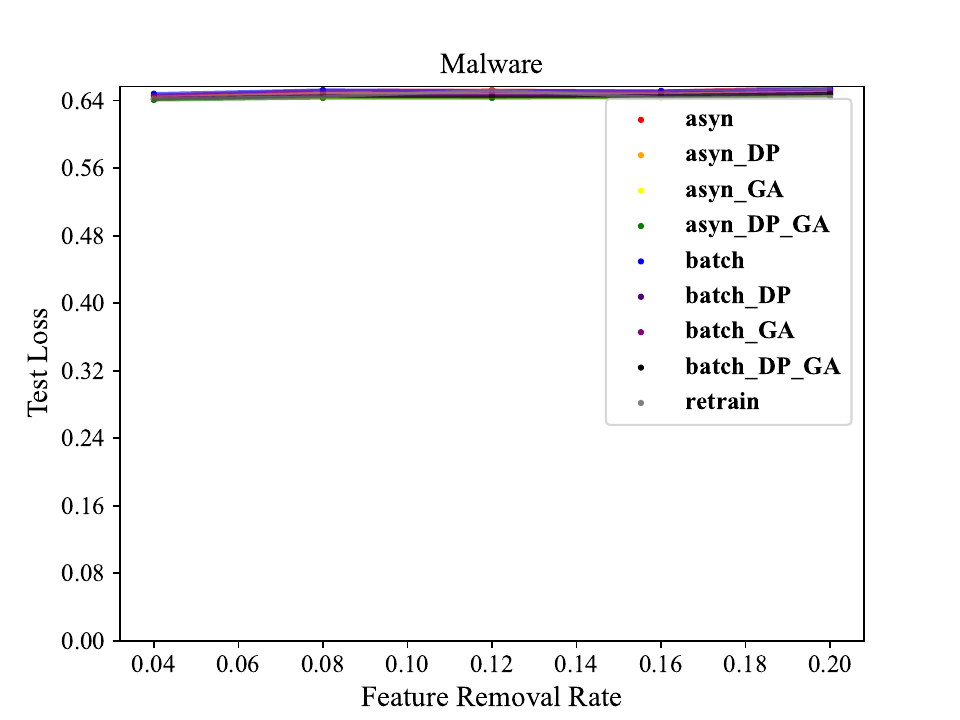}
        % \caption{子图5}
    \end{minipage}
    \hfill
    \begin{minipage}[b]{0.23\textwidth}
        \centering
        \includegraphics[width=\textwidth]{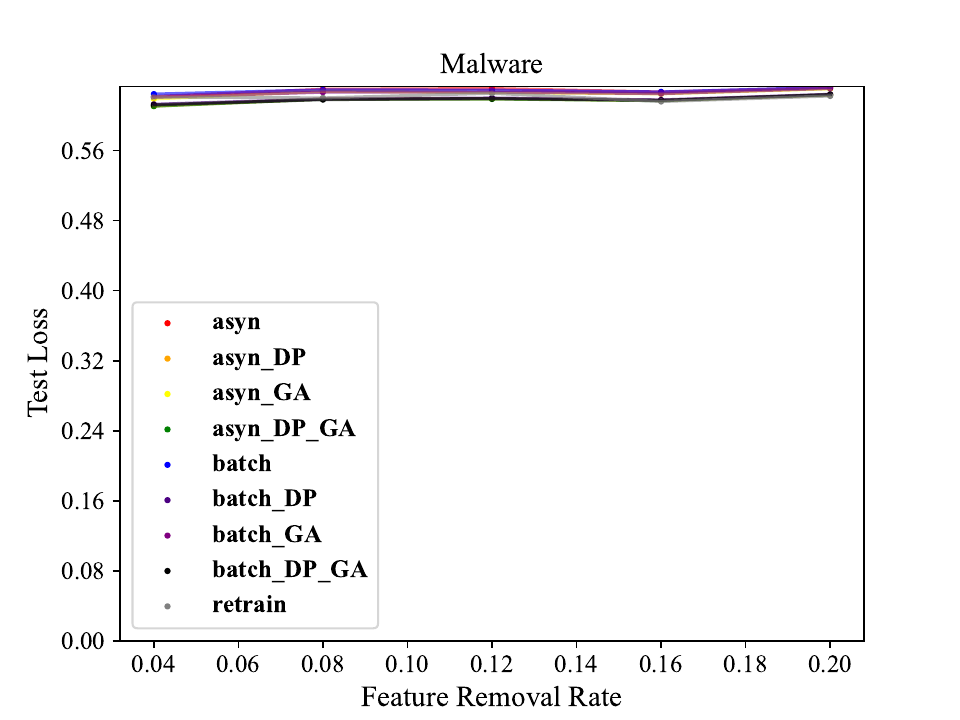}
        % \caption{子图5}
    \end{minipage}

    \caption{Feature Removal Results.}
    \label{feature extra}
\end{figure*}

\begin{figure*}[h]  % 使用 figure* 环境，使图片跨越两栏
    \centering

    % 第二列
    \begin{minipage}[b]{0.23\textwidth}
        \centering
        \includegraphics[width=\textwidth]{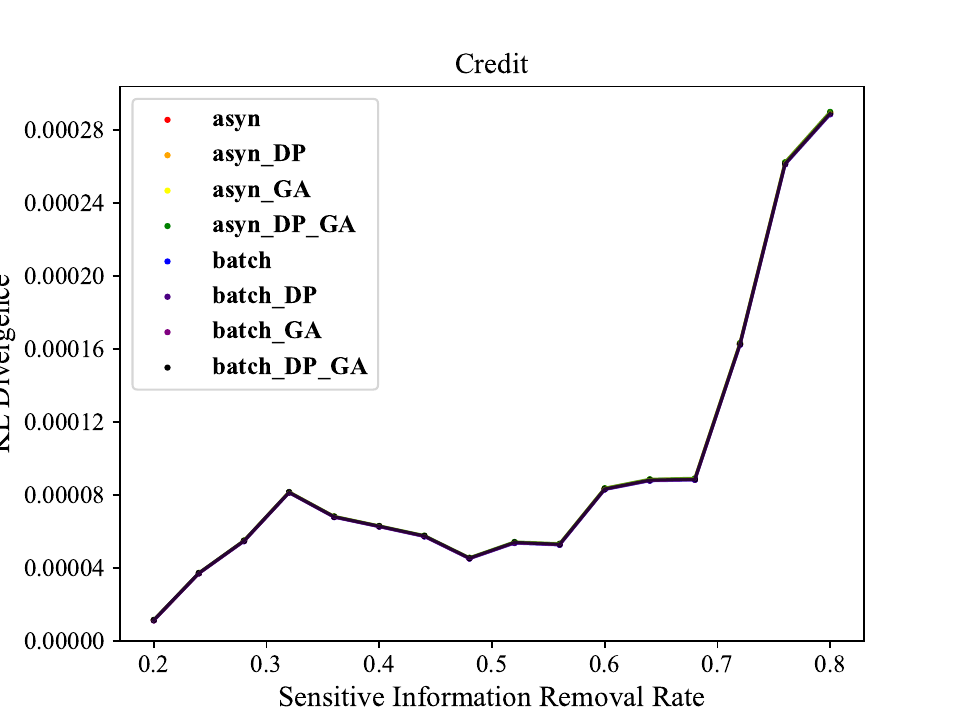}
        % \caption{a2}
    \end{minipage}
    \hfill
    \begin{minipage}[b]{0.23\textwidth}
        \centering
        \includegraphics[width=\textwidth]{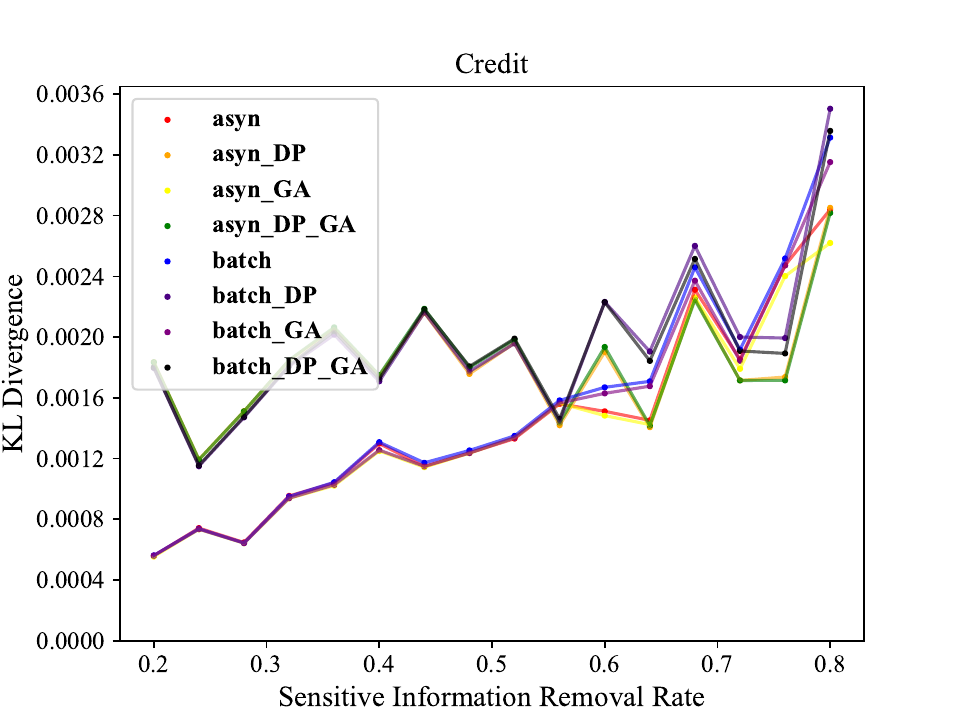}
        % \caption{子图2}
    \end{minipage}
    \hfill
    \begin{minipage}[b]{0.23\textwidth}
        \centering
        \includegraphics[width=\textwidth]{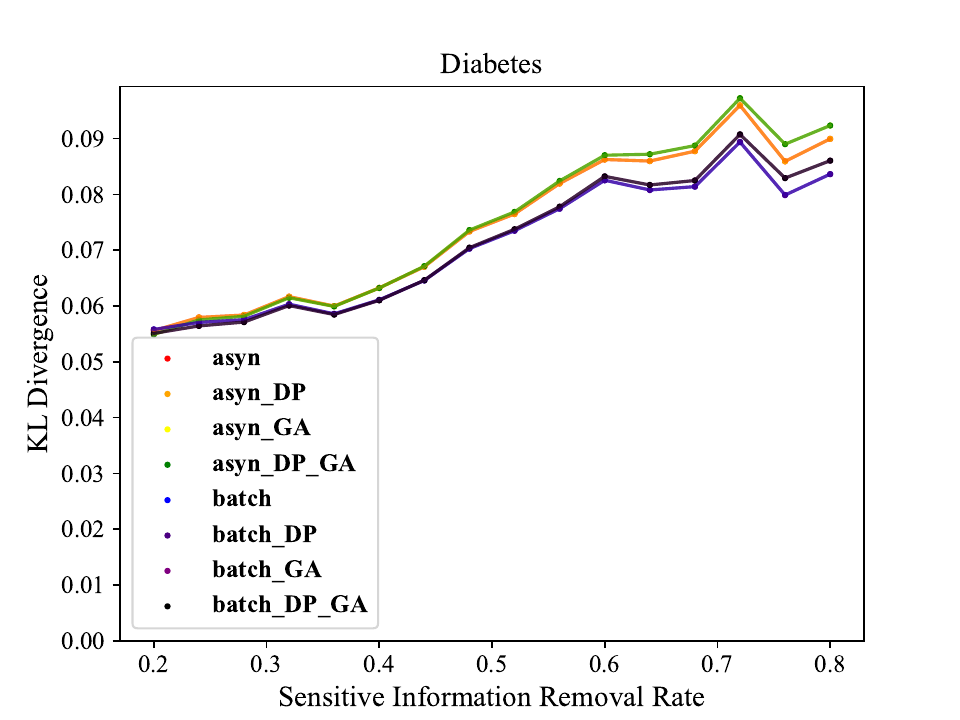}
        % \caption{子图2}
    \end{minipage}
    \hfill
    \begin{minipage}[b]{0.23\textwidth}
        \centering
        \includegraphics[width=\textwidth]{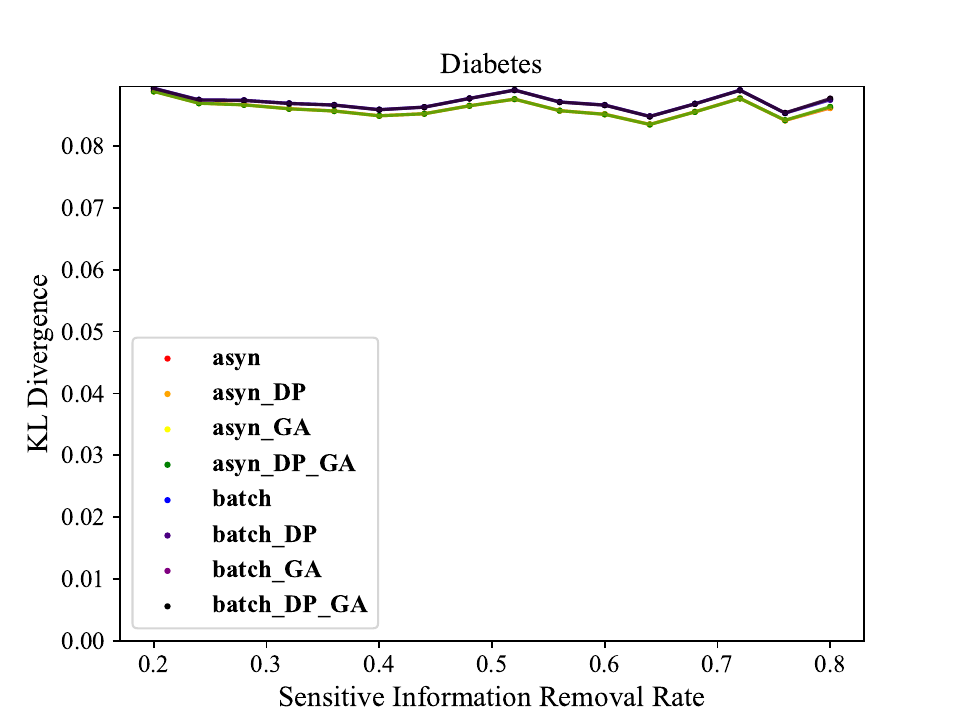}
        % \caption{子图2}
    \end{minipage}

    \vspace{0.5cm}  % 行间距

    % 第四列
    \begin{minipage}[b]{0.23\textwidth}
        \centering
        \includegraphics[width=\textwidth]{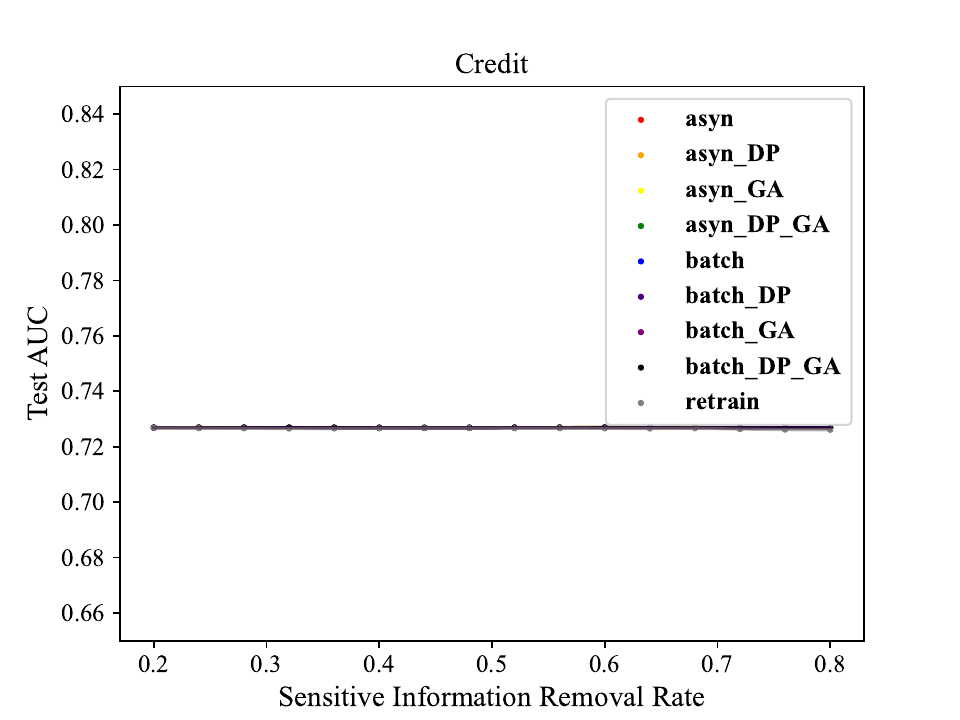}
        % \caption{a4}
    \end{minipage}
    \hfill
    \begin{minipage}[b]{0.23\textwidth}
        \centering
        \includegraphics[width=\textwidth]{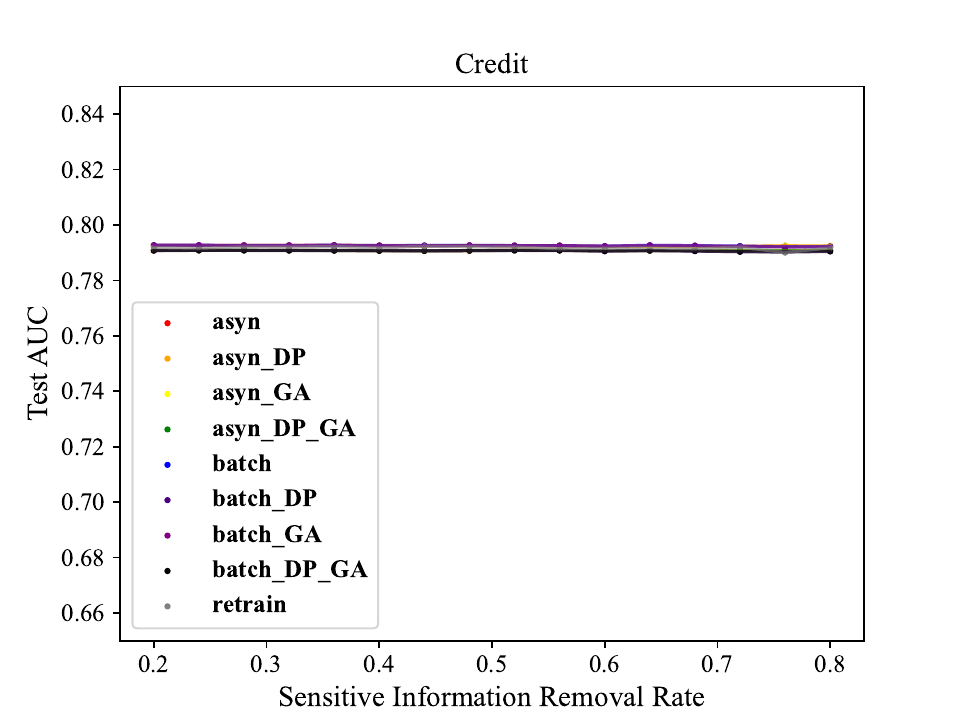}
        % \caption{子图4}
    \end{minipage}
    \hfill
    \begin{minipage}[b]{0.23\textwidth}
        \centering
        \includegraphics[width=\textwidth]{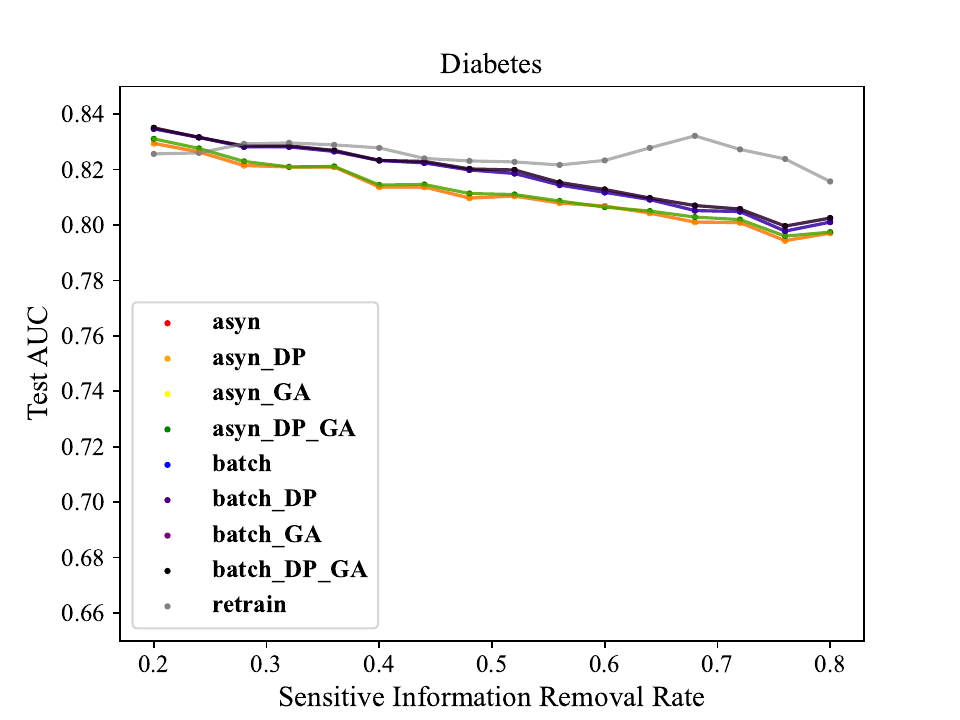}
        % \caption{子图4}
    \end{minipage}
    \hfill
    \begin{minipage}[b]{0.23\textwidth}
        \centering
        \includegraphics[width=\textwidth]{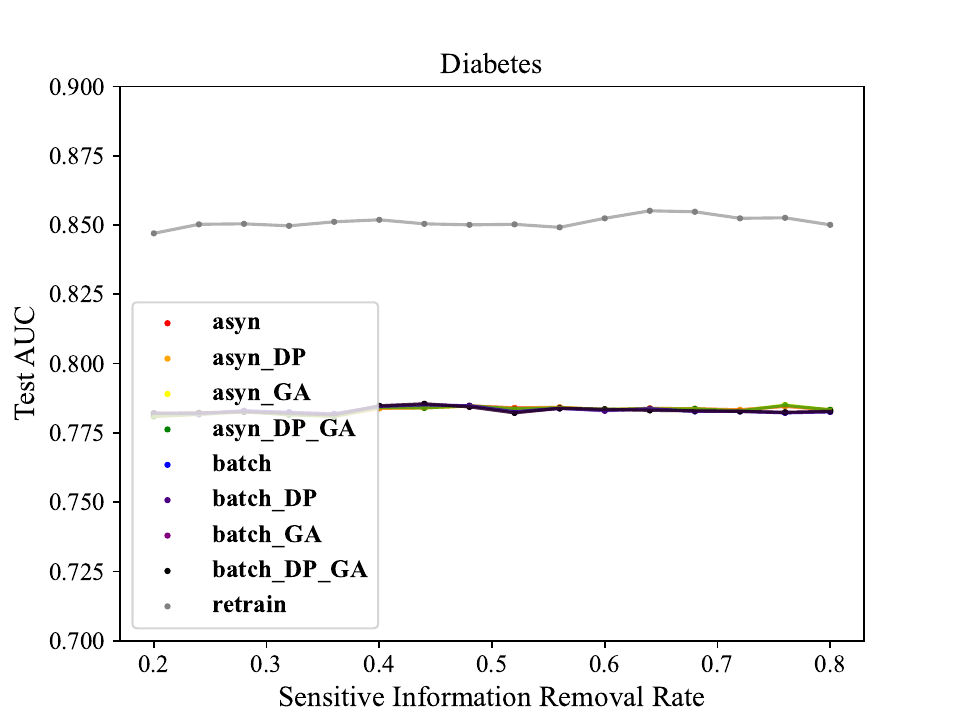}
        % \caption{子图4}
    \end{minipage}

    \vspace{0.5cm}  % 行间距

    % 第五列
    \begin{minipage}[b]{0.23\textwidth}
        \centering
        \includegraphics[width=\textwidth]{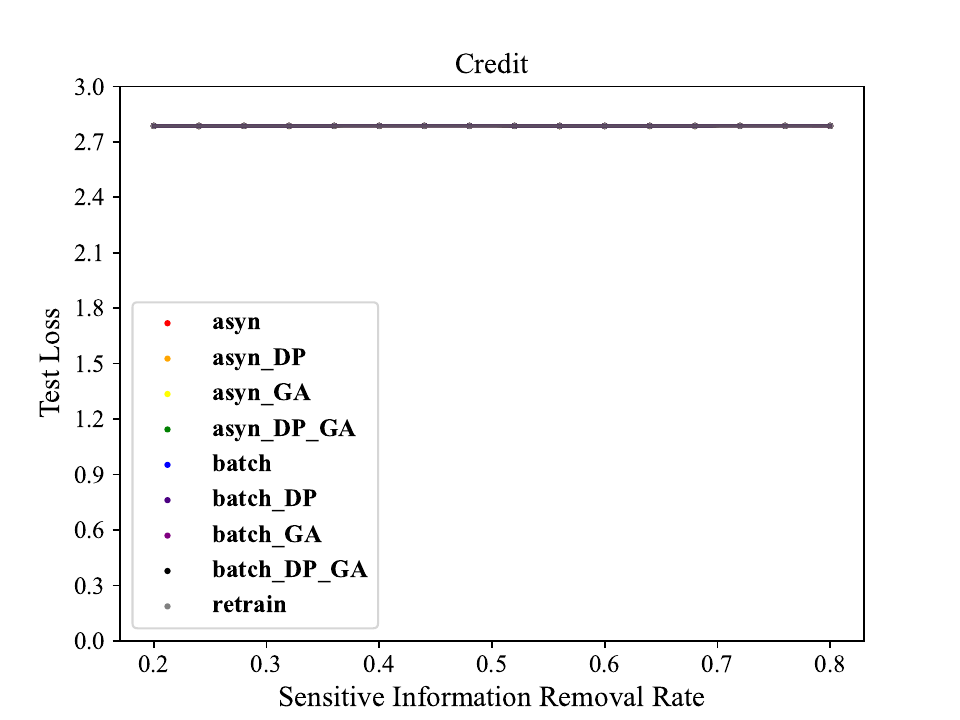}
        % \caption{a5}
    \end{minipage}
    \hfill
    \begin{minipage}[b]{0.23\textwidth}
        \centering
        \includegraphics[width=\textwidth]{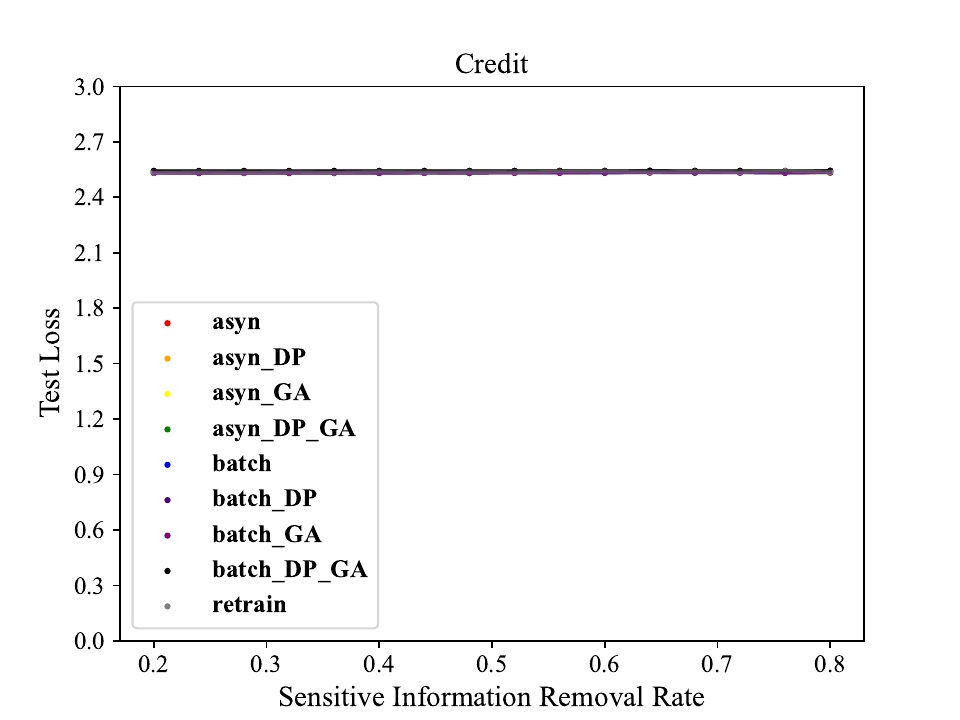}
        % \caption{子图5}
    \end{minipage}
    \hfill
    \begin{minipage}[b]{0.23\textwidth}
        \centering
        \includegraphics[width=\textwidth]{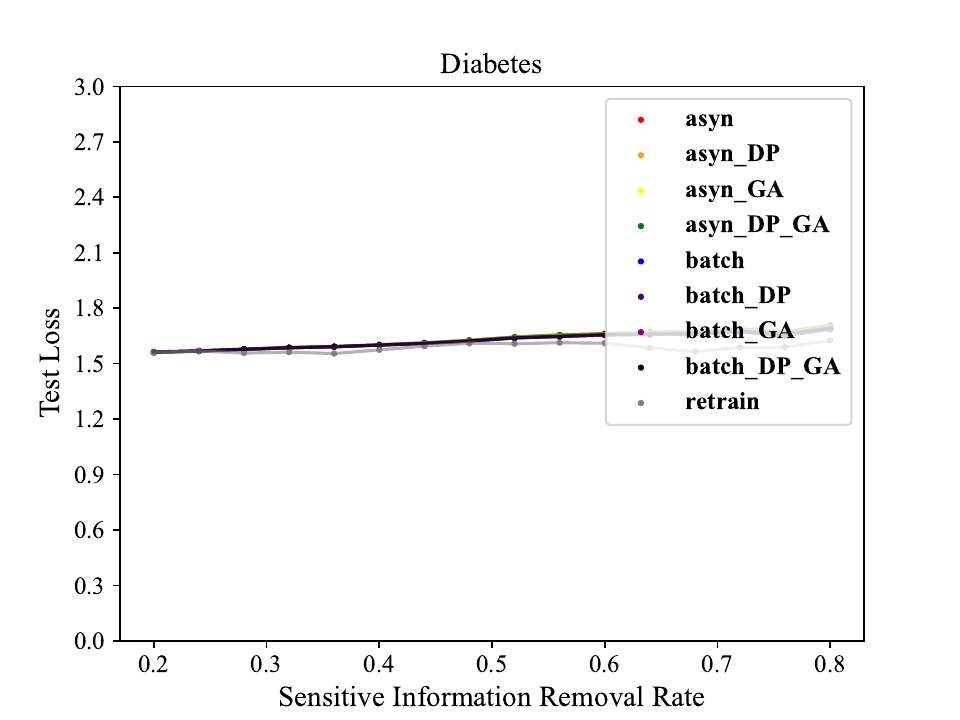}
        % \caption{子图5}
    \end{minipage}
    \hfill
    \begin{minipage}[b]{0.23\textwidth}
        \centering
        \includegraphics[width=\textwidth]{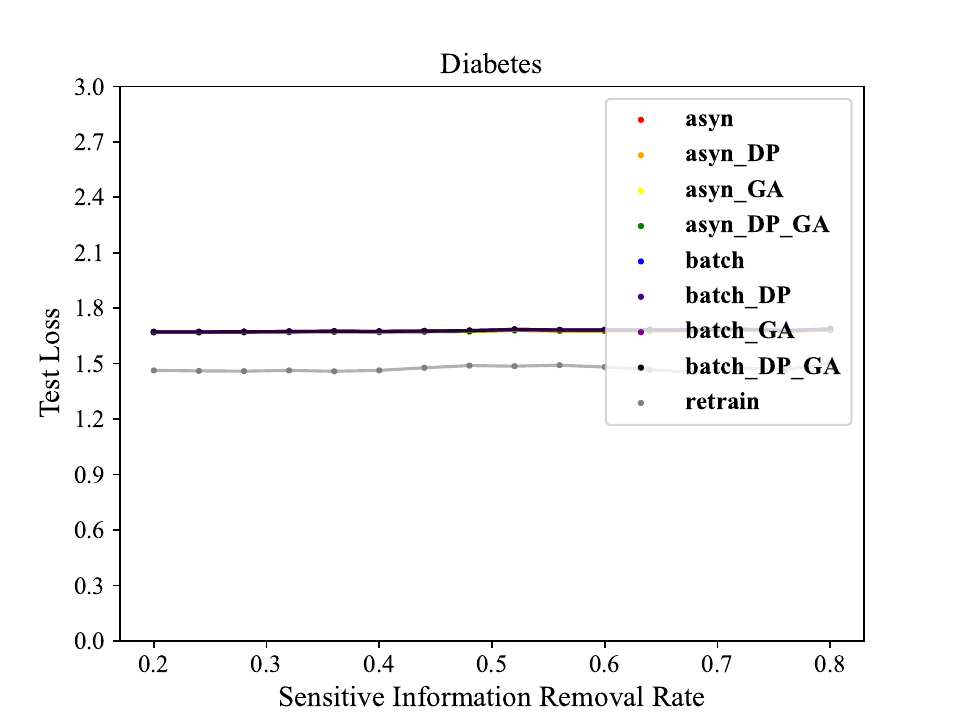}
        % \caption{子图5}
    \end{minipage}
    
    \caption{Sensitive Information Removal Results.}
    \label{info extra}
\end{figure*}

% %-------------------------------------------------------------------------------
\section*{Appedix D}
% %-------------------------------------------------------------------------------
\label{app:proof}

Our method involves multiple rounds of retraining, with the parameters of the pre-trained original model serving as the initial state. To achieve the rigorously defined certified unlearning, additional adjustments are required. It is worth noting that we make two key assumptions: First, the loss function is convex. Second, we include an \( L_2 \) regularization \( \frac{1}{2} \| \model \|_2^2 \). 

In the first round of updates, we simultaneously perform gradient ascent on the original data and gradient descent on the modified data. It can be shown that this update is equivalent to the expression in ~\cref{eqn:update1} from the well-established influence function-based certified forgetting method~\cite{warnecke2023machineunlearningfeatureslabels}. According to Lemma~\ref{thm:thm1} in that paper, the gradient bound of the updated loss function can be derived from \cref{eqn:update1}.

\begin{align*}
	\Delta(\points, \perts) &= -\tau \Big(\nabla\bigloss(\theta^* ; D^\prime) - \nabla\bigloss(\theta^* ; D)\Big)\\
    &= -\tau \Big(\sum_{\pert \in \perts}
	\gradlossPert + \nabla\bigloss(\theta^* ; D^\prime\setminus\perts)\\
    &\quad- \sum_{\point \in \points} \gradlossPoint - \nabla\bigloss(\theta^* ; D\setminus\points)\Big)\\
    &=-\tau \Big(\sum_{\pert \in \perts}
	\gradlossPert - \sum_{\point \in \points} \gradlossPoint\Big)
\end{align*}

\begin{lemma}
	\label{thm:thm1}
	~\cite{warnecke2023machineunlearningfeatureslabels}Assume that $\Vert x_i\Vert_2 \leq 1$ for all data points and the
%\DIFdelbegin \DIFdel{loss }\DIFdelend
%\DIFaddbegin \DIFadd{gradient }\DIFaddend 
gradient
$\nabla\loss(z,\model)$ is $\gamma_z$-Lipschitz with respect to $z$ at
$\optmodel$ and \mbox{$\gamma$-Lipschitz} with respect to
$\model$. Further let $\perts$ change the features $j,\dots,j+F$ by
magnitudes at most $m_j,\dots,m_{j+F}$. If $M=\sum_{j=1}^{F} m_j$ the
following upper bounds hold:
For the following update form
\begin{equation}
	\label{eqn:update1}
	\Delta(\points, \perts) = -\tau \Big(\sum_{\pert \in \perts}
	\gradlossPert - \sum_{\point \in \points} \gradlossPoint\Big)
\end{equation}
 We have%If the unlearning rate $\tau\leq \frac{1}{\gamma n}$, we have %$\optmodel_
	%\unlearn{\points}{\perts}$
	    $$\big\Vert\nabla\bigloss\big(\optmodel_
		{\points\rightarrow\perts}, D^\prime\big)\big\Vert_2 \leq
		(1+\tau\gamma n)\gamma_zM\vert\points\vert$$

\end{lemma}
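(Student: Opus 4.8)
The plan is to recognise the unlearning update $\Delta(\points,\perts)$ as a single gradient-descent step on the \emph{modified} objective $\bigloss(\cdot,D^\prime)$ taken from the trained optimum $\optmodel$, and then to control the resulting gradient residual by combining three ingredients: the first-order optimality of $\optmodel$ on $D$, the smoothness of $\bigloss(\cdot,D^\prime)$ in $\model$, and the feature-Lipschitz property of $\nabla\loss$ in $z$.

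First I would split both objectives through their shared part. Since $D^\prime$ replaces the points $\points$ by their perturbations $\perts$ and leaves $D\setminus\points$ untouched, we have $\bigloss(\model,D^\prime)=\bigloss(\model,D\setminus\points)+\sum_{\pert\in\perts}\loss(\pert,\model)$ and likewise for $\bigloss(\model,D)$. Differentiating and subtracting yields the identity $\nabla\bigloss(\model,D^\prime)=\nabla\bigloss(\model,D)+\sum_{\pert\in\perts}\nabla_\model\loss(\pert,\model)-\sum_{\point\in\points}\nabla_\model\loss(\point,\model)$; note the $L_2$ regulariser is common to both losses and cancels here. Evaluating at $\optmodel$ and invoking the training optimality $\nabla\bigloss(\optmodel,D)=0$ shows $\nabla\bigloss(\optmodel,D^\prime)=G$, where $G:=\sum_{\pert\in\perts}\gradlossPert-\sum_{\point\in\points}\gradlossPoint$. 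Hence the update is exactly $\optmodel_{\unlearn{\points}{\perts}}=\optmodel-\tau G=\optmodel-\tau\nabla\bigloss(\optmodel,D^\prime)$, i.e.\ one gradient step on the new loss.

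Next I would bound the residual after this step. By the triangle inequality, $\Vert\nabla\bigloss(\optmodel_{\unlearn{\points}{\perts}},D^\prime)\Vert_2\le\Vert\nabla\bigloss(\optmodel_{\unlearn{\points}{\perts}},D^\prime)-\nabla\bigloss(\optmodel,D^\prime)\Vert_2+\Vert\nabla\bigloss(\optmodel,D^\prime)\Vert_2$. Because $\bigloss(\cdot,D^\prime)$ is a sum of $n$ per-sample losses each with $\gamma$-Lipschitz gradient, $\nabla\bigloss(\cdot,D^\prime)$ is $\gamma n$-Lipschitz, so the first term is at most $\gamma n\,\Vert\optmodel_{\unlearn{\points}{\perts}}-\optmodel\Vert_2=\tau\gamma n\Vert G\Vert_2$, while the second equals $\Vert G\Vert_2$; thus $\Vert\nabla\bigloss(\optmodel_{\unlearn{\points}{\perts}},D^\prime)\Vert_2\le(1+\tau\gamma n)\Vert G\Vert_2$. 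It then remains to bound $\Vert G\Vert_2$: pairing each original point with its perturbation and using that $\nabla\loss$ is $\gamma_z$-Lipschitz in $z$ at $\optmodel$, every summand obeys $\Vert\gradlossPert-\gradlossPoint\Vert_2\le\gamma_z\Vert\pert-\point\Vert_2$. Since only features $j,\dots,j+F$ change, by magnitudes at most $m_j,\dots,m_{j+F}$, we get $\Vert\pert-\point\Vert_2\le\sqrt{\sum_k m_k^2}\le\sum_k m_k=M$, so $\Vert G\Vert_2\le\gamma_z M\vert\points\vert$, which delivers the claimed bound.

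The main obstacle is conceptual bookkeeping rather than hard analysis: one must verify that it is precisely $\optmodel$ being a stationary point of $\bigloss(\cdot,D)$ (not of $\bigloss(\cdot,D^\prime)$) that collapses $\nabla\bigloss(\optmodel,D^\prime)$ to $G$, and that the two sums defining $G$ run over matched original/perturbed pairs so the per-point feature-Lipschitz estimate applies term by term. A secondary point requiring care is the smoothness constant of the \emph{aggregate} loss: the factor $\gamma n$ (rather than $\gamma$) arises from summing $n$ per-sample gradients, so one should confirm that $\vert D^\prime\vert=n$ under feature perturbation and that the per-sample smoothness $\gamma$ already absorbs each sample's share of the regulariser, making the counting exact.
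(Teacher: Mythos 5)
Your proof is correct and is essentially the canonical argument for this bound: you use stationarity of $\optmodel$ on $D$ to identify $\nabla\bigloss(\optmodel,D^\prime)$ with the paired gradient difference $G$, recognise the update as a single gradient step on $\bigloss(\cdot\,,D^\prime)$, and combine the triangle inequality, the $\gamma n$-smoothness of the aggregate loss, and the per-pair feature-Lipschitz bound $\Vert G\Vert_2\le\gamma_z M\vert\points\vert$. Note that this paper does not actually prove the lemma---it imports it verbatim from Warnecke et al.~\cite{warnecke2023machineunlearningfeatureslabels} and only verifies that its first-round ascent/descent update coincides with the update form of \cref{eqn:update1}---and your reconstruction matches the proof in that cited source, including the two caveats you flag (the matched pairing of $\points$ with $\perts$, and the convention that the constant $\tau\gamma n$ treats the regulariser as absorbed into the per-sample smoothness).
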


The \emph{gradient residual} $\nabla\bigloss(\theta; D^\prime)$ of a model $\model$ with respect to the corrected dataset $D^\prime$ is zero only when $\model = \mathcal{A}(D^\prime)$. For strongly convex loss functions, the magnitude of this gradient residual, $\Vert \nabla\bigloss(\theta; D^\prime) \Vert_2$, reflects the discrepancy between the model $\model$ and the one obtained by retraining on $D^\prime$.

Next, in subsequent updates, we employ an early stopping mechanism to ensure that the training loss continues to decrease on the updated training set. Since the loss function is strongly convex, the gradient of the loss function after the update will also be smaller than that of the first round. 

\begin{align*}
\big\Vert\nabla\bigloss(\theta_{t+1})\big\Vert^2 \leq \big\Vert\nabla\bigloss(\theta_{t})\big\Vert^2
\end{align*}

Finally, we prove that after unlearning, the gradient of the loss function has an upper bound. Based on Lemma~\ref{thm:thm3} from the~\cite{warnecke2023machineunlearningfeatureslabels}, we are able to demonstrate that certified unlearning holds.

When a vector $b$ is added, the gradient residual $r$ for the loss function $\bigloss_b$ becomes:

\[
r = \nabla\bigloss_b(\theta; D^\prime) = \sum_{\point \in D^\prime} \nabla \loss(z, \theta) + \lambda \model + b
\]

By manipulating the distribution of $b$, certified unlearning can be achieved, akin to sensitivity-based techniques~\cite{dp2011}.

\begin{lemma}
	\label{thm:thm3}
	~\cite{warnecke2023machineunlearningfeatureslabels}Let $\mathcal{A}$ be the learning algorithm that returns the
unique minimum of $\bigloss_b(\theta;D^\prime)$ and let
$\mathcal{U}$ be an unlearning method that produces a model
$\model_{\mathcal{U}}$. If
$\Vert \nabla\bigloss(\theta_\mathcal{U};D^\prime)\Vert_2 \leq
\epsilon'$
for some $\epsilon' >0$ we have the following guarantees.
\begin{enumerate}%[wide, labelwidth=!, labelindent=0pt[wide, labelwidth=!, labelindent=0pt]
	\item If $b$ is drawn from a distribution with density
	$p(b)=e^{-\frac{\epsilon}{\epsilon'}\Vert b\Vert_2}$ then
	$\mathcal{U}$ performs $\epsilon$-certified unlearning for
	$\mathcal{A}$.
	
	\item If $p\sim \mathcal{N}(0, c\epsilon'/\epsilon)^d$ for
	some $c>0$ then $\mathcal{U}$ performs $(\epsilon,
	\delta)$-certified unlearning for $\mathcal{A}$ with
	$\delta=1.5e^{-c^2/2}$.
\end{enumerate}
\end{lemma}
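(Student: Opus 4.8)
The final statement is the noise-to-certification guarantee, and the plan is to prove it by the sensitivity-plus-noise paradigm of differential privacy~\cite{dp2011,Certified2020}, with the gradient-residual bound $\epsilon'$ playing the role of an $\ell_2$-sensitivity. The starting observation is that the $L_2$ regularization makes $\bigloss_b(\model;D^\prime)=\bigloss(\model;D^\prime)+b^\top\model$ strongly convex for every fixed $b$, so it has a unique minimizer and the solver $\Phi(c):=\argmin_\model\,[\bigloss(\model;D^\prime)+c^\top\model]$ is a well-defined bijection on $\Model$. Both compared objects are images of this single map: by definition $\mathcal{A}(D^\prime)=\Phi(b)$, and I will show the unlearned model is $\Phi$ applied to a shifted noise.

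The heart of the argument is the identity $\model_\mathcal{U}=\Phi(b-u)$, where $u$ is the gradient residual of the (regularized) objective at $\model_\mathcal{U}$ with $\Vert u\Vert_2\le\epsilon'$. This holds because the first-order optimality condition characterizing $\Phi(b-u)$ is exactly the statement that $\model_\mathcal{U}$ has residual $u$, so a model with small residual is the \emph{exact} minimizer of the same data objective under a linearly shifted noise. Feeding this into the definitions, for any $\mathcal{T}\subset\Model$ I would write $P(\model_\mathcal{U}\in\mathcal{T})=P(b-u\in\Phi^{-1}(\mathcal{T}))$ and $P(\mathcal{A}(D^\prime)\in\mathcal{T})=P(b\in\Phi^{-1}(\mathcal{T}))$; since both probabilities integrate the noise density over the \emph{same} preimage set $\Phi^{-1}(\mathcal{T})$, the Jacobian of $\Phi$ is common to both and cancels, reducing the certification ratio to the pointwise noise-density ratio $p(b)/p(b+u)$ with $\Vert u\Vert_2\le\epsilon'$. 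I expect this reduction to be the main obstacle: one must justify that the unlearned output is genuinely a shift of the retraining map rather than an unrelated point, and that the residual $u$ is uniformly norm-bounded by $\epsilon'$ so that it behaves as a worst-case $\ell_2$-sensitivity in the subsequent density comparison.

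With the ratio reduced to a shift in noise space, part one is immediate. For $p(b)\propto e^{-(\epsilon/\epsilon')\Vert b\Vert_2}$ the log-density-ratio is $-(\epsilon/\epsilon')\big(\Vert b+u\Vert_2-\Vert b\Vert_2\big)$, whose absolute value the reverse triangle inequality bounds by $(\epsilon/\epsilon')\Vert u\Vert_2\le\epsilon$. Integrating this pointwise bound over an arbitrary $\mathcal{T}$ gives $e^{-\epsilon}\le P(\model_\mathcal{U}\in\mathcal{T})/P(\mathcal{A}(D^\prime)\in\mathcal{T})\le e^{\epsilon}$, which is precisely $\epsilon$-certified unlearning in the sense of \cref{def-ecr}.

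For part two, take $p\sim\mathcal{N}(0,\sigma^2 I)$ with scale $\sigma=c\epsilon'/\epsilon$. The privacy-loss variable $\Lambda=\ln\big(p(b)/p(b+u)\big)=\sigma^{-2}b^\top u+\tfrac{1}{2}\sigma^{-2}\Vert u\Vert_2^2$ is Gaussian with mean $\Vert u\Vert_2^2/2\sigma^2$ and variance $\Vert u\Vert_2^2/\sigma^2$; substituting the worst case $\Vert u\Vert_2=\epsilon'$ and $\sigma=c\epsilon'/\epsilon$ gives mean $\epsilon^2/2c^2$ and standard deviation $\epsilon/c$. A standard Gaussian tail bound then yields $P(\Lambda>\epsilon)\le 1.5\,e^{-c^2/2}$, and splitting any event according to whether the privacy loss exceeds $\epsilon$ converts this tail probability into the additive slack of \cref{def-edcr}, establishing $(\epsilon,\delta)$-certified unlearning with $\delta=1.5\,e^{-c^2/2}$. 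The routine remaining work is the explicit Gaussian tail computation producing the constant $1.5$.
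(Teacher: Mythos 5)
A point of framing first: the paper itself never proves this lemma. It is imported verbatim (with citation) from Warnecke et al., who in turn take it from Guo et al.'s certified-data-removal theorem; Appendix~D merely states it and plugs it into the argument for Theorem~\ref{thm:main}. So the only meaningful comparison is with the proof in that cited source, and your reconstruction is essentially that proof: the strongly convex solver $\Phi(c)=\argmin_{\model}\big[\bigloss(\model;D^\prime)+c\T\model\big]$ as a bijection, the identity $\model_{\mathcal{U}}=\Phi(b-u)$, reduction of the certification ratio to a noise-density ratio, then the reverse-triangle-inequality bound for the Laplace-type density and the privacy-loss tail bound for the Gaussian. Three remarks. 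First, your implicit reading of the hypothesis is the correct (and necessary) one: for $\model_{\mathcal{U}}=\Phi(b-u)$ to hold, $u$ must be the residual of the \emph{perturbed} objective, $u=\nabla\bigloss(\model_{\mathcal{U}};D^\prime)+b$, which is exactly the quantity $r=\nabla\bigloss_b(\model;D^\prime)$ the paper displays just before the lemma; under the literal unperturbed reading the claim would be false, since a model with small noiseless residual is pinned within $\epsilon'/\lambda$ of the deterministic minimizer of $\bigloss(\cdot\,;D^\prime)$ and cannot satisfy the two-sided ratio bound against $\Phi(b)$, whose distribution has full support. Second, the obstacle you flag is real, and it is also the one heuristic step in the source proof: $u$ is a function of $b$, so writing both probabilities as integrals of $p(\cdot)$ and $p(\cdot+u)$ over the same preimage set silently treats a data-dependent shift as a worst-case constant shift; the map $b\mapsto b-u(b)$ need not be injective or volume-preserving, so the ``Jacobian cancellation'' is not automatic for an arbitrary $\mathcal{U}$. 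Since Guo et al.\ make the same move, this is not a gap relative to the literature, but it is worth stating as an assumption rather than a consequence. Third, the one computation you defer---the explicit Gaussian tail bound---is precisely where the constant $\delta=1.5e^{-c^{2}/2}$ comes from; it does not follow from the generic bound $P(Z>t)\le e^{-t^{2}/2}$ applied to your privacy-loss variable, so it is the only substantive piece of the argument still owed, though it is standard.
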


%%%%%%%%%%%%%%%%%%%%%%%%%%%%%%%%%%%%%%%%%%%%%%%%%%%%%%%%%%%%%%%%%%%%%%%%%%%%%%%%
\end{document}